\documentclass{article} 
\pdfoutput = 1
\usepackage{algorithm} 
\usepackage{algpseudocode}
\usepackage{amssymb}
\usepackage{amsthm}
\usepackage{graphicx}
\usepackage{amsfonts}
\usepackage{appendix}
\usepackage{mathrsfs}
\usepackage{amsmath}
\usepackage[numbers]{natbib}
\usepackage{comment}
\usepackage{subfigure}
\usepackage{enumerate}

\newtheorem{theorem}{\textbf{Theorem}}
\newtheorem{cor}[theorem]{\textbf{Corollary}}
\newtheorem{lem}[theorem]{\textbf{Lemma}}

\newtheorem{definition}{\textbf{Definition}}

\newcommand{\ml}{\mathscr{L}}
\newcommand{\tml}{\tilde{\mathscr{L}}}
\newcommand{\mpp}{P}
\newcommand{\mb}{B}

\newcommand{\mr}{R}

\newcommand{\md}{\mathscr{D}}
\newcommand{\tmd}{\tilde{\mathscr{D}}}
\newcommand{\tp}{\tilde P}

\newcommand{\modinv}{\mone}
\newcommand{\vodinv}{\mtwo}

\newcommand{\hatd}{\hat d}
\newcommand{\hatdt}[1]{\hat{d}_{#1,\tau}}
\newcommand{\sd}{d}
\newcommand{\tsd}{\tilde{d}}

\newcommand{\sdt}[1]{d_{#1,\tau}}
\newcommand{\mind}{d_{min,n}}

\newcommand{\taun}{\tau_{min}}

\newcommand{\cldeg}[1]{\gamma_{#1,n}}
\newcommand{\cldegt}[1]{\gamma_{#1}}

\newcommand{\cent}{\textsf{cent}}

\newcommand{\popv}{\mathscr{V}}

\newcommand{\vsym}{\mathscr{V}}
\newcommand{\tvsym}{\tilde{\mathscr{V}}}

\newcommand{\usym}{V}

\newcommand{\sdgen}{d}

\newcommand{\one}{\mathbf{1}}

\newcommand{\egs}{\lambda}
\newcommand{\beg}{B_{eig}}
\newcommand{\tmbb}{\tilde{B}}
\newcommand{\tmb}{\tilde{B}_{eig}}

\newcommand{\kw}{K_w}
\newcounter{savecntr}
\newcounter{restorecntr}
\newcommand{\maxd}{d_{max,n}}

\newcommand{\nn}[1]{n_{#1}}
\newcommand{\mone}{m_{1,n}}
\newcommand{\tmone}{\tilde{m}_{1,n}}
\newcommand{\mtwo}{m_{2,n}}

\newcommand{\tnn}[1]{ \tilde{n}_{#1} }

\newcommand{\objec}{\textsf{Obj}}
\newcommand{\hatt}{\hat{\mathcal{T}}}
\newcommand{\ttt}{\mathcal{T}}

\newcommand{\tepst}{\tilde{\epsilon}_{\tau,n}}
\newcommand{\epst}{{\epsilon}_{\tau,n}}

\newcommand{\ddelta}{\delta}
\newcommand{\ddeltn}{\delta_{\tau,n}}
\newcommand{\ddeln}{\delta_{n}}
\newcommand{\tddeln}{\tilde{\delta}_{n}}
\newcommand{\pkn}{p_n^s}
\newcommand{\nk}{n^s}

\newcommand{\XX}{X}
\newcommand{\MM}{M}
\newcommand{\dsn}{d^s_n}
\newcommand{\dwn}{d^w_n}
\title{Impact of regularization on Spectral Clustering}

\author{Antony Joseph\setcounter{savecntr}{\value{footnote}}\thanks{Department of Genome Dynamics, Lawrence Berkeley National Laboratory, and Department of Statistics, University of California, Berkeley. email: AntonyJoseph@lbl.gov} and Bin Yu\setcounter{restorecntr}{\value{footnote}}\thanks{Department of Statistics and EECS,  University of California, Berkeley. email: binyu@stat.berkeley.edu}
}


\begin{document}

\maketitle

\begin{abstract}
The performance of spectral clustering can be considerably improved via regularization, as demonstrated empirically in \citet{chen2012fitting}. 
 Here, we provide an attempt at quantifying this improvement through theoretical analysis.  Under the stochastic block model (SBM), and its extensions, previous results on spectral clustering  relied on the minimum degree of the graph being sufficiently large for its good performance. By examining the scenario where the regularization parameter $\tau$ is large we show that the minimum degree assumption can potentially be removed. As a special case, for an SBM with two blocks, the results require the maximum degree to be large (grow faster than $\log n$) as opposed to the minimum degree. 
  More importantly,  we show the usefulness of regularization in situations where not all nodes belong to well-defined clusters. Our results rely on a `bias-variance'-like trade-off that arises from understanding the concentration of the sample Laplacian and the eigen gap as a function of the regularization parameter. 
 As a byproduct of our bounds, we propose a data-driven technique \textit{DKest} (standing for estimated
Davis-Kahan bounds) for choosing the regularization parameter. This technique is shown to work well through simulations and on a real data set.
\end{abstract}

\section{Introduction}

The problem of identifying communities (or clusters) in large networks is an important contemporary problem in statistics.  
  Spectral clustering is one of the more popular techniques for such a purpose, chiefly due to its computational advantage and generality of application. The algorithm's  generality arises from the fact that it is not tied to any modeling assumptions on the data, but is rooted in intuitive measures of community structure such  as \textit{sparsest cut} based measures  \cite{hagen1992new}, \cite{shi2000normalized}, \cite{kwok2013improved}, \cite{ng2002spectral}.  Other examples of applications of spectral clustering include manifold learning  \cite{belkin2003laplacian}, image segmentation \cite{shi2000normalized}, and text mining \cite{dhillon2001co}.
  
 
The canonical nature of spectral clustering also generates interest in variants of the technique. Here, we attempt to better understand the impact of regularized forms of spectral clustering for community detection in networks. In particular, we focus on the regularized spectral clustering (RSC) procedure proposed in \citet{chen2012fitting}. Their empirical findings  demonstrates that the performance of the RSC algorithm, in terms of obtaining the correct clusters, is significantly better for certain values of the regularization parameter. An alternative form of regularization was studied in  \citet{chaudhuri2012spectral} and \citet{qin2013regularized}. 
 

This paper provides an attempt to provide a theoretical understanding for the regularization in the RSC algorithm. We also propose a practical scheme for choosing the regularization parameter based on our theoretical results. Our analysis  focuses on the Stochastic Block Model (SBM) and an extension of this model. 	Below are the three main contributions of the paper.
 
 

\begin{enumerate}[(a)]
\item We attempt to understand regularization for the stochastic block model. In particular, for a graph with $n$ nodes, previous theoretical analyses for spectral clustering, under the SBM and its extensions, \cite{rohe2011spectral},\cite{chaudhuri2012spectral}, \cite{sussman2012consistent}, \cite{fishkind2013consistent} assumed that the minimum degree of the graph scales at least by a polynomial power of $\log n$. Even when this assumption is satisfied, the dependence on the minimum degree is highly restrictive when it comes to making inferences about cluster recovery. Our analysis provides cluster recovery results that potentially do not depend on the above mentioned constraint on the minimum degree. As an example, for an SBM with two blocks (clusters), our results require that the 
 maximum degree be large (grow faster than $\log n$) rather than the minimum degree. This is done in Section \ref{sec:examples}.
\item  We demonstrate that regularization has the potential of addressing a situation where the lower degree nodes do not belong to well-defined clusters.
Our results demonstrate that choosing a large regularization parameter has the effect of removing these relatively lower degree nodes. Without regularization, these nodes would hamper with the clustering of the remaining nodes in the following way:  In order for spectral clustering to work, the top eigenvectors - that is, the eigenvectors corresponding to the largest eigenvalues of the Laplacian -   need to be able to discriminate between the clusters. Due to the effect of   nodes that do not belong to well-defined clusters these top  eigenvectors do not necessarily discriminate between the  clusters with ordinary spectral clustering. This is done in Section \ref{sec:selhigh} 
\item  Although our theoretical results deal with the `large' $\tau$ case, it is observed empirically that moderate values of $\tau$ may produce  better clustering performance. Consequently,  in Section \ref{sec:simresults} we propose $DKest$, a data dependent procedure for choosing the regularization parameter.   We demonstrate that this works well through simulations and  on a real data set. This is in Section \ref{sec:simresults}.
\end{enumerate}

Our theoretical results involve understanding the trade-offs between the \textit{eigen gap} and the concentration of the sample Laplacian when viewed as a function of the regularization parameter.  Assuming that there are $K$ clusters, the eigen gap refers to the gap between the  $K$-th smallest eigenvalue and the remaining eigenvalues.  An adequate  gap  ensures that the sample eigenvectors can be estimated well (\cite{von2007tutorial}, \cite{ng2002spectral}, \cite{kwok2013improved}) which leads to good cluster recovery. The adequacy of an eigen gap for cluster recovery is in turn determined by the concentration of the sample Laplacian. 

In particular, a consequence of the Davis-Kahan theorem \cite{bhatia1997matrix} is that if the spectral norm of the difference of the sample and population Laplacians is small compared to the eigen gap then the top $K$ eigenvector can be estimated well.   
  Denoting $\tau$ as the regularization parameter, previous theoretical analyses of regularization (\cite{chaudhuri2012spectral}, \cite{rohe2011spectral}) provided high-probability bounds on this spectral norm. These bounds have a $1/\sqrt{\tau}$ dependence on $\tau$, for large $\tau$. In contrast, our high probability bounds behave like  $1/\tau$, for large $\tau$. We also demonstrate that the eigen gap behaves like $1/\tau$ for large $\tau$. The end result is that we show that one can get a good understanding of the impact of regularization by understanding the situation where $\tau$ goes to infinity. This also explains  empirical observations in \cite{chen2012fitting}, \cite{qin2013regularized} where it was seen that performance of regularized spectral clustering does not change for  $\tau$ beyond a certain value.  Our  procedure for choosing the regularization parameter works by providing estimates of the Davis-Kahan bounds over a grid of values of $\tau$ and then choosing the $\tau$ that minimizes these estimates.




The paper is divided as follows. In the next subsection we discuss preliminaries. In particular,  in Subsection \ref{subsec:regulspec}  we review the RSC algorithm of \cite{chen2012fitting}, and also discuss the other forms of regularization in literature. In Section \ref{sec:stochasticblk} we review the stochastic block model.  Our theoretical results, described in (a) and (b) above, are provided in Sections \ref{sec:examples} and \ref{sec:selhigh}.  Section \ref{sec:simresults} describes our $DKest$ data dependent method for choosing the regularization parameter.


%
%
%
%
%
%

\subsection{Regularized spectral clustering} \label{subsec:regulspec}

In this section we review the regularized spectral clustering (RSC) algorithm  of \citet{chen2012fitting}.

We first introduce some basic notation. A graph with $n$ nodes and edge set $E$ is represented by the $n \times n$ symmetric adjacency matrix $A = ((A_{ij}))$, where $A_{ij} = 1$ if there is an edge between $i$ and $j$, otherwise $A_{ij}$ is 0. In other words, for $1 \leq i,\, j \leq n$, 
$$A_{ij} = \begin{cases} 1, & \mbox{if } (i,\,j) \in E \\ 0, & \mbox{otherwise }  \end{cases}.$$

Given such a graph, the typical community detection problem is synonymous with finding a partition of the nodes. A good partitioning would be one in which there are  fewer edges between the various components of the partition, compared to the number of edges within the components. Various measures for goodness of a partition have been proposed, chiefly the Ratio Cut \cite{hagen1992new} and Normalized  Cut \cite{shi2000normalized} . However, minimization of the above measures is an NP-hard  problem since it involves searching over all partitions of the nodes. The significance of spectral clustering partly arises from the fact that it provides a continuous approximation to the above discrete optimization problem \cite{hagen1992new}, \cite{shi2000normalized}. 


We now describe the RSC algorithm  \cite{chen2012fitting}. Denote by $D= diag(\hatd_1, \ldots, \hatd_n)$ the diagonal matrix of degrees, where $\hatd_i = \sum_{j = 1}^n A_{ij}$. The normalized (unregularized) symmetric graph Laplacian is defined as
$$L = D^{-1/2} A D^{-1/2}.$$

Regularization is introduced in the following way:  Let $J$ be a constant matrix with all entries equal to $1/n$. Then, in regularized spectral clustering one constructs a new adjacency matrix by adding $\tau J$ to the adjacency matrix $A$ and computing the corresponding Laplacian.  In particular, let
$$A_{\tau} = A + \tau J,$$
where $\tau > 0$ is the regularization parameter. The corresponding regularized symmetric Laplacian is defined as
\begin{equation}
L_\tau = D_{\tau}^{-1/2} A_{\tau}D_{\tau}^{-1/2}. \label{eq:regrsc}
\end{equation}

 Here, $D_{\tau} = diag(\hatdt{1},\, \ldots, \hatdt{n})$ is the diagonal matrix of `degrees' of the modified adjacency matrix $A_{\tau}$. In other words, $\hatdt{i} = \hatd_i + \tau$.

 The RSC algorithm for finding $K$ communities is described in  Algorithm \ref{fig:psc}. In order to bring to the forefront the dependence on $\tau$, we also denote the RSC algorithm as RSC-$\tau$.  The algorithm first computes $\usym_\tau$,  the $n\times K$ eigenvector matrix corresponding to the $K$ largest  eigenvalues of $L_\tau$. The columns of $\usym_\tau$ are taken to be orthogonal. 
 The rows of $\usym_\tau$, denoted by $\usym_{i,\tau}$, for $i =1,\ldots,n$, corresponds to the nodes in the graph. Clustering the rows of $\usym_\tau$, for example using the $K$-means algorithm,  provides a clustering of the nodes. 
  We remark that the RSC-0 Algorithm corresponds to the usual spectral clustering algorithm. 
  
 
 \begin{algorithm}
 \begin{algorithmic}
 \item \textbf{Input :} Laplacian matrix $L_\tau$.
\State \textbf{Step 1:} Compute the $n\times K$ eigenvector matrix $\usym_\tau$.
\State \textbf{Step 2:}  Use  the $K$-means algorithm to cluster the rows of $\usym_\tau$  into  $K$ clusters. 
 \end{algorithmic}
 \caption{The RSC-$\tau$ Algorithm \cite{chen2012fitting}}
 \label{fig:psc}
\end{algorithm}

Our theoretical results assume that the data is randomly generated from a stochastic block model (SBM), which we review in the next subsection. While it is well known that there are real data examples where the SBM fails to provide a good approximation, we believe that the above provides a good playground for understanding the role of regularization in the RSC algorithm. Recent works \cite{chen2012fitting}, \cite{fishkind2013consistent}, \cite{rohe2011spectral},   \cite{bickel2009nonparametric},  \cite{karrer2011stochastic} have used this model, and its variants, to provide a theoretical analyses for various community detection algorithms.

In   \citet{chaudhuri2012spectral},  the following alternative regularized version of the symmetric Laplacian is proposed:
\begin{equation}
L_{deg, \tau} = D_\tau^{-1/2} A D_\tau ^{-1/2}. \label{eq:chaudhari}
\end{equation}
 Here, the subscript $deg$ stands for `degree' since the usual Laplacian is modified by adding $\tau$ to the degree matrix $D$.
Notice that for the RSC algorithm the matrix $A$ in the above expression was replaced by $A_\tau$. 

As mentioned before, we attempt to understand regularization in the framework of the SBM 
and its extension. We review the SBM in the next section. Using recent results on the concentration of random graph Laplacians \cite{oliveira2009concentration},  we were able to show concentration results in Theorem \ref{thm:someassump} for the regularized Laplacian in the RSC algorithm.  Previous concentration results for the Laplacian \eqref{eq:chaudhari}, as in \cite{chaudhuri2012spectral}, provide high probability bounds on the spectral norm of the difference of the sample and population regularized Laplacians that  depends inversely on $1/\sqrt{\tau}$. 
However, for the regularization \eqref{eq:regrsc} we show that the dependence is inverse in $\tau$, for large $\tau$. We believe that this holds for the regularization \eqref{eq:chaudhari} as well. We also demonstrate that the eigen gap depends inversely on $\tau$, for large $\tau$. The benefit of this, along with our improved concentration bounds, is that one can understand regularization by looking at the case where $\tau$ is large. This results in a very neat criterion for the cluster recovery with the RSC-$\tau$ algorithm.


\section{The Stochastic Block Model} \label{sec:stochasticblk}

Given a set of $n$ nodes, the stochastic block model (SBM), introduced in \cite{holland1983stochastic}, is one among many   random graph models that has  communities inherent in its definition. 
We denote the number of communities in the SBM by $K$. Throughout this paper we assume that $K$ is known. The communities, which represent a partition of the $n$ nodes, are assumed to be fixed beforehand. Denote these by $C_1,\, \ldots,\, C_K$.  Let $\nn{k}$, for $k=1,\ldots,K$, denote the number of nodes belonging to each of the clusters. 

Given the communities, the edges between  nodes,  say $i$ and $j$, are chosen independently with probability depending the communities $i$ and $j$ belong to.  In particular, for a node $i$ belonging to cluster $C_{k_1}$, and node $j$ belonging to cluster $C_{k_2}$, the probability of edge between $i$ and $j$  is given by
\begin{equation*}
\mpp_{ij} = \mb_{k_1, k_2}. 
\end{equation*}
Here, the \textit{block probability matrix} 
$$\mb = ((\mb_{k_1, k_2})), \quad \text{where $k_1,\, k_2 = 1,\ldots, K$}$$ is a symmetric full rank matrix, with each entry between  $[0,1]$. The $n\times n$ edge probability  matrix $\mpp = ((P_{ij}))$, given by \eqref{eq:estimatedprob}, represents the population counterpart of the adjacency matrix $A$. 

Denote $Z = ((Z_{ik}))$ as the $n\times K$ binary matrix providing the cluster  memberships of each node. In other words, each row of $Z$ has exactly one 1, with $Z_{ik} = 1$ if node $i$ belongs to $C_k$. Notice that, 
\begin{equation}
\mpp = Z\mb Z'.\label{eq:estimatedprob}
\end{equation}
Here $Z'$ denotes the transpose of $Z$. Consequently, from \eqref{eq:estimatedprob}, it is seen that the rank of $\mpp$ is also $K$.



The population counterpart for the degree matrix $D$ is denoted by $\md = diag(\sd_1 ,\ldots, \sd_n)$, where
 $\md = diag(\mpp \mathbf{1})$. Here $\mathbf{1}$ denotes the column vector of all ones.
    Similarly, the population version of the symmetric Laplacian $L_\tau$ is denoted by $\ml_\tau$, where 
$$  \ml_\tau = \md_\tau^{-1/2} \mpp_\tau \md_\tau^{-1/2}.$$
Here $\md_\tau = \md + \tau I$ and $\mpp_\tau = \mpp + \tau J.$ The $n \times n$ matrices $\md_\tau$ and $\mpp_\tau$ represent the population counterparts to $D_\tau$ and $A_\tau$ respectively. Notice that since $\mpp$ has rank $K$, the same holds for $\ml_\tau$. 

 \subsection{Notation}

We use $\|.\|$ to denote the spectral norm of a matrix. Notice that for vectors this corresponds to the usual $\ell_2$-norm. We use $A'$ to denote the transpose of a matrix, or vector, $A$.

For positive $a_n, \, b_n$, we use the notation $a_n \asymp b_n$ if there exists universal constants $c_1,\, c_2> 0$ so that $c_1 a_n \leq b_n \leq c_2 a_n$. Further, we use $b_n \lesssim a_n$ if $b_n \leq c_2 a_n$, for some positive $c_2$ not depending on $n$.  The notation $b_n \gtrsim a_n$ is analogously defined.
 
The quantities
  $$\mind = \min_{i= 1,
 \ldots,n}\sd_i, \quad\quad\quad \maxd = \max_{i= 1,
 \ldots,n}\sd_i$$
  denote the minimum and maximum expected degrees of the nodes.

\subsection{The Population Cluster Centers} \label{subsec:popclus}

We now proceed to define population cluster centers $\cent_{k,\tau} \in \mathbb{R}^K$, for $k = 1,\ldots, K$, for the $K$ block SBM. These points are defined so that  the rows of the eigenvector matrix $V_{i,\tau}$, for $i\in C_k$, are expected to be scattered around  $\cent_{k,\tau}$.  

Denote  by $\vsym_\tau$ an $n\times K$ matrix containing the  eigenvectors of  the $K$ largest eigenvalues of the population  Laplacian $\ml_\tau$.  
As with $\usym_\tau$, the columns of $\vsym_\tau$ are also assumed to be orthogonal. 

Notice that both $\vsym_\tau$ and $-\vsym_\tau$ are eigenvector matrices corresponding to $\ml_\tau$.  This ambiguity in the definition of $\vsym_\tau$ is further complicated if an eigenvalue of $\ml_\tau$ has multiplicity greater than one.  We do away with this ambiguity in the following way: Let $\mathcal{H}$ denote the set of all $n\times K$ eigenvector matrices of $\ml_\tau$ corresponding to the top $K$ eigenvalues.  We take,
\begin{equation}
  \popv_\tau = \arg\min_{H \in \mathcal{H}}\|V_\tau - H \|, \quad  \label{eq:contv}
\end{equation}
where recall that $\|.\|$ denotes the spectral norm. 
The matrix $\popv_\tau$, as defined above, represents the population counterpart of the matrix $\usym_\tau$. 

 Let $\vsym_{i,\tau}$ denote the $i$-th row of $\vsym_\tau$. Notice that since the set $\mathcal{H}$ is closed under the $\|.\|$ norm, one has that $\popv_\tau$ is also an eigenvector matrix of $\ml_\tau$ corresponding to the top $K$ eigenvalues. Consequently, the rows $\vsym_{i,\tau}$ are the same  across nodes belonging to a particular cluster (See, for example, \citet{rohe2011spectral} for a proof of this fact). In other words, there are $K$ distinct rows of $\vsym_{i,\tau}$, with each row corresponding to nodes from one of the $K$ clusters. 
 
 Notice that the matrix $\vsym_{i,\tau}$ depends on the sample eigenvector matrix $V_\tau$ through \eqref{eq:contv}, and consequently is a random quantity.  However, the following lemma shows that the pairwise distances between the rows of $\vsym_{i,\tau}$ are non-random and, more importantly,  independent of $\tau$.

\begin{lem} \label{lem:rescue} Let $i \in C_k$ and $i' \in C_{k'}$. Then,
\begin{equation*}
\|\vsym_{i,\tau}  - \vsym_{i',\tau} \| = \begin{cases} 0, & \mbox{if }  k = k' \\ \sqrt{\frac{1}{\nn{k}} + \frac{1}{\nn{k'}}}, & \mbox{if } k \neq k' \end{cases}
\end{equation*}
\end{lem}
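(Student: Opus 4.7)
The plan is to show that, regardless of the ambiguity in choosing $\popv_\tau$, any orthonormal eigenvector matrix for the top $K$ eigenvalues of $\ml_\tau$ must have the block form $Z N^{-1/2} O$ for some $K \times K$ orthogonal $O$, where $N = Z'Z = \mathrm{diag}(n_1,\ldots,n_K)$. Once this is established, the pairwise row distances are immediate and $\tau$-independent.

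First I would exploit the fact that the population degrees $\sd_i$ are constant on each cluster. Because $\sd_i = \sum_{k'} n_{k'} B_{k(i),k'}$ depends only on the cluster label $k(i)$, the diagonal matrix $\md_\tau^{-1/2}$ satisfies $\md_\tau^{-1/2} Z = Z \tilde D$ for the $K \times K$ diagonal matrix $\tilde D_{kk} = (\sd_k + \tau)^{-1/2}$, where $\sd_k$ is the common value of $\sd_i$ on $C_k$. Combining this with $\mpp_\tau = Z\mb Z' + \tau J = Z(\mb + (\tau/n)\mathbf{1}_K\mathbf{1}_K')Z'$, I obtain the factorization
\begin{equation*}
\ml_\tau = \md_\tau^{-1/2}\mpp_\tau\md_\tau^{-1/2} = Z\,M\,Z', \qquad M := \tilde D(\mb + (\tau/n)\mathbf{1}_K\mathbf{1}_K')\tilde D,
\end{equation*}
where $M$ is symmetric $K \times K$ and full-rank (since $\mb$ is full rank and the added rank-one term is PSD). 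Hence the column space of $\ml_\tau$ equals the column space of $Z$, which has dimension $K$.

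Next, since any orthonormal eigenvector matrix $H \in \mathcal{H}$ for the top $K$ eigenvalues spans the column space of $\ml_\tau$, I can write $H = Z X$ for some $K \times K$ matrix $X$. The orthonormality constraint $H'H = I_K$ translates to $X' N X = I_K$, so setting $Y = N^{1/2} X$ forces $Y$ to be an orthogonal matrix $O$; equivalently $X = N^{-1/2} O$. In particular, the $\arg\min$ defining $\popv_\tau$ in \eqref{eq:contv} just selects some specific orthogonal $O$, but every element of $\mathcal{H}$ is of the form $Z N^{-1/2} O$. Consequently, the $i$-th row is $\vsym_{i,\tau} = n_{k(i)}^{-1/2} O_{k(i),:}$, where $O_{k(i),:}$ is the $k(i)$-th row of $O$.

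With this explicit form, the row-difference computation is routine: for $i \in C_k, i' \in C_{k'}$ with $k = k'$ the rows coincide, giving distance $0$; for $k \neq k'$,
\begin{equation*}
\|\vsym_{i,\tau} - \vsym_{i',\tau}\|^2 = \tfrac{1}{n_k}\|O_{k,:}\|^2 + \tfrac{1}{n_{k'}}\|O_{k',:}\|^2 - \tfrac{2}{\sqrt{n_k n_{k'}}}\,O_{k,:}O_{k',:}' = \tfrac{1}{n_k} + \tfrac{1}{n_{k'}},
\end{equation*}
since the rows of an orthogonal matrix are orthonormal. The main (mild) obstacle to watch for is ensuring that the argument is valid under the non-uniqueness of $\popv_\tau$: the key observation is that the conclusion holds for \emph{every} $H \in \mathcal{H}$, so the particular minimizer chosen in \eqref{eq:contv} is irrelevant and the distances are both non-random and independent of $\tau$, exactly as stated.
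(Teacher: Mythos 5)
Your proof is correct and follows essentially the same route as the paper: both arguments reduce to showing that, because the population degrees are constant within clusters, every orthonormal top-$K$ eigenvector matrix of $\ml_\tau$ has the form $Z(Z'Z)^{-1/2}O$ for an orthogonal $O$, after which the row distances are immediate. The only difference is presentational — the paper imports this structural fact from \citet{rohe2011spectral} via the reduced matrix $(Z'\mr Z)^{1/2}\mb_\tau(Z'\mr Z)^{1/2}$, whereas you derive it directly from the factorization $\ml_\tau = ZMZ'$ (and your unproved assertion that $M$ is full rank is the same implicit assumption the paper makes when it states that $\ml_\tau$ has rank $K$).
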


From the above lemma, there are  $K$ distinct rows of $\vsym_\tau$ corresponding to the $K$ clusters.   We denote these  as  $\cent_{1,\tau},\ldots, \cent_{K,\tau}$. We also call these the population cluster centers since, intuitively, in an idealized scenario the data points $V_{i,\tau}$, with $i \in C_k$, should be concentrated around $\cent_{k,\tau}$.

 


%

\subsection{Cluster recovery using $K$-means algorithm} \label{subsec:relate}


Recall that the RSC-$\tau$ Algorithm \ref{fig:psc} works by performing $K$-means clustering on the rows of the $n\times K$ sample eigenvector matrix, denoted by $V_{i,\tau}$, for $i  =1,\ldots,n$.  In this section, in particular Corollary \ref{cor:simcor}, we relate the fraction of mis-clustered nodes using the $K$-means algorithm to the various parameters in the SBM.

In general, the $K$-means algorithm can be described as follows: Assume one wants to find $K$ clusters, for a given set of data points $x_i \in \mathbb{R}^K$, for $i = 1,\ldots, K$. Then the $K$-clusters  resulting from applying the $K$-means algorithm corresponds to a partition $\hatt = \{\hat{T}_1,\ldots,\hat{T}_K\}$ of   $\{1,\ldots,n\}$ that aims to minimize the following objective function over all such partitions:
\begin{equation}
\objec(\ttt) =\sum_{k=1}^K\sum_{i \in T_k} \|x_i - \bar{x}_{T_k}\|^2, \label{eq:objfun}
\end{equation}
Here $\mathcal{T} = \{T_1,\ldots,T_K\}$ is a partition  $\{1,\ldots,n\}$, and $\bar{x}_{T_k}$ corresponds to the vector of component-wise means of the $x_i$, for $i\in T_k$. 

In our situation there is also an underlying true partition of nodes into clusters, given by  $\mathcal{C} = \{C_1,\ldots,C_K\}$.  Notice that $\mathcal{C} = \hatt$ iff there is a permutation $\pi$ of $\{1,\ldots,K\}$ so that $C_{k} = \hat T_{\pi(k)}$, for $k = 1,\ldots,K$. 
In general, we use  the following measure to quantify the closeness of the outputted partition $\hatt$ and the true partition $\mathcal{C}$: Denote the \textit{clustering error} associated with $\hat{T}_1,\ldots, \hat{T}_K$ as 

\begin{equation}
\hat f = \min_\pi \max_k \frac{|C_k \cap \hat T_{\pi(k)}^c| + |C_k^c \cap \hat T_{\pi(k)}|}{\nn{k}} . \label{eq:clusterror}
\end{equation}
 
The clustering error measures the maximum  proportion of nodes in the symmetric difference of $C_k$ and $\hat T_{\pi(k)}$.

 In many situations, such as ours, there exists population quantities associated with each cluster around which the $x_i$'s are expected to concentrate. Denote these quantities by $m_1,\ldots,\, m_K$. In our case, $m_k = \cent_{k,\tau}$.  If the $x_i$'s, for $i \in C_k$, concentrate well around $m_k$, and the $m_k$'s are sufficiently well separated, then it is expected the $K$-means algorithm recovers the clusters with small error 	$\hat f$.  

Denote $\XX$ as the $n\times K$ matrix with $x_i$'s as rows. In our case, the $x_i = V_{i,\tau}$, and $X = V_\tau$. Further, denote as $\MM$ the $n\times K$ matrix with the $m_k$'s as rows. In our case, $\MM = \vsym_\tau$. Recent results on cluster recovery using the $K$-means algorithm, as given in \citet{kumar2010clustering} and \citet{awasthi2012improved}, provide  conditions on $\XX$ and $\MM$ for the success of $K$-means. The following lemma is implied from Theorem 3.1 in \citet{awasthi2012improved}.

\begin{lem} \label{lem:kumar} Let $\ddelta >0$ be a small quantity. If for each $1 \leq k \neq k' \leq K$, one has
\begin{equation}
\|m_k - m_{k'}\| \geq \left(\frac{1}{\ddelta}\right) \sqrt{K} \| \XX - \MM\|\left( {\frac{1}{\sqrt{\nn{k}}} + \frac{1}{\sqrt{\nn{k'}}}} \right)\label{eq:centersep}
\end{equation}
then the clustering error $\hat f = O\left(\ddelta^2\right)$ using the $K$-means algorithm. 
\end{lem}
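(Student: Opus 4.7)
The plan is to derive the lemma directly from Theorem 3.1 of \citet{awasthi2012improved} by an appropriate identification of parameters, followed by a translation of their notion of misclassification count into the per-cluster symmetric-difference proportion $\hat f$ defined in \eqref{eq:clusterror}. I would identify their data matrix with $X$, their center matrix with $M$, their cluster sizes with $n_k$, and take their ``perturbation'' quantity to be $\|X - M\|$ in spectral norm. Their \emph{proximity condition} requires that every pair of centers $m_k, m_{k'}$ be separated by at least $c\sqrt{K}\,\|X-M\|(1/\sqrt{n_k}+1/\sqrt{n_{k'}})$ for a sufficiently large constant $c$; setting $c = 1/\delta$ reproduces exactly the separation hypothesis \eqref{eq:centersep}.

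Under this proximity condition, Theorem 3.1 of \citet{awasthi2012improved} guarantees that the $K$-means partition agrees with the true partition on all but a small fraction of points, with the fraction scaling like $1/c^2 = \delta^2$. Since the objective \eqref{eq:objfun} is invariant under permutations of the $K$ cluster labels, the output partition $\hat{\mathcal{T}}$ can only be related to $\mathcal{C}$ up to a permutation $\pi$ of $\{1,\dots,K\}$; in \eqref{eq:clusterror} we are free to choose the $\pi$ that minimizes the per-cluster symmetric-difference proportion, so this ambiguity is absorbed. I would then bookkeep the errors as follows: each point whose true cluster is $C_k$ but which is assigned to $\hat T_{\pi(k')}$ with $k' \neq k$ contributes one unit to both $|C_k \cap \hat T_{\pi(k)}^c|$ and $|C_{k'}^c \cap \hat T_{\pi(k')}|$, so summing the Awasthi--Sheffet per-cluster error bounds yields $\max_k (|C_k \cap \hat T_{\pi(k)}^c| + |C_k^c \cap \hat T_{\pi(k)}|)/n_k = O(\delta^2)$, which is the desired bound on $\hat f$.

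The main obstacle is to faithfully recast the conclusion of Awasthi--Sheffet into the particular form of \eqref{eq:clusterror}: their result is often phrased for a specific approximation algorithm and in terms of a total misclassification count (or per-cluster count divided by $n_k$), whereas \eqref{eq:clusterror} takes a maximum over clusters of a symmetric-difference proportion. One must also track that their $\sigma_k$ (a singular-value-like quantity for the perturbation) can be replaced by $\|X-M\|$ and that the factor $\sqrt{K}$ in \eqref{eq:centersep} matches the $K$ factor appearing in their proximity condition once their normalization conventions are accounted for. The quadratic improvement $\delta \mapsto \delta^2$ comes from the standard fact that the number of ``confusable'' points---those closer to a wrong center than the right one after projection---scales as $\|X-M\|^2/\mathrm{sep}^2$, so inflating the separation by $1/\delta$ shrinks the misclassification rate by $\delta^2$; I would verify this quadratic dependence explicitly rather than relying on it implicitly from their statement.
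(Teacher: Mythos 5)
Your proposal is correct and matches the paper's approach: the paper offers no written proof beyond the remark that the lemma ``is implied from Theorem 3.1 in \citet{awasthi2012improved},'' and your identification of parameters (centers, cluster sizes, spectral-norm perturbation, $c = 1/\ddelta$) together with the translation of their misclassification guarantee into the symmetric-difference error \eqref{eq:clusterror} is precisely the implication the authors have in mind. Your explicit bookkeeping of the permutation ambiguity and the $1/c^2$ scaling is a more careful writeup of the same argument, not a different route.
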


  \textbf{Remark :} In general minimizing the objective function \eqref{eq:objfun} is not computationally feasible. However, the results in \cite{kumar2010clustering}, \cite{awasthi2012improved} can be extended to partitions $\hatt$ that approximately minimize \eqref{eq:objfun}.  The condition \eqref{eq:centersep}, called the \textit{center separation} condition in \cite{awasthi2012improved}, provides lower bounds on the pairwise distances between the population cluster centers that depend on the perturbation of data points around the population centers (represented by $\|X - M\|$) and the cluster sizes.

\vspace{.35cm}

Let $$1 = \mu_{1,\tau} \geq \ldots \geq \mu_{n,\tau} $$ 
be the eigenvalues of the regularized population Laplacian $ \ml_\tau$ arranged in decreasing order. 
The fact that $\mu_{1,\tau}$ is 1 follows from standard results on the spectrum of Laplacian matrices (see, for example, \cite{von2007tutorial}). As mentioned in the introduction, in order to control the perturbation of the first $K$ eigenvectors 
the  eigen gap, given by $\mu_{K,\tau} - \mu_{K+1,\tau}$, must be adequately large, as noted in \cite{von2007tutorial}, \cite{ng2002spectral}, \cite{kwok2013improved}. Since $\ml_\tau$  has rank $K$ one has $\mu_{K+1,\tau} = 0$. Thus the eigen gap is simply $\mu_{K,\tau}$. 
 For our $K$-block SBM framework the following is an immediate consequence of Lemma \ref{lem:kumar} and the Davis-Kahan theorem for the perturbation of eigenvectors.
\begin{cor} \label{cor:simcor} Let $\tau \geq 0$ be fixed. For the  RSC-$\tau$ algorithm the clustering error, given by \eqref{eq:clusterror}, is
$$O\left( \frac{K \| L_\tau - \ml_\tau\|^2 }{\mu_{K,\tau}^2} \right)$$
\end{cor}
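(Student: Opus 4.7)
The plan is to combine the $K$-means guarantee in Lemma \ref{lem:kumar}, applied with $X = V_\tau$ and $M = \popv_\tau$, with a Davis-Kahan bound on $\|V_\tau - \popv_\tau\|$ in terms of the spectral perturbation $\|L_\tau - \ml_\tau\|$ and the eigen gap $\mu_{K,\tau}$, and then to use the exact expressions for the pairwise distances between population centers given in Lemma \ref{lem:rescue}. The target bound $O(K \|L_\tau - \ml_\tau\|^2 / \mu_{K,\tau}^2)$ is exactly the square of the natural $\delta$ one reads off after these substitutions.

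First I would invoke the Davis-Kahan $\sin\Theta$ theorem for the top $K$ eigenspaces of $L_\tau$ and $\ml_\tau$. Since $\ml_\tau$ has rank $K$ its $(K+1)$-st eigenvalue is $0$, so the relevant spectral gap is $\mu_{K,\tau}$, and Davis-Kahan supplies an orthogonal matrix $O$ such that $\|V_\tau - \popv_\tau O\| \lesssim \|L_\tau - \ml_\tau\|/\mu_{K,\tau}$. Since $\popv_\tau$ was defined in \eqref{eq:contv} as the minimizer of $\|V_\tau - H\|$ over all eigenvector matrices $H$ corresponding to the top $K$ eigenvalues of $\ml_\tau$, the minimizing property gives $\|V_\tau - \popv_\tau\| \leq \|V_\tau - \popv_\tau O\|$, so
\[
\|V_\tau - \popv_\tau\| \;\lesssim\; \frac{\|L_\tau - \ml_\tau\|}{\mu_{K,\tau}}.
\]

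Next I would apply Lemma \ref{lem:kumar} with $x_i = V_{i,\tau}$ and $m_k = \cent_{k,\tau}$, so that $\XX - \MM = V_\tau - \popv_\tau$. By Lemma \ref{lem:rescue}, for $k \neq k'$ the center separation is $\|\cent_{k,\tau} - \cent_{k',\tau}\| = \sqrt{1/\nn{k} + 1/\nn{k'}}$, while the right-hand side of \eqref{eq:centersep} contains the factor $1/\sqrt{\nn{k}} + 1/\sqrt{\nn{k'}}$. The elementary inequality $1/\sqrt{\nn{k}} + 1/\sqrt{\nn{k'}} \leq \sqrt{2}\,\sqrt{1/\nn{k} + 1/\nn{k'}}$ lets me cancel the cluster-size factors on both sides of \eqref{eq:centersep}, reducing the center separation condition to
\[
1 \;\geq\; \frac{\sqrt{2K}}{\ddelta}\,\|V_\tau - \popv_\tau\|.
\]
Choosing $\ddelta \asymp \sqrt{K}\,\|V_\tau - \popv_\tau\|$ makes the hypothesis hold, and Lemma \ref{lem:kumar} then yields clustering error $O(\ddelta^2) = O(K\|V_\tau - \popv_\tau\|^2)$. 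Substituting the Davis-Kahan bound gives the desired $O(K\|L_\tau - \ml_\tau\|^2/\mu_{K,\tau}^2)$.

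The one step that requires care, rather than being routine, is ensuring the Davis-Kahan bound is applied to the specific minimizing representative $\popv_\tau$ from \eqref{eq:contv} rather than to an arbitrary orthogonal rotation; the definition of $\popv_\tau$ was engineered precisely so that this is automatic, so the main ``obstacle'' is really a bookkeeping one. Everything else is algebraic manipulation of the center-separation inequality using the explicit distances supplied by Lemma \ref{lem:rescue}.
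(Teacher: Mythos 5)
Your proposal is correct and follows essentially the same route as the paper's own proof: Lemma \ref{lem:kumar} with $X = V_\tau$, $M = \popv_\tau$, the exact center distances from Lemma \ref{lem:rescue}, and the Davis--Kahan bound $\|V_\tau - \popv_\tau\| \lesssim \|L_\tau - \ml_\tau\|/\mu_{K,\tau}$, followed by the choice $\ddelta \asymp \sqrt{K}\,\|L_\tau - \ml_\tau\|/\mu_{K,\tau}$. If anything you are slightly more careful than the paper, which invokes the comparison between $1/\sqrt{\nn{k}} + 1/\sqrt{\nn{k'}}$ and $\sqrt{1/\nn{k} + 1/\nn{k'}}$ in the less useful direction (a harmless constant-factor slip), whereas your $\sqrt{2}$ bound is the one actually needed.
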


\begin{proof}  Use  Lemma \ref{lem:kumar} with $m_k =\cent_{k,\tau}$, $\XX = V_\tau
$, $\MM = \vsym_\tau$, and notice that from Lemma \ref{lem:rescue} that $\|m_k - m_{k'}\|$ is  $\sqrt{1/\nn{k} + 1/\nn{k'}}$. 

Consequently, using $1/\sqrt{\nn{k}} + 1/\sqrt{\nn{k'}} \geq \sqrt{1/\nn{k} + 1/\nn{k'}}$ one gets  from \eqref{eq:centersep} that if
\begin{equation}
 \|V_\tau - \vsym_\tau\| \leq \frac{\ddelta}{\sqrt{K}}, \label{eq:fkmeg}
\end{equation}
for some $\delta >0$, then at most $O(\ddelta^2)$ fraction of nodes are misclassified with the RSC-$\tau$ algorithm.

From the Davis-Kahan theorem \cite{bhatia1997matrix}, one has
\begin{equation}
\|V_\tau - \vsym_\tau\| \lesssim\frac{\|L_\tau - \ml_\tau\|}{\mu_{K,\tau}} \label{eq:dkimp}
\end{equation}
Consequently, if we take $\ddelta = (\sqrt{K}\|L_\tau - \ml_\tau\|)/\mu_{K,\tau}$ then relation \eqref{eq:fkmeg} is satisfied using  \eqref{eq:dkimp}.  This proves the corollary.
\end{proof}

 
\section{Improvements through regularization} \label{sec:examples}

In this section we will use Corollary \ref{cor:simcor} to quantify improvements in clustering performance via regularization. 
If the number of clusters $K$ is fixed (does not grow with $n$) then the quantity
 \begin{equation}
 \frac{\|L_\tau - \ml_\tau\|}{\mu_{K,\tau}}, \label{eq:daviskahanbound}
 \end{equation}
  in Corollary \ref{cor:simcor} provides an insight into the role of the regularization parameter $\tau$. Clearly, an ideal choice of $\tau$ would be the one that minimizes \eqref{eq:daviskahanbound}. Note, however, that this is not practically possible since $\ml_\tau,\, \mu_{K,\tau}$ are not known in advance.

 
  Increasing $\tau$ will ensure that the Laplacian $L_\tau$ will be well concentrated around $\ml_\tau$. This is demonstrated in Theorem \ref{thm:someassump} below. 
However, increasing $\tau$ also has the effect of decreasing the eigen gap, which in this case is $\mu_{K,\tau}$, since the population Laplacian becomes more like a constant matrix upon increasing $\tau$.  Thus the optimum $\tau$  results from the balancing out of these two competing effects. 
 
Independent of our work, a similar argument for the optimum choice of regularization, using the Davis-Kahan theorem, was given in \citet{qin2013regularized} for the regulariztion proposed in  \cite{chaudhuri2012spectral}. However, they didn't provide a quantification of the benefit of regularization as given in this section and Section \ref{sec:selhigh}.




Theorem \ref{thm:someassump} provides high-probability bounds on the quantity $\| L_\tau - \ml_\tau\|$ appearing in the numerator of \eqref{eq:daviskahanbound}.  Previous analysis of the regularization \eqref{eq:chaudhari}, in \cite{chaudhuri2012spectral}, \cite{qin2013regularized}, show high-probability bounds on the aforementioned spectral norm that have a  $1/\sqrt{\mind + \tau}$ dependence on $\tau$. However, for large $\tau$,  the theorem below shows that the behavior is $\sqrt{\maxd}/(\maxd + \tau)$. We believe this holds for the regularization \eqref{eq:chaudhari} as well. Thus, our  bounds has a $1/\tau$ dependence on $\tau$, for large $\tau$, as opposed to the $1/\sqrt{\tau}$ dependence shown in \cite{chaudhuri2012spectral}. This is crucial since the eigen gap $\mu_{K,\tau}$ also behaves like $1/\tau$ for large $\tau$ which implies that  \eqref{eq:daviskahanbound} converges to a quantity as $\tau$ tends to infinity. In Theorem \ref{thm:kblockgenthm} we provide a bound on this quantity. Our claims regarding improvements via regularization will then follow from comparing this bound with the bound on \eqref{eq:daviskahanbound} at $\tau = 0$.  


\begin{theorem} \label{thm:someassump}
 With probability at least $1 - 2/n$,  for all $\tau$ satisfying
\begin{equation}
\max\{\tau, \mind\} \geq 32\log n, \label{eq:tausat}
\end{equation}
we have
 \begin{equation}
 \| L_\tau - \ml_\tau\| \leq \epst. \label{eq:pbdd}
 \end{equation}
  Here 
$$\epst = \begin{cases} \frac{10\sqrt{\log n}}{\sqrt{\mind +\tau}}, & \mbox{if } \tau \leq 2\maxd \\
&\\
 \frac{10\sqrt{\maxd\,\log n}}{\maxd + \tau/2}, & \mbox{if } \tau > 2\maxd \end{cases}$$
\end{theorem}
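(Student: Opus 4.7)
The strategy is to treat the two branches of the piecewise definition of $\epst$ separately, since they arise from two different ways of balancing the randomness in the adjacency matrix against the normalization by regularized degrees. In both cases, the hypothesis $\max\{\tau, \mind\} \geq 32\log n$ guarantees that the minimum regularized expected degree $\mind + \tau$ exceeds a constant multiple of $\log n$, which is precisely what makes the underlying Bernstein-type tail bounds kick in at probability $1 - 2/n$.

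For the first regime $\tau \leq 2\maxd$, I would directly invoke a concentration inequality for normalized Laplacians of random graphs in the style of Oliveira \cite{oliveira2009concentration}. The key observation is that $L_\tau$ is the normalized Laplacian of the modified graph with adjacency $A_\tau = A + \tau J$ and expected adjacency $\mpp_\tau$: the centered random part $A_\tau - \mpp_\tau$ equals $A - \mpp$ (the deterministic $\tau J$ cancels), and the minimum expected regularized degree is $\mind + \tau$. Oliveira's result then delivers $\|L_\tau - \ml_\tau\| \lesssim \sqrt{\log n/(\mind + \tau)}$ with probability at least $1 - 2/n$, matching the first branch of $\epst$ after tracking the explicit constant.

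For the second regime $\tau > 2\maxd$, where the regularization dominates every expected degree, a sharper bound comes from a direct perturbation decomposition. I would write
$$L_\tau - \ml_\tau = \md_\tau^{-1/2}(A - \mpp)\md_\tau^{-1/2} + \Delta,$$
with $\Delta$ collecting all terms coming from $D_\tau \neq \md_\tau$. Because $\tau > 2\maxd$ forces $d_{i,\tau} \geq \maxd + \tau/2$ for every $i$, the first term is bounded by $\|A - \mpp\|/(\maxd + \tau/2)$, and matrix Bernstein (e.g.\ Tropp's inequality) gives $\|A - \mpp\| \lesssim \sqrt{\maxd\log n}$ on the same high-probability event, producing the desired $\sqrt{\maxd\log n}/(\maxd + \tau/2)$ scaling. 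The correction $\Delta$ is handled by expanding $D_\tau^{-1/2} - \md_\tau^{-1/2}$ through the identity $x^{-1/2} - y^{-1/2} = (y-x)/[\sqrt{xy}(\sqrt{x}+\sqrt{y})]$, applying scalar Bernstein to $|\hat d_i - d_i|$ with a union bound over $i$, and using that the rank-one contribution $\tau J$ to $A_\tau$, although of operator norm $\Theta(\tau)$, is sandwiched between two $\md_\tau^{-1/2}$ factors of size $\Theta(\tau^{-1/2})$, so its contribution cancels between $L_\tau$ and $\ml_\tau$ up to lower order.

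The main technical obstacle is the second regime: one must verify that the dominant rank-one $\tau J$ term truly cancels in $\Delta$, since otherwise the normalization $1/(\maxd + \tau/2)$ would be replaced by $1/\sqrt{\maxd + \tau/2}$ and the improvement over the first branch would be lost. This is exactly the gain over prior analyses of the variant Laplacian \eqref{eq:chaudhari}, and it is what drives the $1/\tau$ (rather than $1/\sqrt{\tau}$) behavior of \eqref{eq:daviskahanbound} in the large-$\tau$ limit, on which Sections~\ref{sec:examples}--\ref{sec:selhigh} rely. Pinning down the explicit constant $10$ then reduces to careful bookkeeping in Oliveira's inequality for the first branch and in matrix Bernstein plus the degree union bound for the second, with the tail budget $2/n$ split across the two events.
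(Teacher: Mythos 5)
Your overall architecture lines up with the paper's: the paper also splits $\|L_\tau-\ml_\tau\|$ into a degree-perturbation term and a centered-randomness term, writing $\tilde L=\md_\tau^{-1/2}A_\tau\md_\tau^{-1/2}$ so that $\|L_\tau-\ml_\tau\|\le\|L_\tau-\tilde L\|+\|\tilde L-\ml_\tau\|$, controlling $\|L_\tau-\tilde L\|\le\|I-F\|\left(2+\|I-F\|\right)$ with $F=D_\tau^{1/2}\md_\tau^{-1/2}$ via scalar Bernstein bounds on the degrees plus a union bound, and controlling $\tilde L-\ml_\tau=\sum_{i\le j}\md_\tau^{-1/2}X_{ij}\md_\tau^{-1/2}$ (where the $X_{ij}$ are the centered adjacency entries; the $\tau J$ parts cancel identically since $A_\tau-\mpp_\tau=A-\mpp$) by a matrix concentration inequality in the style of \cite{oliveira2009concentration}. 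One structural difference: the paper does not run two separate arguments for the two regimes. Both branches of $\epst$ come out of a single application of the matrix inequality, the two cases arising from two bounds on the matrix variance $\sigma^2=\|\sum_{i\le j}E(Y_{ij}^2)\|$, namely $\sigma^2\le 1/(\mind+\tau)$ in general and $\sigma^2\le\maxd/(\maxd+\tau/2)^2$ when $\tau>2\maxd$. Also, your worry about whether the rank-one $\tau J$ term "truly cancels" is a non-issue: it cancels exactly in the centered term, and in the degree-correction term the paper simply absorbs all of $A_\tau$ through $\|L_\tau\|\le1$ rather than expanding $D_\tau^{-1/2}-\md_\tau^{-1/2}$ against $A_\tau$.

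The genuine weak point is your second regime. You pull the normalization out, bounding $\|\md_\tau^{-1/2}(A-\mpp)\md_\tau^{-1/2}\|\le\|A-\mpp\|/(\maxd+\tau/2)$, and then assert that matrix Bernstein gives $\|A-\mpp\|\lesssim\sqrt{\maxd\log n}$. Matrix Bernstein actually gives $\|A-\mpp\|\lesssim\sqrt{\maxd\log n}+\log n$, and the additive $\log n$ term dominates whenever $\maxd\lesssim\log n$ --- a case not excluded by the hypothesis $\max\{\tau,\mind\}\ge 32\log n$ in the regime $\tau>2\maxd$, where only $\tau$ need be large. As written, this step therefore does not deliver the stated branch $10\sqrt{\maxd\log n}/(\maxd+\tau/2)$ without the additional assumption $\maxd\gtrsim\log n$. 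The paper sidesteps this by keeping the $\md_\tau^{-1/2}$ factors inside the concentration step: the summands $Y_{ij}$ then have norm at most $1/(\mind+\tau)\le 1/(32\log n)$ and variance proxy $\maxd/(\maxd+\tau/2)^2$, and the inequality it invokes (Corollary 4.2 of \cite{mackey2012matrix}) is applied with the tail $n\exp(-t^2/2\sigma^2)$ expressed directly in that variance. To repair your route, either add the hypothesis $\maxd\gtrsim\log n$ (which does hold in every application in Sections \ref{sec:examples} and \ref{sec:selhigh}), or keep the normalization inside the matrix concentration argument as the paper does.
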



We use Theorem \ref{thm:someassump}, along with Corollary \ref{cor:simcor}, to demonstrate  improvements from regularization over previous analyses of eigenvector perturbation. Our strategy for  this is a follows: Take
$$\ddeltn = \frac{\epst}{\mu_{K,\tau}}$$
 Notice that from Corollary \ref{cor:simcor} and Theorem \ref{thm:someassump}, one gets that with probability at least $ 1- 2/n$, for all $\tau$ satisfying \eqref{eq:tausat}, the clustering error is $O(\ddeltn^2)$.
Consequently, it is of interest to study the quantity $\ddeltn$ as a function of $\tau$.  Define,
\begin{equation}
\ddeln = \lim_{\tau \rightarrow \infty} \ddeltn \label{eq:asymfrac}.
\end{equation}
 
Although we would have ideally liked to study the quantity, 
$$\tddeln = \min_{\max\{\tau,\, \mind\} \gtrsim \log n} \ddeltn$$
 we study $\ddeln$ since it is easy to characterize as we shall see in Theorem \ref{thm:kblockgenthm} below. Section \ref{sec:simresults} introduces a data-driven methodology that is based on finding an approximation for $\tddeln$. 


Before introducing our main theorem quantifying the performance of RSC-$\tau$ for large $\tau$ we introduce the follow definition.  

\begin{definition} Let $\{\tau_n,\, n \geq 1\}$ be a sequence of the regularization parameters. For the $K$-block SBM we say that RSC-$\tau_n$ gives consistent cluster estimates if the error \eqref{eq:clusterror} goes 0, with probability tending to 1, as $n$ goes to infinity.
\end{definition}


Throughout the remainder of the section we consider a $K$-block stochastic block model with the following block probability matrix.
\begin{equation}
\mb = \left( \begin{array}{cccc}
p_{1,n} & q_n & ... & q_n  \\
q_n & p_{2,n}  & ... & q_n\\
...   & ...& ...&...\\
... & ...& q_n & p_{K,n}
\end{array} \right). \label{eq:bdiag}
\end{equation}
The number of communities $K$ is assumed to be fixed. Without loss, assume that $p_{1,n} \geq p_{2,n} \ldots \geq p_{K,n}$. We also assume that $q_n < p_{K,n}$.  Denote $w_k = \nn{k}/n$, for $k = 1,\ldots, K$. The quantity $w_k$ represents the proportion of nodes belonging to the $k$-th community. Throughout this section we assume that $\{\tau_n : n\geq 1\}$ is a sequence of regularization parameters satisfying,
\begin{equation}
\frac{\left(\sum_{k=1}^K1/w_k\right)\maxd\log n}{\tau_n} = o(1) \label{eq:tausat1}
\end{equation}
Notice that if the cluster sizes are of the same order, that is $w_k \asymp 1$, then the above condition simply states that $\tau_n$ should grow faster than $\maxd\log n$.

Denote $\cldeg{k} = \nn{k} (p_{k,n} - q_n)$. The following is our main result regarding the impact of regularization.


\begin{theorem} \label{thm:kblockgenthm} For the $K$ block SBM, with block probability matrix \eqref{eq:bdiag},
\begin{equation}
\ddeln \asymp \frac{\left(\tmone \mone - \mtwo\right)}{\mone}\sqrt{\maxd\, \log n}. \label{eq:asympfrac}
\end{equation}
Here $\ddeln$ is given by \eqref{eq:asymfrac} and 
\begin{align}
m_{1,n} & = \sum_{k = 1}^K \frac{w_k }{\cldeg{k}} \label{eq:m1}\\
\tilde{m}_{1,n} & = \sum_{k = 1}^K \frac{1}{\cldeg{k}} \label{eq:tm1}\\
m_{2,n} & = \sum_{k = 1}^K \frac{w_k}{\cldeg{k}^2} \label{eq:m2}
\end{align}
Further, let $\{\tau_n,\, n \geq 1\}$ satisfy \eqref{eq:tausat1}. If  $\ddelta_n$ goes to 0, as $n$ tends to infinity, then  RSC-$\tau_n$  gives consistent cluster estimates.
\end{theorem}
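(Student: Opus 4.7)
The plan is to directly evaluate $\ddeln = \lim_{\tau \to \infty}\epst / \mu_{K,\tau}$. Theorem \ref{thm:someassump} handles the numerator: its $\tau > 2\maxd$ branch gives $\tau\,\epst \to 20\sqrt{\maxd \log n}$ as $\tau \to \infty$, so the substance lies in computing $\lim_{\tau \to \infty}\tau\,\mu_{K,\tau}$; the ratio will then yield \eqref{eq:asympfrac}.

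To analyze $\mu_{K,\tau}$, first reduce the $n\times n$ spectral problem for $\ml_\tau$ to a $K\times K$ one. Using $Z\mathbf{1}_K = \mathbf{1}_n$, write $\mpp_\tau = Z(\mb + (\tau/n)\mathbf{1}_K\mathbf{1}_K')Z'$; since the nonzero spectra of $XY$ and $YX$ coincide, the nonzero eigenvalues of $\ml_\tau$ match those of
\[
M_\tau := \mathrm{diag}(\nn{k}/\sdt{k})\bigl(\mb + (\tau/n)\mathbf{1}_K\mathbf{1}_K'\bigr).
\]
Applying the matrix determinant lemma to the rank-one term in $\det(M_\tau - \lambda I)$ reduces the characteristic equation to
\[
\det(A)\bigl(1 + (\tau/n)\,\mathbf{1}_K' A^{-1}\mathbf{1}_K\bigr) = 0, \qquad A := \mb - \lambda\,\mathrm{diag}(\sdt{k}/\nn{k}).
\]
Since $\mb = \mathrm{diag}(p_{k,n}-q_n) + q_n\mathbf{1}\mathbf{1}'$, Sherman--Morrison gives a closed form for $\mathbf{1}'A^{-1}\mathbf{1}$, and after substituting $\lambda = \mu/\tau$ and letting $\tau \to \infty$ the equation collapses to
\[
\sum_{k=1}^K\frac{w_k}{\cldeg{k} - \mu} = 0.
\]
Its $K-1$ roots interlace the $\cldeg{k}$, so the smallest positive root $\mu^*$ lies in $(\cldeg{K}, \cldeg{K-1})$ and equals $\lim_{\tau \to \infty}\tau\,\mu_{K,\tau}$.

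Set $P(\mu) = \sum_k w_k \prod_{j\neq k}(\cldeg{j}-\mu)$. A direct expansion gives $P(0) = (\prod_j \cldeg{j})\,\mone$ and $P'(0) = -(\prod_j \cldeg{j})(\tmone\,\mone - \mtwo)$, so the linearization at $0$ has root $\mone/(\tmone\,\mone - \mtwo)$. Combined with the sign pattern of $P$ at the poles $\cldeg{k}$ and the localization $\mu^* \in (\cldeg{K}, \cldeg{K-1})$, this yields the two-sided bound $\mu^* \asymp \mone/(\tmone\,\mone-\mtwo)$. Dividing $20\sqrt{\maxd\log n}$ by this produces \eqref{eq:asympfrac}.

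For the consistency claim, I would quantify the preceding limit. A Neumann-series expansion of $A^{-1}$ around $A_\infty(\mu) := \mb - \mu\,\mathrm{diag}(1/\nn{k})$, exploiting $\sdt{k}/\nn{k} = 1/\nn{k}\cdot(\tau + \cldeg{k} + n q_n)$, produces $\tau_n\,\mu_{K,\tau_n} = \mu^*\bigl(1 + O(\maxd(\sum_k 1/w_k)/\tau_n)\bigr)$, which is $\mu^*(1+o(1))$ under \eqref{eq:tausat1}. Hence $\delta_{\tau_n,n} = (1+o(1))\ddeln$, so if $\ddeln \to 0$ then $\delta_{\tau_n,n} \to 0$; Corollary \ref{cor:simcor} combined with Theorem \ref{thm:someassump} then bounds the clustering error by $O(\delta_{\tau_n,n}^2) = o(1)$ on an event of probability at least $1 - 2/n$, giving consistency. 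The main obstacle is the two-sided bound $\mu^* \asymp \mone/(\tmone\,\mone-\mtwo)$---especially the lower bound on $\mu^*$, which must hold uniformly over widely spread $\cldeg{k}$ and requires comparing $P$ with its linearization on $[0,\mu^*]$ via the explicit factored form of $P$ together with its sign changes at the poles.
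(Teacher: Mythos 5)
Your proposal is correct, and it reaches \eqref{eq:asympfrac} by a genuinely different route from the paper. Both arguments start identically (reduce to the $K\times K$ matrix $\beg = \mb_\tau (Z'\mr Z)$ and use $\tau\epst \to 20\sqrt{\maxd\log n}$), but the paper never touches the characteristic polynomial: it observes that $\mu_{K,\tau} \asymp 1/\mathrm{trace}(\beg^{-1})$ (since $\beg$ is positive definite and $K$ is fixed), computes $\mathrm{trace}(\beg^{-1})$ in closed form via Sherman--Morrison applied to $(\mb+vv')^{-1}$ -- handling general $q_n$ by absorbing $q_n\mathbf{1}\mathbf{1}'$ into the rank-one term with $\tilde\tau = \tau + nq_n$ -- and then reads off $\lim_\tau \mathrm{trace}(\beg^{-1})/\tau = \tmone - \mtwo/\mone$. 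Your secular-equation route computes the same quantity in disguise: $-P'(0)/P(0) = \sum_i 1/\mu_i^*$, the sum of reciprocals of the $K-1$ limiting scaled eigenvalues, which is exactly the paper's limit of $\mathrm{trace}(\beg^{-1})/\tau$. This also means the step you flag as the ``main obstacle'' is not one: since all $K-1$ roots of $P$ are positive (by interlacing with the $\cldeg{k}$) and $K$ is fixed, $1/\mu^* \le \sum_i 1/\mu_i^* \le (K-1)/\mu^*$ gives the two-sided bound $\mu^* \asymp -P(0)/P'(0) = \mone/(\tmone\mone-\mtwo)$ immediately, with no need to compare $P$ against its linearization on $[0,\mu^*]$. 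What your approach buys is more refined spectral information (the exact limiting eigenvalues as interlaced roots of $\sum_k w_k/(\cldeg{k}-\mu)=0$); what the paper's buys is brevity, and in particular a cleaner consistency argument: rather than a Neumann-series perturbation of the characteristic equation, the paper just splits the exact finite-$\tau$ trace formula \eqref{eq:traceasym} into a term bounded by $\ddeln$ and a remainder of order $\sqrt{\maxd\log n}\,(\sum_k 1/w_k)/\tau_n$, which is $o(1)$ under \eqref{eq:tausat1}.
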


Theorem \ref{thm:kblockgenthm} will be proved in Appendix \ref{sec:proofkblockgenthm}. In particular, the following corollary shows that for the stochastic block model regularized spectral clustering would work even when the minimum degree is of constant order. This is an improvement over recent works on unregularized spectral clustering, such as \cite{mcsherry2001spectral}, \cite{chaudhuri2012spectral}, \cite{rohe2011spectral},  which required the minimum degree to grow at least as fast as $\log n$.    

\begin{cor} \label{cor:twoblock} Let the block probability matrix  $\mb$ be as in \eqref{eq:bdiag}. 
Let $\{\tau_n,\, n\geq 1\}$ satisfy \eqref{eq:tausat1}. Then RSC-$\tau_n$ gives consistent cluster estimates under the following scenarios:
\begin{enumerate}[i)]
\item For the $K$-block SBM if  $w_k \asymp 1$, for each $k = 1,\ldots,\, K$, and 
\begin{equation}
\frac{(p_{K - 1,n} -q_n)^2}{p_{1,n}} \quad\mbox{grows faster than}\quad \frac{\log n}{n}.   \label{eq:fracmistwoblck}
\end{equation}
\item  For the 2-block SBM if $p_2 = q$ and
\begin{equation}
\frac{(p_{1,n} -q_n)^2}{w_1 p_{1,n} + w_2 q_n} \quad\mbox{grows faster than}\quad \frac{\log n}{n \left(\min\{w_1,\, w_2\}\right)^2}.   \label{eq:fracmistwoblckp}
\end{equation}
\end{enumerate}
\end{cor}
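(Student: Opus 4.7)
The plan is to deduce the corollary from Theorem \ref{thm:kblockgenthm}, which reduces consistency of RSC-$\tau_n$ to showing that $\ddeln \to 0$. Since \eqref{eq:tausat1} is assumed in the corollary's statement, all that remains is to compute the right-hand side of \eqref{eq:asympfrac} explicitly in each scenario and verify it vanishes. A useful first step common to both parts is the algebraic identity
$$
\tmone\, \mone - \mtwo \;=\; \sum_{k,k'} \frac{1}{\gamma_{k,n}}\cdot\frac{w_{k'}}{\gamma_{k',n}} \;-\; \sum_k \frac{w_k}{\gamma_{k,n}^2} \;=\; \sum_{k\neq k'} \frac{w_{k'}}{\gamma_{k,n}\,\gamma_{k',n}},
$$
i.e.\ the diagonal $k=k'$ contributions cancel. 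This cancellation is what keeps the ratio $(\tmone\mone - \mtwo)/\mone$ tractable (and bounded) even when some of the $\gamma_{k,n}$ become very large or, in the degenerate limit, diverge.

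For part (i), write $c_k = 1/\gamma_{k,n}$. Under $w_k \asymp 1$ we have $n_k \asymp n$, so $\gamma_{k,n} \asymp n(p_{k,n}-q_n)$; since the $p_{k,n}$ are decreasing in $k$, so are the $\gamma_{k,n}$, and $c_K \geq c_{K-1} \geq \cdots \geq c_1$. With $K$ fixed, the sum in the identity above is dominated by the largest pair, yielding $\tmone\mone - \mtwo \asymp c_K c_{K-1}$, while $\mone \asymp c_K$. Hence $(\tmone\mone - \mtwo)/\mone \asymp c_{K-1} \asymp 1/[n(p_{K-1,n}-q_n)]$. Combined with $\maxd \asymp n\, p_{1,n}$ (cluster~1 nodes have the largest expected degree) this gives
$$
\ddeln^{2} \;\asymp\; \frac{p_{1,n}\,\log n}{n\,(p_{K-1,n}-q_n)^{2}},
$$
and $\ddeln \to 0$ is precisely hypothesis \eqref{eq:fracmistwoblck}.

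For part (ii) specialise to $K=2$. The identity collapses to $\tmone\mone - \mtwo = (w_1+w_2)\,c_1 c_2 = c_1 c_2$, and the ratio becomes
$$
\frac{\tmone\mone - \mtwo}{\mone} \;=\; \frac{c_1 c_2}{w_1 c_1 + w_2 c_2}.
$$
The subtle point is that setting $p_{2,n}=q_n$ sends $c_2 \to \infty$; however, the ratio remains well defined in the limit, converging to $c_1/w_2 = 1/[n\,w_1 w_2\,(p_{1,n}-q_n)]$. Together with $\maxd \asymp n(w_1 p_{1,n} + w_2 q_n)$ (the larger of the two cluster degrees when $p_1 > q$) this gives
$$
\ddeln^{2} \;\asymp\; \frac{(w_1 p_{1,n} + w_2 q_n)\,\log n}{n\,w_1^{2}w_2^{2}\,(p_{1,n}-q_n)^{2}},
$$
and using $w_1+w_2=1$ so that $w_1 w_2 \asymp \min\{w_1,w_2\}$, the condition $\ddeln \to 0$ is exactly \eqref{eq:fracmistwoblckp}.

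The main obstacle is that the equality $p_{2,n}=q_n$ in part (ii) lies on the boundary of the regime $q_n < p_{K,n}$ under which Theorem \ref{thm:kblockgenthm} is stated, so a clean application requires justifying that both sides of \eqref{eq:asympfrac} extend continuously through the limit $p_{2,n}-q_n \downarrow 0$. The plan for handling this is to apply the theorem to a perturbation $p_{2,n} = q_n + \epsilon_n$ with $\epsilon_n$ going to zero slowly enough that the conclusions remain uniform in $\epsilon_n$, and to use the cancellation in the identity above to see that the apparent singularities in $\tmone$, $\mone$ and $\mtwo$ individually do not propagate into the ratio. Everything else is bookkeeping: ordering the $c_k$'s, identifying $\maxd \asymp n p_{1,n}$, and translating $\ddeln \to 0$ into the separation condition written in the corollary.
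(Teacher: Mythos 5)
Your proposal is correct and follows essentially the same route as the paper: apply Theorem \ref{thm:kblockgenthm}, evaluate $(\tmone\mone-\mtwo)/\mone$ in each regime, identify $\maxd$, and translate $\ddeln\to 0$ into the stated separation conditions. The one organizational difference is that you package the cancellation via the exact identity $\tmone\mone-\mtwo=\sum_{k\neq k'}w_{k'}\,\cldeg{k}^{-1}\cldeg{k'}^{-1}$, whereas the paper expands $\mone,\tmone,\mtwo$ around the dominant term $1/\cldeg{K}$ with error $O(r_K)$, $r_K=\cldeg{K}/\cldeg{K-1}$; your identity is cleaner, makes the $k=k'$ cancellation transparent, and yields $\asymp$ rather than just $O(\cdot)$, but both give the same bound $1/\cldeg{K-1}$ in part (i) and the same closed form $1/(w_2\cldeg{1}+w_1\cldeg{2})$ in part (ii). You are also right to flag that $p_{2,n}=q_n$ sits on the boundary of the regime $q_n<p_{K,n}$ under which the theorem is stated (the paper silently plugs in $\cldeg{2}=0$); however, your proposed fix of ``applying the theorem to a perturbation $p_{2,n}=q_n+\epsilon_n$'' is not quite the right move, since the data-generating model is fixed and cannot be perturbed. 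The cleaner justification is that the block matrix $\mb$ in \eqref{eq:bdiag} remains full rank when $p_2=q>0$ (its determinant is $q(p_1-q)\neq 0$), so the matrix $\beg$ in the theorem's proof is still invertible and $\lim_{\tau\to\infty} 1/(\tau\mu_{K,\tau})$ can be computed directly (or by continuity of that limit in the entries of $\mb$); only the intermediate Sherman--Morrison step, which inverts $diag(p_1-q,p_2-q)$, needs to be bypassed. With that substitution your argument is complete.
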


\textbf{Remark :} Regime $i)$ deals with the situation that the clusters sizes are of the same order of magnitude. Regime $ii)$, where $p_{2,n} = q_n$  mimics a scenario where there is only one cluster. This is a generalization of the \textit{planted clique} problem where $p_{1,n} = 1$ and $p_{2,n} = q = 1/2$. For the planted clique problem \eqref{eq:fracmistwoblckp} translates to requiring that $\min\{w_1,\, w_2\}$ grow faster that $\sqrt{\log n}/\sqrt{n}$ for consistent cluster estimates, which is similar to results in \cite{mcsherry2001spectral}.

\vspace{.3cm}

%
\begin{figure}[ht]
 \centering
\subfigure[Unregularized ($\tau = 0 $)]{\includegraphics[width=2.2in]{./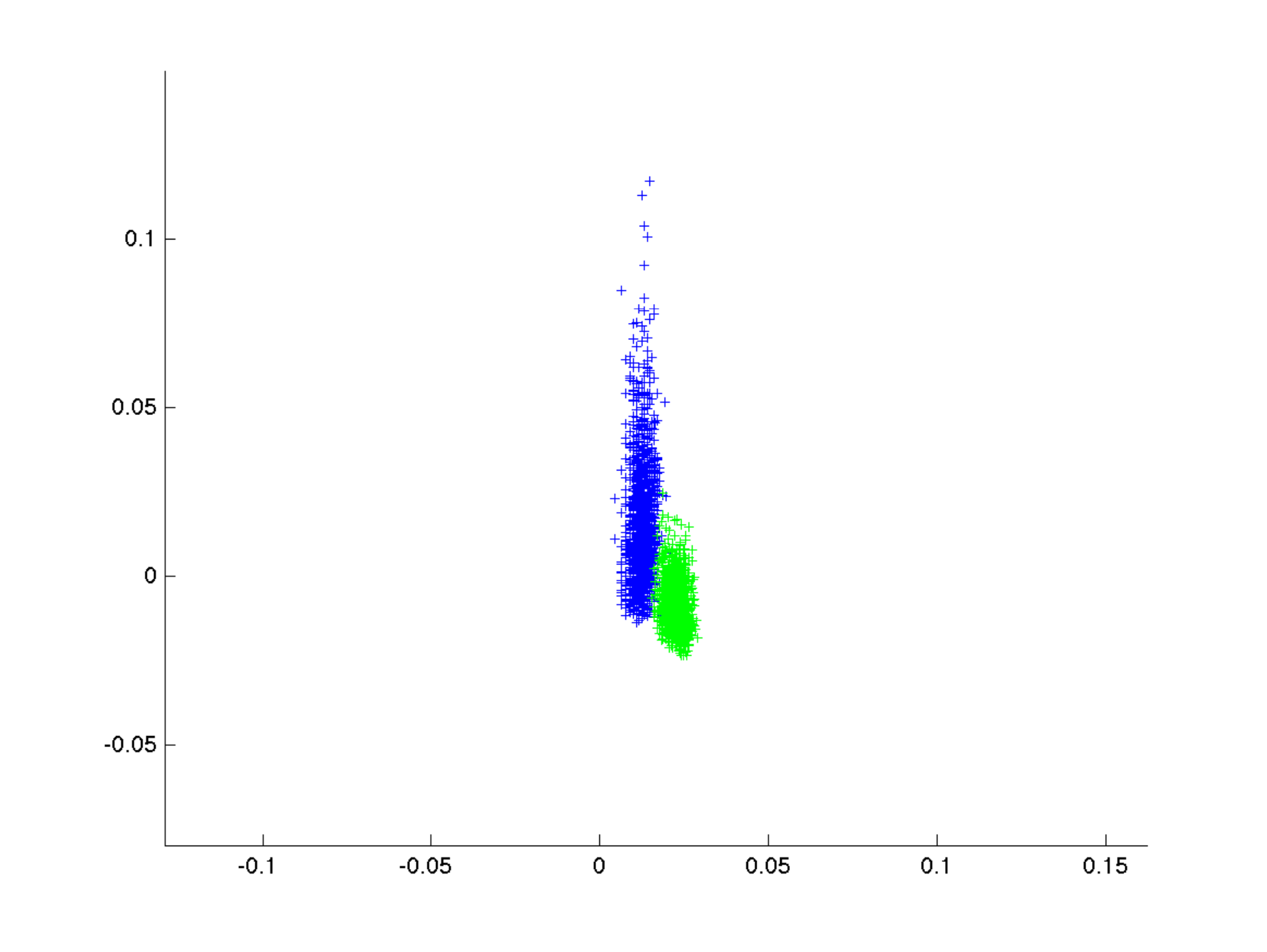}}
\subfigure[Regularized ($\tau = 26.5$)]{\includegraphics[width=2.2in]{./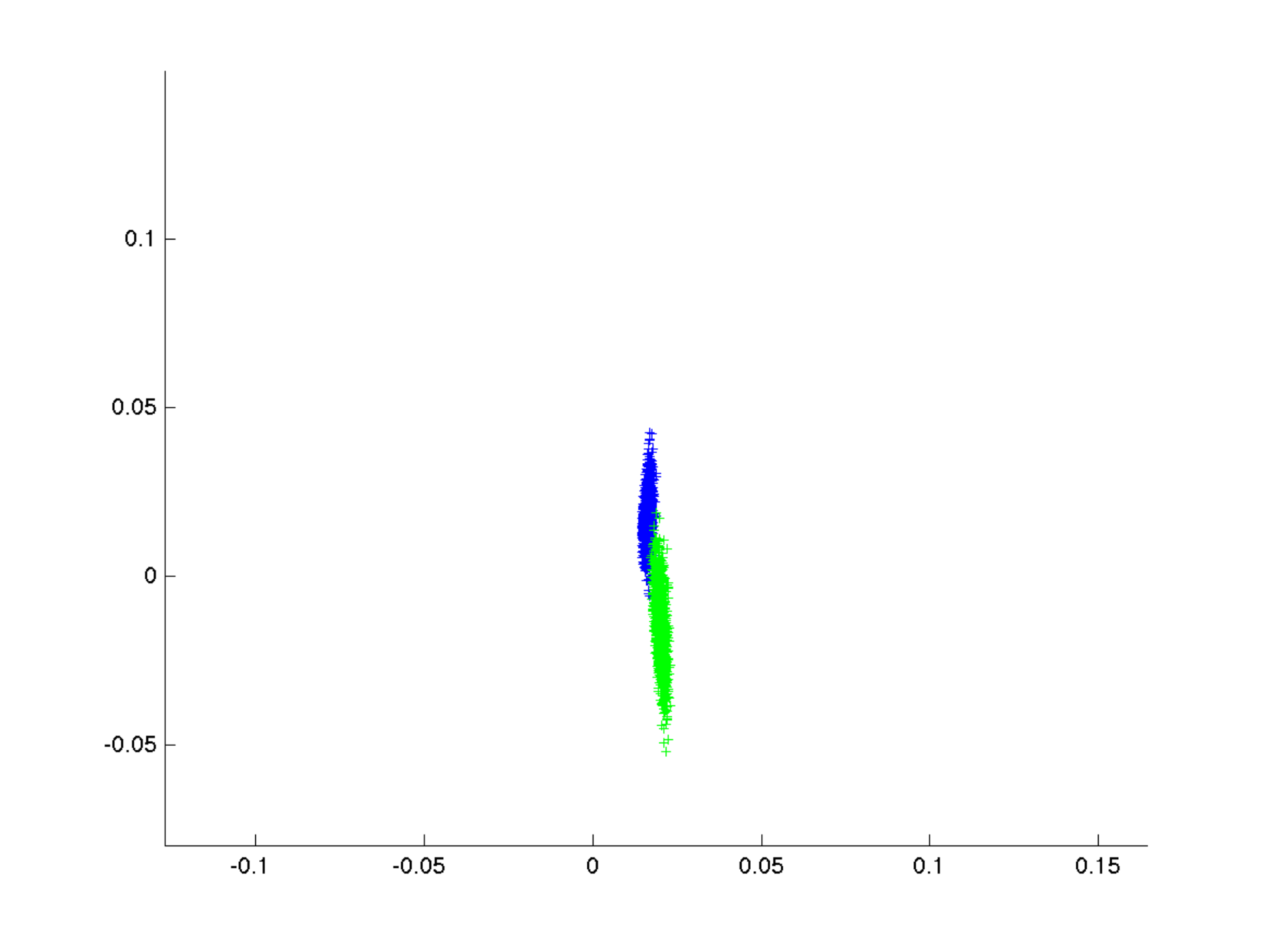}}
\subfigure[Regularized ($\tau = n$)]{\includegraphics[width=2.2in]{./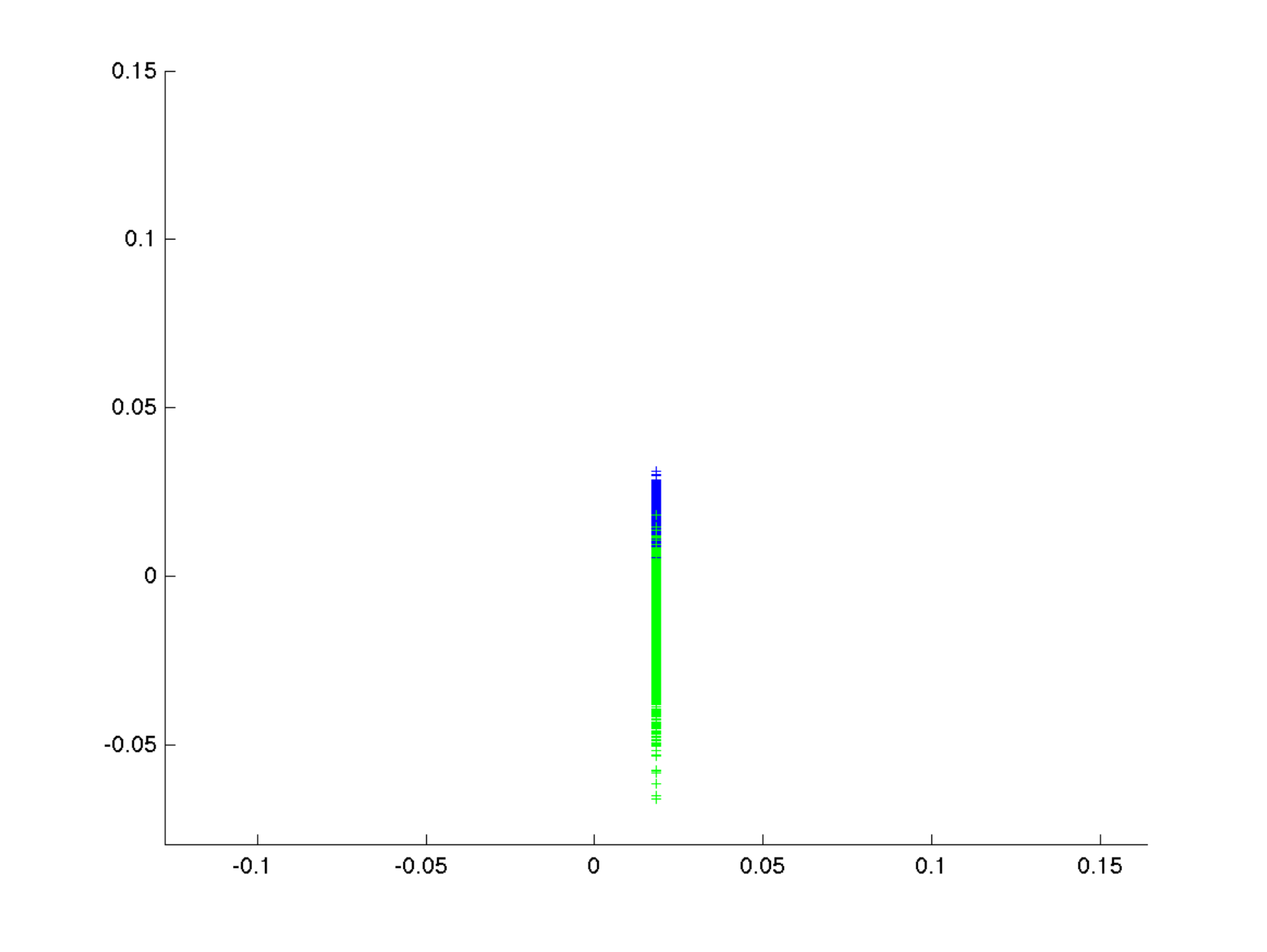}}

\caption[]{Scatter plot of first two eigenvectors with $\mb$ as in \eqref{eq:mbexample}. The $x,\, y$ axes provides values for the first, second eigenvectors respectively. The colors corresponds to the cluster memberships of the nodes.  Here the block probability matrix $\mb$ is as in \eqref{eq:mbexample}. Plot a) corresponds to $\tau = 0$.  b)  $\tau = 26.5$, selected using our data-driven $DKest$ methodology proposed in Section \ref{sec:simresults}.  c) $\tau = n$. 
}
\label{fig:plotcomp1}
\end{figure}

Notice that in both \eqref{eq:fracmistwoblck} and \eqref{eq:fracmistwoblckp} the minimum degree could be of constant order. For example, for the two-block SBM  if $q_n, p_{2,n} = O(1/n)$ then the minimum degree is of constant order. In this case ordinary spectral clustering using the normalized Laplacian would perform poorly. RSC performs better since from \eqref{eq:fracmistwoblck} it only requires that the larger of the two within block probabilities, that is $p_{1,n}$, growing appropriately fast. Figure \ref{fig:plotcomp1} illustrates this with $n = 3000$ and edge probability matrix
\begin{equation}
\mb =  \left( \begin{array}{cc}
.01 & .0025  \\
.0025 & .003	 \\
\end{array} \right). \label{eq:mbexample}
\end{equation}
 The figure provides the scatter plot of the first two eigenvectors of the unregularized and regularized sample Laplacians. Figure a) corresponds to the usual spectral clustering, while plots b) \& c) corresponds to RSC-$\tau$, with $\tau = 26.5,\, 3000$ respectively.  Here, $\tau = 26.5$ was selected using our data-driven methodology for slecting $\tau$ proposed in Section \ref{sec:simresults}. Also, $\tau = 3000$ was selected as suggested from Theorem \ref{thm:kblockgenthm} and Corollary \ref{cor:twoblock}. The fraction of mis-classified are $26\%,\, 4\%,\, 6\%$ for the cases a),\, b),\, c) respectively.

 
 
 From the scatter plots one sees that there is considerably less scattering for the blue points  with regularization. This results in improvements in clustering performance. Also, note that the performance in case c), in which $\tau$ is taken to be very large, is only slightly worse than case b). For case c) there is almost no variation in the first eigenvector, plotted along the $x$-axis. This makes sense since the first eigenvector is proportional to $(\sqrt{\hatdt{1}},\ldots,\, \sqrt{\hatdt{n}})$ and for large $\tau$ one has
 $\sqrt{\hatdt{i}} \approx \sqrt{\tau}$.


 It may seem surprising that in Corollary \ref{cor:twoblock}, claim \eqref{eq:fracmistwoblck}, the smallest within block probability, that is $p_{K,n}$ does not matter at all. One way of explaining this is that if one can do a good job identifying the top $K-1$ highest degree clusters then the cluster with the lowest degree can also be identified simply by eliminating nodes not belonging to this cluster. 

\section{SBM with strong and weak clusters} \label{sec:selhigh}

In many practical situations, not all nodes belong to clusters that can be estimated well. 
As mentioned in the introduction, these nodes interfere with the clustering of the remaining nodes in the sense that  none of the top eigenvectors might discriminate between the nodes that do belong to well-defined clusters. As an example of a real life data set, we consider the political blogs data set, which has two clusters,  in Subsection \ref{subsec:realanal}. With ordinary spectral clustering, the top two eigenvectors do not discriminate between the two clusters (see Figure \ref{fig:polblogprob} for explanation). Infact, it is only the third eigenvector that discriminates between the two clusters. This results in bad clustering performance when the first two eigenvectors are considered.  However,  regularization rectifies this problem by `bringing up' the important eigenvector thereby allowing for much better performance.

\begin{figure}[ht!]
\begin{center}$
\begin{array}{c}
\includegraphics[width=2.6in]{./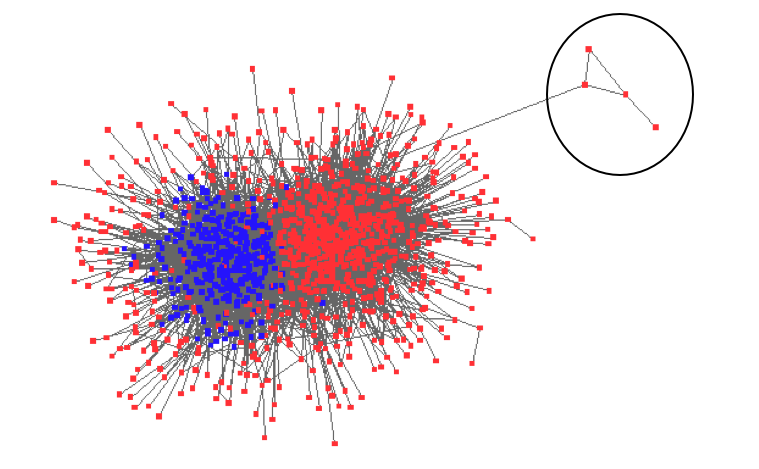}
\end{array}$
\end{center}
\caption[]{Depiction of the political blog network \cite{adamic2005political}. Instead of discriminating between the red and blue nodes, the second eigenvector discriminates the small cluster of 4 nodes (circled) from the remaining. This results in bad clustering performance.
 }
\label{fig:polblogprob}
\end{figure}




 We model the above situation -- where there are main clusters as well as outlier nodes --  in the following way: Consider a stochastic block model,  as in \eqref{eq:bdiag}, with $K + \kw$ blocks. In particular, let the block probability matrix be given by
\begin{equation}
 \mb =  \left( \begin{array}{cc}
B_s & B_{sw} \\
B_{sw}' & B_{w} \\
\end{array} \right),
 \label{eq:bdiagbig}
\end{equation}
where $B_s$ is a $K\times K$ matrix with $(p_{1,n}, \ldots, p_{K,n})$ in the diagonal and $q_{n}$ in the off-diagonal. Further, $B_{sw},\, B_w$ are $K\times \kw$ and $\kw\times \kw$ dimensional matrices respectively.  In the above $(K + \kw)$-block SBM, the top $K$ blocks corresponds to the well-defined or \textit{strong} clusters, while the bottom $\kw$ blocks corresponds to less well-defined or \textit{weak} clusters.  
 
 We now formalize our notion of strong and weak clusters.   The matrix $B_s$ models the distribution of edges between the nodes belonging to the strong clusters, while the matrix $B_w$ has the corresponding role for the weak clusters. The matrix $B_{sw}$ models the interaction between the strong and weak clusters.
   For ease of analysis, we make the following simplifying  assumptions : Assume that $p_{k,n} = \pkn$, for $k = 1,\ldots K$, and that the strong clusters $C_1,\, \ldots,\, C_K$ have equal sizes, that is, assume $\nn{k} = \nk$ for $k = 1,\ldots, K$.  

 Let $b_{sw} $  be defined as the maximum of the elements in $B_{sw}$,  
 and let $n^w$ be the number of nodes belonging to a weak cluster. In other words, $K\nk + n^w = n$. We make the following three assumptions:
 \begin{equation}
\frac{(\pkn - q_n)^2}{\pkn} \quad\mbox{grows faster than}\quad\frac{\log n}{n} \label{eq:absence}
\end{equation}
\begin{equation}
n^w = O(1) . \label{eq:smallvsweak}
\end{equation} 
\begin{equation}
 b_{sw} \lesssim \sqrt{\frac{\pkn\log n }{n}}\label{eq:smallvsweak2}
\end{equation}

Assumption \eqref{eq:absence}  ensures recovery of the strong clusters if there were no nodes belonging to weak clusters (See Corollary \ref{cor:twoblock} or \citet{mcsherry2001spectral}, Corollary 1). Assumption \eqref{eq:smallvsweak} and \eqref{eq:smallvsweak2} pertain to the nodes in the weak clusters. In particular, 
 Assumption \eqref{eq:smallvsweak} simply states that the total number of nodes belonging to a weak cluster is constant and does not grow with $n$. Assumption \eqref{eq:smallvsweak2} states that the density of the edges between the strong and weak clusters, denoted by $b_{sw}$, is not too large.

We only assume that the rank of $B_s$ is $K$. Thus,  the rank of $B$ is at least $K$. 
 As before, we assume that $K$ is known and does not grow with $n$. The number of weak clusters,  $\kw$,  need not be known and and could be as high as $n^w$. 
  We do not even place any restriction on the sizes of a weak cluster. Indeed, we even entertain the case that each of the $\kw$ clusters has one node. Consequently, we are only interested in recovering the strong clusters.
  
  
  Theorem \ref{thm:highdegcor1} presents our theorem for the recovery of the $K$ strong clusters using the RSC-$\tau_n$ Algorithm, with $\{\tau_n,\, n \geq 1\}$, satisfying
\begin{equation}
\frac{n\pkn\log n}{\tau_n} = o(1) \label{eq:tausat3}
\end{equation}  
In other words, the regularization parameter is taken to grow faster than $n\pkn \log n$, where notice that $n\pkn$ is of the same order of the expected maximum degree    of the graph.  Let $\hat T_1,\ldots, \hat T_K$ be the clusters outputted from the RSC-$\tau_n$ Algorithm. Let
  \begin{equation*}
\hat f = \min_\pi \max_k \frac{|C_k \cap \hat T_{\pi(k)}^c| + |C_k^c \cap \hat T_{\pi(k)}|}{\nn{k}} , 
\end{equation*}
be as in \eqref{eq:clusterror}. Notice that the clusters $C_1,\ldots, C_K$ do not form a partition of $\{1,\ldots,\, n\}$, while the estimates $\hat{T}_1,\, \ldots,\, \hat{T}_K$ do. However, since $n^w$ does not grow with $n$ this should not make much of a difference.

\begin{theorem} \label{thm:highdegcor1} Let Assumptions \eqref{eq:absence}, \eqref{eq:smallvsweak} and \eqref{eq:smallvsweak2} be satisfied.  If $\{\tau_n ,\, n\geq 1\}$ satisfies \eqref{eq:tausat3} then 	
the clustering error $\hat f$  for RSC-$\tau_n$ goes to zero with probability tending to one.
\end{theorem}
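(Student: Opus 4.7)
The plan is to parallel the argument behind Theorem \ref{thm:kblockgenthm}, treating the $\kw$ weak clusters as a small perturbation of an underlying $K$-block model on the strong nodes. The three main ingredients---Theorem \ref{thm:someassump} for the concentration of $L_{\tau_n}$ around $\ml_{\tau_n}$, the Davis--Kahan bound on the top $K$ eigenvectors, and Lemma \ref{lem:kumar} for $K$-means recovery---combine exactly as in the proof of Corollary \ref{cor:simcor}. The role of assumptions \eqref{eq:smallvsweak} and \eqref{eq:smallvsweak2} is to make the weak-node contribution to $\mpp$ spectrally negligible compared to the strong block, so the top $K$ eigenvectors of $\ml_{\tau_n}$ continue to encode the strong-cluster structure.

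First I would apply Theorem \ref{thm:someassump}. Since $\maxd \lesssim n\pkn$ and \eqref{eq:tausat3} forces $\tau_n$ to grow faster than $n\pkn \log n$, the regime $\tau_n > 2\maxd$ holds for all large enough $n$, giving with probability at least $1 - 2/n$,
\[
\|L_{\tau_n} - \ml_{\tau_n}\| \;\lesssim\; \frac{\sqrt{\maxd \log n}}{\tau_n} \;\lesssim\; \frac{\sqrt{n \pkn \log n}}{\tau_n}.
\]
For the eigen gap, I would decompose $\mpp = \mpp_s + \Delta$, where $\mpp_s$ is the matrix retaining only the strong block $B_s$ (rows and columns of weak nodes zeroed). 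Then $\mpp_s$ has rank $K$ and smallest nonzero eigenvalue of order $(\pkn - q_n)\nk \asymp (\pkn - q_n) n$, while
\[
\|\Delta\|^2 \;\leq\; \|\Delta\|_F^2 \;\lesssim\; n\,n^w\, b_{sw}^2 + (n^w)^2 \;\lesssim\; \pkn \log n,
\]
using \eqref{eq:smallvsweak} and \eqref{eq:smallvsweak2}. Since \eqref{eq:absence} implies $(\pkn - q_n) n \gg \sqrt{\pkn \log n}$, Weyl's inequality yields a gap of order $(\pkn - q_n) n$ in $\mpp$ itself; pushing this through the first-order expansion $\ml_{\tau_n} = J + \mpp/\tau_n + O(\maxd/\tau_n^2)$ then gives $\mu_{K,\tau_n} - \mu_{K+1,\tau_n} \gtrsim (\pkn - q_n) n/\tau_n$.

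Combining the two bounds via Davis--Kahan produces
\[
\|V_{\tau_n} - \vsym_{\tau_n}\| \;\lesssim\; \frac{\sqrt{\pkn \log n / n}}{\pkn - q_n},
\]
which vanishes by \eqref{eq:absence}. An analogue of Lemma \ref{lem:rescue} restricted to the $K$ strong-cluster rows of $\vsym_{\tau_n}$ gives pairwise center separation of order $1/\sqrt{\nk}$, and Lemma \ref{lem:kumar} then delivers an $O(\|V_{\tau_n} - \vsym_{\tau_n}\|^2)$ mis-classification rate among the strong-cluster nodes. The at most $n^w = O(1)$ weak-cluster nodes contribute $O(1/\nk) \to 0$ to $\hat f$ irrespective of how $K$-means assigns them. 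The principal obstacle is the eigen-gap step: one must verify that the top $K$ eigenvectors of $\ml_{\tau_n}$ still place the strong clusters at well-separated centers in spite of the $\kw$ extra nonzero eigen-directions of $\mpp$, which requires a careful $1/\tau_n$ perturbation expansion of $\ml_{\tau_n}$ and uses \eqref{eq:absence} crucially to spectrally dominate $\Delta$ by $\mpp_s$.
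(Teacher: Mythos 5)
Your overall architecture matches the paper's: Theorem \ref{thm:someassump} for concentration, an eigen-gap of order $n(\pkn-q_n)/\tau_n$, Davis--Kahan, Lemma \ref{lem:kumar} with centers separated by $\asymp 1/\sqrt{\nk}$, and a final $O(n^w/\nk)$ correction for the weak nodes; your end bound $\sqrt{\pkn\log n/n}/(\pkn-q_n)$ also agrees with what the paper obtains. However, there is a genuine gap, and it sits exactly where you flag ``the principal obstacle'': establishing the eigen gap of $\ml_{\tau_n}$ and the separation of the strong-cluster rows of its top-$K$ eigenvector matrix is the actual content of the theorem, and your sketched route for it does not close. The expansion $\ml_{\tau_n} = J + \mpp/\tau_n + O(\maxd/\tau_n^2)$ is incorrect: writing $\md_{\tau}^{-1/2} = \tau^{-1/2}(I+\md/\tau)^{-1/2}$ and expanding gives a first-order correction $-\tfrac{1}{2\tau}(\md J + J\md)$ whose spectral norm is of order $\maxd/\tau = n\pkn/\tau$, i.e.\ the error is $O(\maxd/\tau)$, not $O(\maxd/\tau^2)$. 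Since the target gap is $n(\pkn-q_n)/\tau$, this correction is \emph{not} negligible unless $\pkn - q_n \gtrsim \pkn$, which is not assumed; a blunt Weyl argument on $J+\mpp/\tau$ therefore cannot certify the gap, and the same issue undermines the claimed ``analogue of Lemma \ref{lem:rescue}'' for the truncated top-$K$ eigenvector matrix (recall that for $\tau=0$ the top eigenvectors genuinely fail to separate the strong clusters, so this step cannot be waved through).

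The paper avoids the expansion entirely. It introduces an intermediate, exactly solvable $(K+1)$-block population Laplacian $\tml_\tau$ (strong clusters plus one merged weak block), computes its spectrum and eigenvector geometry in closed form via the reduction to the $(K+1)\times(K+1)$ matrix $\tmb=(Z'\mr Z)^{1/2}\mb_\tau(Z'\mr Z)^{1/2}$ (Lemma \ref{lem:eganalkp1}: $\tilde\mu_{2,\tau}=\dots=\tilde\mu_{K,\tau}=\nk(\pkn-q_n)/(\dsn+\tau)$, centers separated by $\sqrt{2/\nk}$), and then controls $\|\ml_\tau-\tml_\tau\|\lesssim \dwn/(\dwn+\tau)$ (Lemma \ref{lem:start}), feeding both into Davis--Kahan with the gap $\tilde\mu_{K,\tau}-\tilde\mu_{K+1,\tau}$. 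If you want to salvage your route, you would have to exploit that $J$, $\md$, and $\mpp$ all act within the span of the cluster indicator vectors and reduce to a low-dimensional eigenproblem --- which is, in effect, what the paper's exact computation does.
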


 The theorem is proved in Appendix \ref{sec:thmhighdeg}. It  states that under  Assumption \eqref{eq:absence} --  \eqref{eq:smallvsweak2} one can can get the same results with regularization that one would get if the nodes belonging to the weak clusters weren't present.
 
  Spectral clustering (with $\tau = 0$) may fail  under the above assumptions. This is elucidated in Figure \ref{fig:popimp}. Here $n = 2000$ and there are two strong clusters ($K=2$) and three weak clusters ($M=3$). The first 1600 nodes are evenly split between the two strong clusters, with the remaining nodes split evenly between the weak clusters. The matrix $B_s$ and $B_w$ are as in \eqref{eq:swmats} and $B_{sw}$ is a matrix with all entries $.015$.
\begin{equation}
 \mb_{s} =  \left( \begin{array}{cc}
.025 & .015 \\
.015 & .025\\
\end{array} \right)
\quad\quad\quad
  \mb_{w} =  \left( \begin{array}{ccc}
.007 & .015 & .015 \\
.015 & .0071 & .015\\
.015 & .015 & .0069\\
\end{array} \right). \label{eq:swmats}
\end{equation}
  The nodes in the weak clusters have relatively lower degrees, and consequently, cannot be recovered. 
  Figures  \ref{subfig:popunreg} and \ref{subfig:popreg} show the first 3 eigenvectors of the population Laplacian in the regularized and unregularized cases. We plot the first 3 instead of the first 5 eigenvectors in order to facilitate understanding of the plot.  In both cases the first eigenvector is not able to distinguish between the two strong clusters. This makes sense since the first eigenvector of the Laplacian has elements whose magnitude is proportional to square root of the population degrees (see, for example, \cite{von2007tutorial} for a proof of this fact). Consequently, as the population degrees are the same for the two strong clusters, the values for this eigenvector is constant for nodes belonging to the strong clusters.

 The situation is different for the second population eigenvector. In the regularized case,  the second eigenvector is able to distinguish between these two clusters. However, this is not the case for the unregularized case. From Figure \ref{subfig:popunreg}, not even the third unregularized eigenvector is able to distinguish between the strong and weak clusters.   Indeed, it is only the fifth eigenvector that distinguishes between the two strong clusters in the unregularized case.


\begin{figure}[t!] 
 \centering
\subfigure[Unregularized]{\includegraphics[width=2in]{./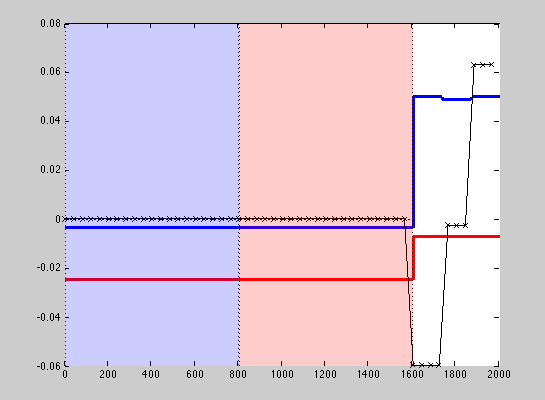} \label{subfig:popunreg}}
\quad
\subfigure[Regularized]{\includegraphics[width=2in]{./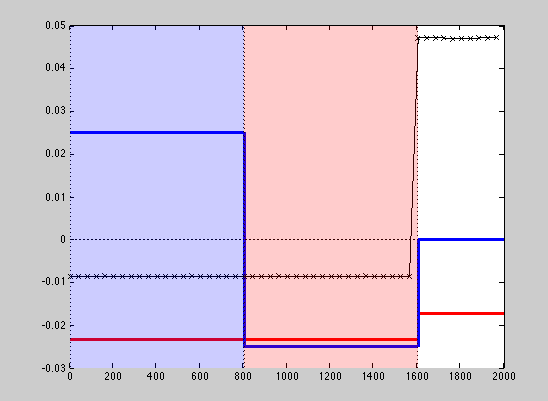} \label{subfig:popreg}}
\caption{First three population eigenvectors corresponding to $B_s$ and $B_{w}$ in \eqref{eq:swmats}.  In both plots, the x-axis provides the node indices while the y-axis gives the eigenvector values. The regularization parameter was taken to be $n$.  The shaded blue and pink regions corresponds to the nodes belonging to the two strong clusters.  The solid red line, solid blue line and $-\mathord{\times}-$ black lines correspond to the first, second and third population eigenvectors respectively.} \label{fig:popimp}
\end{figure}

\begin{figure}[ht!]
\centering
\subfigure[Unregularized]{\includegraphics[width=2in]{./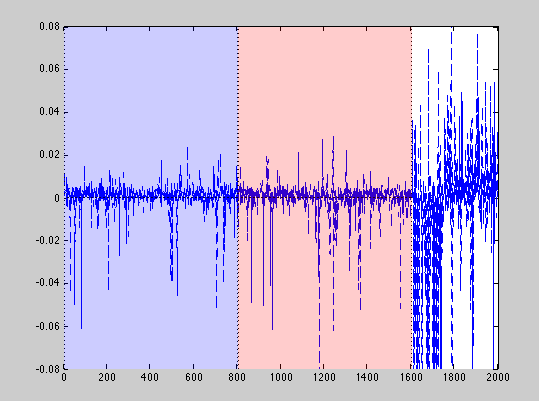}\label{subfig:sampunreg}}
\quad
\subfigure[Regularized]{\includegraphics[width=2in]{./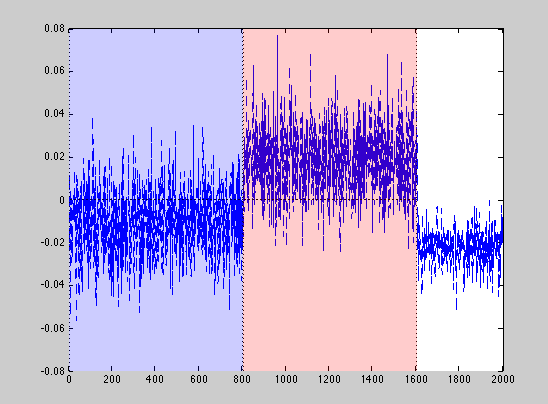}\label{subfig:sampreg}}

\caption[]{Second sample eigenvector corresponding to situation in  Figure \ref{fig:popimp}. As before, in both plots, the x-axis provides the node indices, while the y-axis gives the eigenvector values. As before, the shaded blue and pink regions corresponds to the nodes belonging to the two strong clusters. For plots (a) \& (b) the blue line correspond to the second eigenvector of the respective sample Laplacian matrices.
 }
\label{fig:on}
\end{figure}

In Figure \ref{subfig:sampunreg} and \ref{subfig:sampreg} we show the second sample eigenvector for the two cases in Figure \ref{subfig:popunreg} and \ref{subfig:popreg}. Note, we do not show the first sample eigenvector since from Figure \ref{subfig:popunreg} and \ref{subfig:popreg}, the corresponding population eigenvectors are not able to distinguish between the two strong clusters. As expected, it is only for the regularized case that one sees  that the second eigenvector is able to do a good job in separating the two strong clusters. Running $K$-means, with $k = 2$, resulted in a mis-classification of $49\%$ of the nodes in the strong clusters in the unregularized case, compared with $16.25\%$ in the regularized case.

\section{ $DKest$ : Data dependent choice of $\tau$} \label{sec:simresults}


The results Sections \ref{sec:examples} and   \ref{sec:selhigh} theoretically examined the gains from regularization for large values of regularization parameter $\tau$. Those results do not rule out the possibility that intermediate values of $\tau$ may lead to better clustering performance. In this section we propose a data dependent scheme to select the regularization parameter. We  compare it with the scheme in \cite{chenJSM} that uses the Girvan-Newman modularity \cite{bickel2009nonparametric}. 
We use the widely used normalized mutual information criterion (NMI) \cite{chen2012fitting}, \cite{yao2003information} to quantify the performance of the spectral clustering algorithm in terms of closeness of the estimated clusters to the true clusters. 

 Our scheme works by directly estimating the quantity in  \eqref{eq:daviskahanbound} in the following manner: For each $\tau$ in grid, an estimate $\hat{\ml}_\tau$ of $\ml_\tau$ is obtained using clusters outputted from the RSC-$\tau$ algorithm.  In particular, let $\hat C_{1, \tau},\, \ldots,\, \hat C_{K,\, \tau}$ be the estimates of the clusters $C_1,\ldots, C_K$ produced from running RSC-$\tau$.  The estimate $\hat{\ml}_\tau$ is taken as the population regularized Laplacian corresponding to 
 an estimated block probability matrix $\hat B$ and clusters $\hat C_{1,\, \tau},\, \ldots,\, \hat C_{K,\, \tau}$. More specifically,  the $(k_1,\, k_2)$-th entry of $\hat B$  is taken as
 \begin{equation}
   \hat B_{k_1,\, k_2} = \frac{\sum_{i \in \hat C_{k_1, \tau}, \,\,\,  j \in \hat C_{k_2, \tau}} A_{ij}}{|\hat C_{k_1,\tau}| | \hat C_{k_2,\tau }|} \label{eq:hatbprop}
 \end{equation}
 The above is simply the proportion of  edges between the nodes in the cluster estimates  $\hat C_{k_1, \tau}$ and $\hat C_{k_2,\tau}$.  The following statistic is then considered:
 \begin{equation}
\text{$DKest_\tau$} =  \frac{\|L_\tau - \hat{\ml}_\tau\|}{\mu_K\left(\hat{\ml}_\tau\right)}, \label{eq:dkeststatest}
 \end{equation}
where $\mu_K\left(\hat{\ml}_\tau\right)$ denotes the  the $K$-th smallest eigenvalue of $\hat{\ml}_\tau$. The $\tau$ that minimizes the $DKest_\tau$ criterion is then chosen.  Since this criterion provides an estimate of the Davis-Kahan bound, we call it the \textit{DKest} criterion.


We compare the above to the scheme that uses  Girvan-Newman modularity \cite{bickel2009nonparametric}, \cite{newman2004finding}, as suggested 
in \cite{chenJSM}. For a particular $\tau$ in the grid   the Girvan-Newman modularity is computed for the clusters outputted using the RSC-$\tau$ Algorithm. The $\tau$ that maximizes the modularity value over the grid is then chosen.


Notice that the best possible choice of $\tau$  would be the one that simply maximizes the NMI over the selected grid. However, this cannot be computed in practice since calculation of the NMI requires knowledge of the true clusters. Nevertheless, this provides a useful benchmark against which one can compare the other two schemes. We call this the `oracle' scheme.

\begin{figure*}[ht!]
\begin{center}$
\begin{array}{cc}
\includegraphics[width=2.4in, height = 2.2in]{./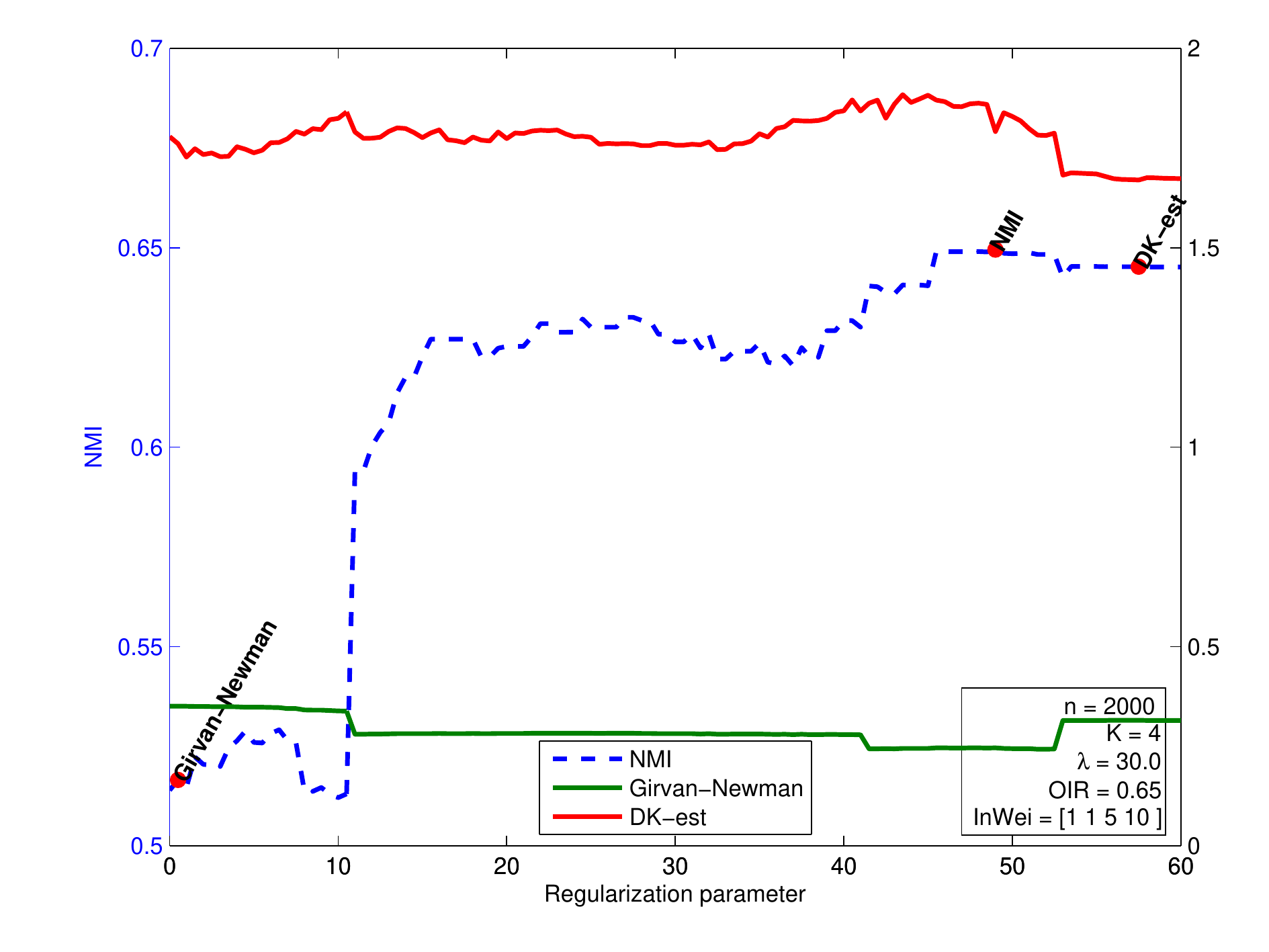}
&
\includegraphics[width=2.4in, height = 2.2in]{./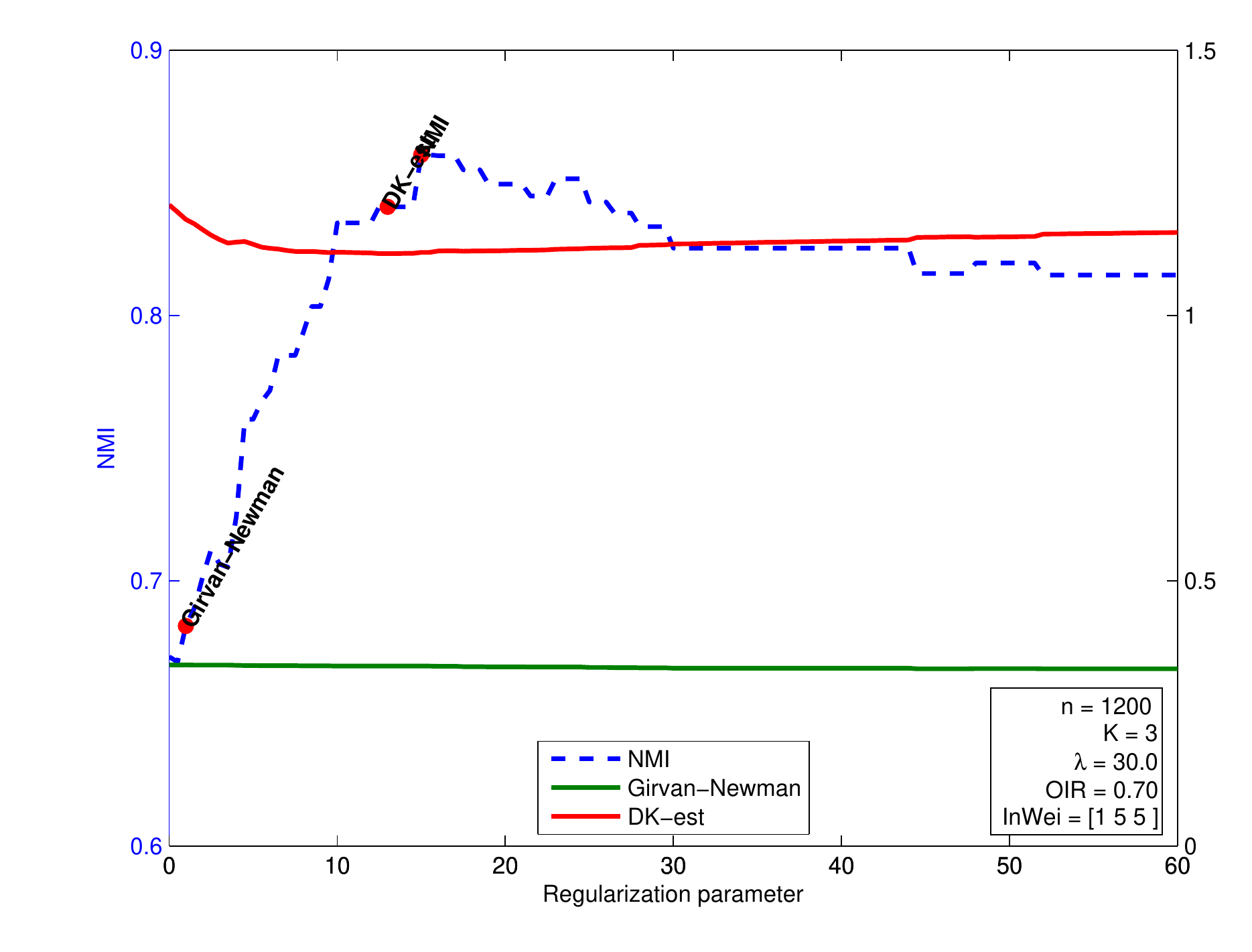}\\
\includegraphics[width=2.4in, height = 2.2in]{./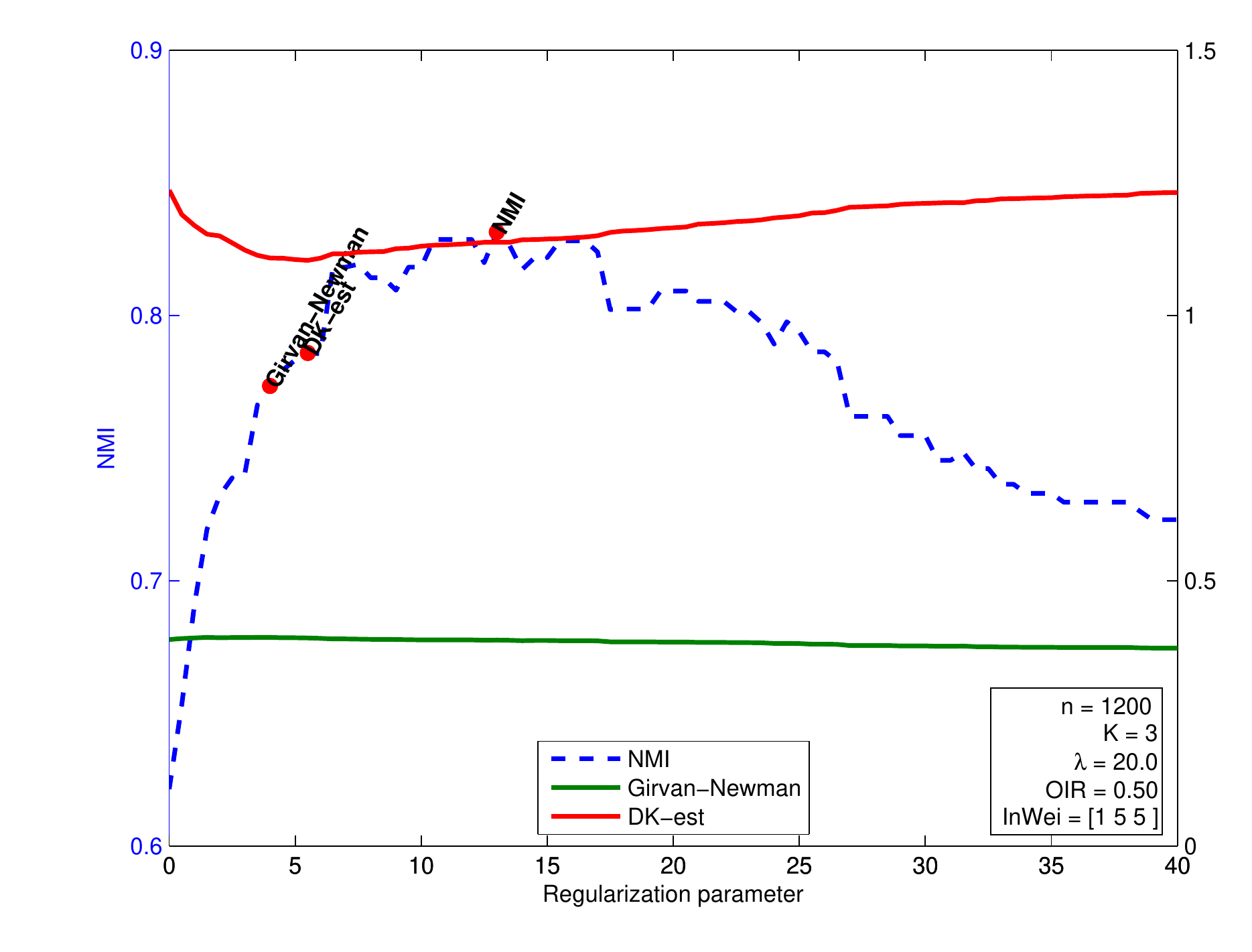}
&
\includegraphics[width=2.4in, height = 2.2in]{./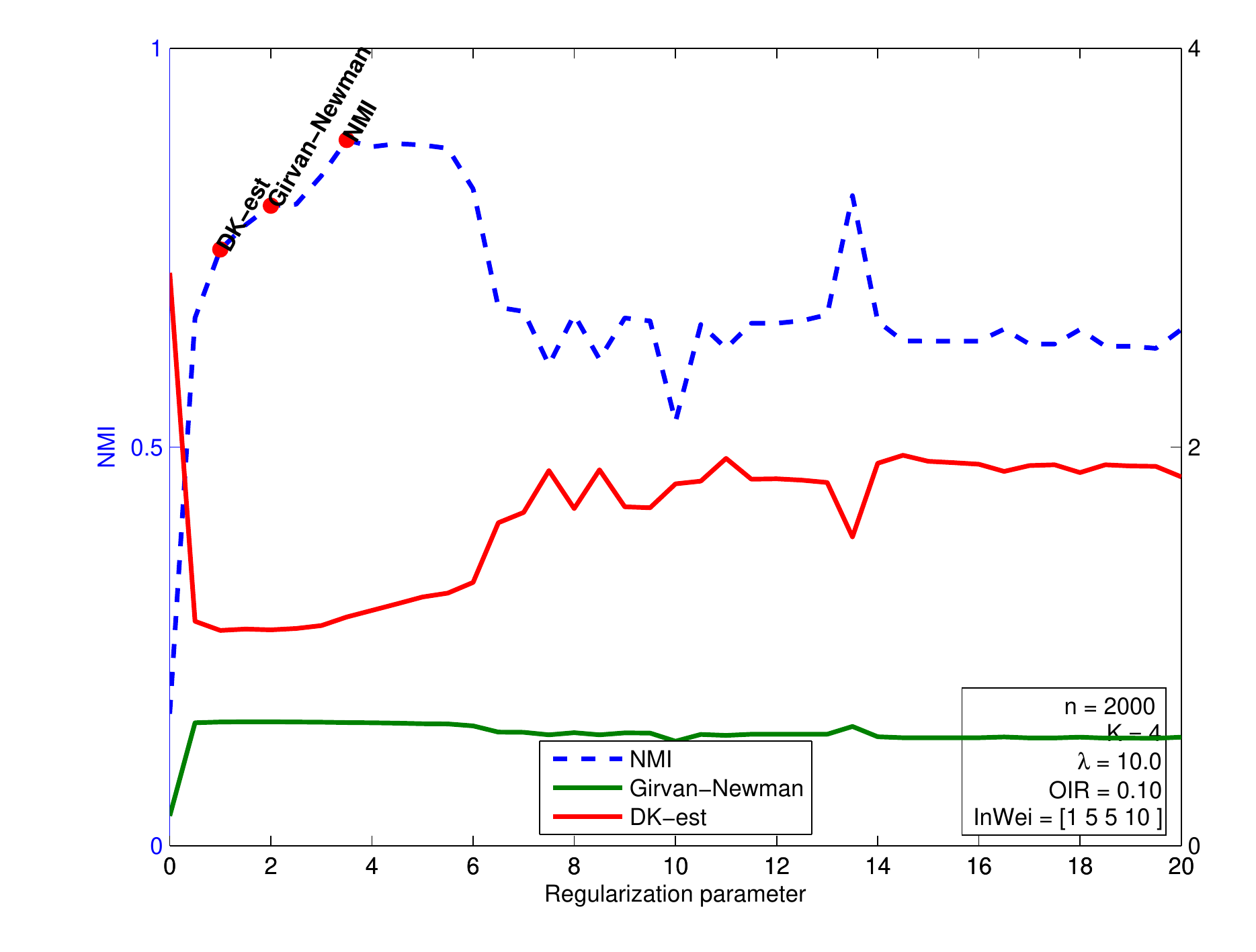}
\end{array}$
\end{center}
\caption[]{ Performance of spectral clustering as a function of $\tau$ for stochastic block model for $\lambda$ values of $30,\, 20$ and $10$. In the plots we denote $\beta$ and $w$ as $\textsf{OIR}$ and $\textsf{InWei}$ respectively.  The right $y$-axis provides values for the Girvan-Newman modularities and \textit{DKest} functions, while the left $y$-axis provides values for  the normalized mutual information (NMI). The 3 labeled dots correspond to values of the NMI at $\tau$ values which minimizes the \textit{DKest}, and maximizes the Girvan-Newman modularity and the NMI. Note,  the oracle $\tau$, or the $\tau$  that maximizes the NMI, cannot be calculated in practice. 
}
\label{fig:plotcomp2}
\end{figure*}

\subsection{Simulation Results} 
Figure \ref{fig:plotcomp2} provides results comparing the three schemes, viz. $DKest$, Girvan-Newman and `oracle' schemes. We perform simulations following the pattern of  \cite{chen2012fitting}. In particular, for a graph with $n$ nodes we take the $K$ clusters to be of equal sizes. The $K \times K$ block probability matrix is taken to be of the form
\begin{equation*}
\mb = \textsf{fac}\left( \begin{array}{cccc}
\beta w_1 & 1 & ... & 1  \\
1 & \beta w_2  & ... & 1\\
...   & ...& ...&...\\
... & ...& 1 & \beta w_K
\end{array} \right). 
\end{equation*}
Here, the vector $w = (w_1,\ldots, w_K)$, which are the \textit{inside weights}, denotes the relative degrees of nodes within the communities. Further, the quantity $\beta$, which is the \textit{out-in ratio}, represents the ratio of the probability of an edge between nodes from different communities to that of probability of edge between nodes in the same community.  The scalar parameter $\textsf{fac}$ is chosen so that the average expected degree of the graph is equal to $\lambda$. 

Figure \ref{fig:plotcomp2} compares the two methods of choosing the best $\tau$ for various choices of $n,\, K,\, \beta, \, w$ and $\lambda$. In general, we see that the $DKest$ selection procedure performs at least as well, and in some cases much better, than the procedure that used the Girvan-Newman modularity.  The performance of the two methods is much closer when the average degree is small.


\subsection{Analysis of the Political Blogs dataset} \label{subsec:realanal}

\begin{figure*}[ht!]
\begin{center}$
\begin{array}{c}
\includegraphics[width=3in]{./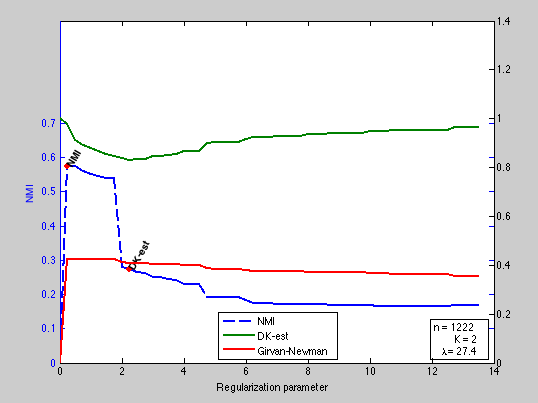}
\end{array}$
\end{center}
\caption[]{ Performance of the three schemes for the political blogs data set \cite{adamic2005political}.
}
\label{fig:polblog}
\end{figure*}

Here we investigate the performance of $DKest$ on the well studied network of political blogs \cite{adamic2005political}. The data set aims to study the degree of interaction between liberal and conservative blogs over a period prior to the 2004 U.S Presidential Election. The nodes in the networks are select conservative and liberal blog sites. While the original data set had directed edges corresponding to hyperlinks between the blog sites, we converted it to an undirected graph by connecting two nodes with an edge if there is at least one hyperlink from one node to the other.

 \begin{figure}[ht!]
\centering
\subfigure[Unregularized]{\includegraphics[width=2in]{./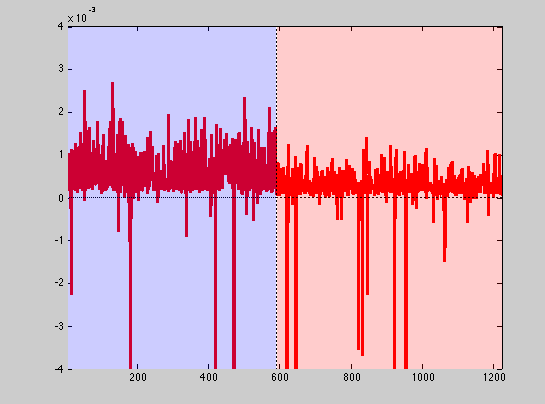}\label{subfig:polunreg}}
\quad
\subfigure[Regularized ($\tau = 2.25$)]{\includegraphics[width=2in]{./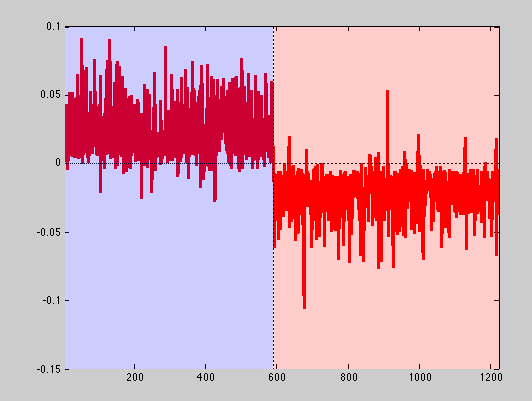}\label{subfig:polreg}}

\caption[]{Second eigenvector of the unregularized and regularized Laplacians for the political blogs data set \cite{adamic2005political}. The shaded blue and pink regions corresponds to the nodes belonging to the liberal and conservative blogs respectively. 
 }
\label{fig:pol}
\end{figure}

The data set has 1222 nodes with an average degree of 27. Spectral clustering ($\tau =0$) resulted in only 51\% of the nodes correctly classified as liberal or conservative.  The oracle procedure, with $\tau = 0.5$, resulted in 95\% of the nodes correctly classified. The \textit{DKest} procedure selected $\tau = 2.25$, with an accuracy of 81\%.  The Girvan-Newman (GN) procedure, in this case, outperforms the \textit{DKest} procedure providing the same accuracy as the oracle procedure. Figure \ref{fig:polblog} illustrates these findings.  As predicted by our theory, the performance becomes insensitive for large $\tau$. In this case 70\% of the nodes are correctly clustered for large $\tau$. 

We remark that the \textit{DKest} procedure does not perform as well as the GN procedure most likely because our estimate $\hat{\ml}_\tau$ in \eqref{eq:dkeststatest} assumes that the data is generated from an SBM, which is a poor model for the data due to the large heterogeneity in the node degrees. A better model for the data would be the degree corrected stochastic block model (D-SBM) proposed by \citet{karrer2011stochastic}. If we use D-SBM based estimaes in $DKest$ then the selection of $\tau$ matches that of the GN Newman and the oracle procedure. See Section \ref{sec:discuss} for a discussion on this.  

\begin{figure}[ht!]
\centering
\includegraphics[width=2in]{./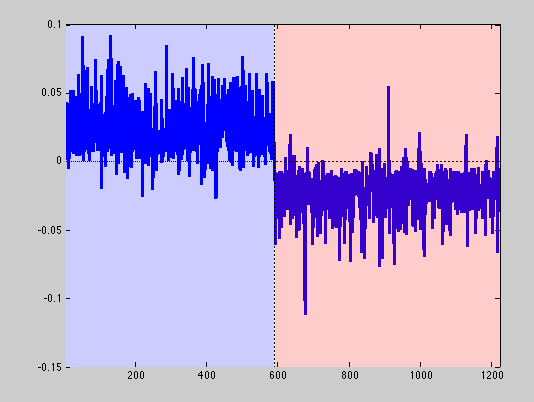}\label{subfig:thirdunreg}
\caption[]{Third eigenvector of the unregularized  Laplacian.
 }
\label{fig:polthird}
\end{figure}

The results of Section \ref{sec:selhigh} also explain why unregularized spectral clustering performs badly (see Figure \ref{fig:polblogprob}).  The first eigenvector in both cases (regularized and unregularized) does not discriminate between the two clusters. In Figure \ref{fig:pol}, we plot the second eigenvector of the regularized and unregularized Laplacians.  The second eigenvector is able to discriminate between the clusters in the regularized case, while it fails to do so in without regularization. Indeed, it is only the third eigenvector in the unregularized case that distinguishes between the clusters, as shown in Figure \ref{fig:polthird}.


\section{Discussion} \label{sec:discuss}


 The paper provides a theoretical justification for regularization. In particular, we show why choosing a large regularization parameter can lead to good results. The paper also partly explains  empirical findings in \citet{chen2012fitting} showing that the performance of regularized spectral clustering becomes insensitive  for larger values of regularization parameters.  It is unclear at this stage whether the benefits of regularization, resulting from the trade-offs between the eigen gap and the concentration bound, hold for the regularization in \cite{chaudhuri2012spectral}, \cite{qin2013regularized} as
 they hold for the regularization in \citet{chen2012fitting} (as  demonstrated in Sections \ref{sec:examples} and  \ref{sec:selhigh}).
 
 Even though our theoretical results focus on larger values of the regularization parameter it is very likely that intermediate values of $\tau$  produce better clustering performance. Consequently, we propose a data-driven methodology for choosing the regularization parameter. We hope to quantify theoretically the gains from using intermediate values of the regularization parameter in a future work.

 For the extension of the SBM proposed in Section \ref{sec:selhigh}, if the rank of $\mb$, given by \eqref{eq:bdiagbig}, is $K$ then the model encompasses specific degree-corrected stochastic block models (D-SBM) \cite{karrer2011stochastic} where  the edge probability matrix takes the form
$$P = \Theta Z \mb Z' \Theta.$$
 Here $\Theta = diag(\theta_1,\, \ldots,\, \theta_n)$ models the heterogeneity in the degrees. In particular, consider a $K$-block D-SBM with $0< \theta_i \leq 1$,  for each $i$.  Assume that $\theta_i =1$ for the most of the nodes. Take the nodes in the strong clusters to be those with $\theta_i = 1$. The nodes in the strong clusters are associated to one of $K$ clusters depending on the cluster they belong to in the D-SBM. The remaining nodes are taken to be in the weak clusters. Assumptions \eqref{eq:smallvsweak} and  \eqref{eq:smallvsweak2}  puts constraints on the $\theta_i$'s which allows one to distinguish between the strong clusters via regularization. It would be interesting to investigate  the effect of regularization in more general versions of the D-SBM, especially where there are high as well as low degree nodes. 
 
 The $DKest$ methodology for choosing the regularization parameter  works by providing  estimates of the population Laplacian assuming that the data is drawn from an SBM. From our simulations, it is seen that the performance of $DKest$ does not change much if we take the matrix norm in the numerator of \eqref{eq:dkeststatest} to be the Frobenius norm, which is much faster to compute. 
 
  It is seen that the performance of $DKest$ improves for the political blogs data set by taking $\hat \ml_\tau$ to be the estimate assuming that the data is drawn from the more flexible D-SBM.  Indeed, if we take $\hat \ml_\tau$ to be such an estimate  then the performance of $DKest$ is seen to be as good as the oracle scheme (and the GN scheme) for this data set. We describe how we construct this estimate in  Appendix \ref{sec:extdkest}.  

\section*{Acknowledgments}
This paper is supported in part by NSF grants DMS-1228246 and DMS-1160319 (FRG), ARO grant W911NF-11-1-0114, NHGRI grant 1U01HG007031-01 (ENCODE), and the Center of Science of Information (CSoI), a US NSF Science and Technology Center, under grant agreement CCF-0939370.  A. Joseph would like to thank Sivaraman Balakrishnan and Puramrita Sarkar for some very helpful discussions, and also Arash A. Amini for sharing the code used in the work \cite{chen2012fitting}.

\appendices


\section{Analysis of SBM with $K$ blocks} \label{sec:needtoverify}
Throughout this section we assume that we have samples from a $K$ block SBM. Denote the sample and population regularized Laplacian as $L_\tau,
\, \ml_\tau$ respectively. For ease of notation, we remove the subscript $\tau$ from the various matrices such as $L_\tau,\, \ml_\tau,\, A_\tau, D_\tau,\, \md_\tau$. We also remove the subscript $\tau$ in the  $\hatdt{i}, \sdt{i}$'s and denote these as $\hatd_i,\,\sd_i$ respectively.  However, in some situations we may need to refer to these quantities at $\tau=0$. In such cases, we make this clear by writing them as $\hatd_{i,0}$, for $i = 1,\ldots,n$ and $\sdgen_{i,0}$ for $i = 1,\ldots, n$.

We need probabilistic bounds on the weigthed sum of Bernoulli random variables. 
The following lemma is proved in  \cite{barron2010ajoseph}.

\begin{lem}\label{lem:bernoullisums} Let $W_j$, $1 \leq j \leq N$ be $N$  independent $\mbox{Bernoulli}(r_j)$ random variables. Furthermore, let $\alpha_j,\, 1\leq j \leq N$ be non-negative weights that sum to 1 and let $ N_\alpha = 1/\max_{j}\alpha_j$.
Then the weighted sum $\hat{r} = \sum_{j}\alpha_j W_j$, which has mean given by $r^* = \sum_{j} \alpha_j r_j$, satisfies the following large deviation inequalities.
For any $r$ with $0< r < r^*$,
\begin{equation}
P(\hat{r} < r ) \leq \exp\left\{-N_\alpha D(r\|r^*)\right\}
\end{equation}
and for any $\tilde{r}$ with  $r^* < \tilde{r} < 1$,
\begin{equation}
P(\hat{r} > \tilde{r} ) \leq \exp\left\{-N_\alpha D(\tilde{r}\|r^*)\right\}
\end{equation}
where $D(r\|r^*)$ denotes the relative entropy between Bernoulli random variables of success parameters $r$ and $r^*$.
\end{lem}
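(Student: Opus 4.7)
The plan is to apply the standard Cramér--Chernoff machinery and then compress the resulting rate function to the claimed Bernoulli relative-entropy bound via joint convexity of $D(\cdot\|\cdot)$.

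For the upper tail, Markov's inequality applied to $e^{\lambda \hat r}$ combined with independence of the $W_j$ yields
\begin{equation*}
P(\hat r > \tilde r) \le e^{-\lambda \tilde r} \prod_{j=1}^{N} \bigl(1 - r_j + r_j e^{\lambda \alpha_j}\bigr)
\end{equation*}
for every $\lambda > 0$, so $P(\hat r > \tilde r) \le e^{-\Lambda^*(\tilde r)}$, where $\Lambda^*$ is the Legendre transform of $\Lambda(\lambda) = \sum_j \log(1 - r_j + r_j e^{\lambda \alpha_j})$. A short first-derivative calculation shows that the Legendre transform of the single term $\log(1 - r_j + r_j e^{\lambda \alpha_j})$, evaluated at $x_j \in [0,\alpha_j]$, equals $D(x_j/\alpha_j \,\|\, r_j)$. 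Using the fact that the Legendre transform of a sum of independent cumulant generating functions is the infimal convolution of the per-variable transforms, this yields the variational representation
\begin{equation*}
\Lambda^*(\tilde r) \;=\; \inf\Bigl\{\textstyle\sum_{j=1}^N D(y_j \,\|\, r_j) \,:\, y_j \in [0,1],\ \sum_{j=1}^N \alpha_j y_j = \tilde r\Bigr\}.
\end{equation*}

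The key step is then to lower bound this infimum by $N_\alpha D(\tilde r\,\|\, r^*)$. For any feasible $\{y_j\}$, joint convexity of the Bernoulli KL divergence together with $\sum_j \alpha_j = 1$ implies
\begin{equation*}
D(\tilde r\,\|\, r^*) \;=\; D\!\left(\textstyle\sum_j \alpha_j y_j \,\|\, \sum_j \alpha_j r_j\right) \;\le\; \sum_j \alpha_j D(y_j \,\|\, r_j) \;\le\; \bigl(\max_j \alpha_j\bigr)\sum_j D(y_j \,\|\, r_j) \;=\; \frac{1}{N_\alpha}\sum_j D(y_j \,\|\, r_j),
\end{equation*}
which rearranges to $\sum_j D(y_j\,\|\, r_j) \ge N_\alpha D(\tilde r\,\|\, r^*)$. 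Taking the infimum over feasible $\{y_j\}$ and combining with the Chernoff step completes the upper-tail bound. For the lower tail, I would transform by $W_j' = 1 - W_j \sim \mathrm{Bernoulli}(1 - r_j)$: the event $\{\hat r < r\}$ then coincides with $\{\sum_j \alpha_j W_j' > 1 - r\}$, whose mean equals $1 - r^*$, and applying the upper-tail inequality together with the symmetry $D(1-r\,\|\, 1 - r^*) = D(r\,\|\, r^*)$ yields the claim.

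The main obstacle, modest but essential, is recognising that after the routine Cramér--Chernoff reduction, joint convexity of $D(\cdot\|\cdot)$ is exactly the ingredient needed to collapse the per-coordinate sum $\sum_j D(y_j\|r_j)$ into the aggregated Bernoulli KL at $(\tilde r, r^*)$; the factor $N_\alpha$ then falls out of the trivial inequality $\sum_j \alpha_j a_j \le (\max_j \alpha_j)\sum_j a_j$ for non-negative $a_j$. Everything else — the per-variable Legendre transform, the infimal-convolution identity, and the lower-tail symmetry — is routine.
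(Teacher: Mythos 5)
The paper itself does not prove this lemma: it is imported verbatim from \citet{barron2010ajoseph}, so there is no in-paper argument to compare yours against. Judged on its own terms, your Cram\'er--Chernoff route is correct. The per-variable conjugate computation checks out (writing $u=e^{\lambda\alpha_j}$ and optimizing gives $f_j^*(x)=D(x/\alpha_j\,\|\,r_j)$ for $x\in[0,\alpha_j]$), the joint convexity of $(p,q)\mapsto D(p\|q)$ combined with $\sum_j\alpha_j=1$ and $D\ge 0$ delivers exactly $\sum_j D(y_j\|r_j)\ge N_\alpha D(\tilde r\|r^*)$ on the feasible set, and the complementation $W_j'=1-W_j$ together with $D(1-r\,\|\,1-r^*)=D(r\,\|\,r^*)$ disposes of the lower tail.

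The one step you should not dismiss as routine is the identity $\Lambda^*=\square_j f_j^*$. The elementary direction of conjugate duality gives $\Lambda^*(\tilde r)\le\inf\{\sum_j f_j^*(x_j):\sum_j x_j=\tilde r\}$, which is the \emph{wrong} inequality for your purposes; the direction you actually use, $\Lambda^*(\tilde r)\ge\inf\{\sum_j D(y_j\|r_j):\sum_j\alpha_j y_j=\tilde r\}$, is the nontrivial half of Fenchel duality. It does hold here because each $f_j$ is finite, smooth and convex on all of $\mathbb{R}$, but it is cleaner to bypass the duality theorem altogether: since $\Lambda'$ is continuous and increasing with $\Lambda'(0)=r^*$, for any $\tilde r\in(r^*,1)$ in the range of $\Lambda'$ there is $\lambda^*$ with $\Lambda'(\lambda^*)=\tilde r$, and setting $y_j=f_j'(\lambda^*)/\alpha_j$ gives a feasible point with $\Lambda^*(\tilde r)=\sum_j\bigl(\lambda^* \alpha_j y_j-f_j(\lambda^*)\bigr)=\sum_j D(y_j\|r_j)$, because $\lambda^*$ is simultaneously the optimizer in each $f_j^*$ at $x_j=f_j'(\lambda^*)$. (If $\tilde r$ exceeds the supremum of $\Lambda'$, which can only happen when some $r_j=0$, then $\Lambda^*(\tilde r)=+\infty$ and the bound is vacuous.) With that patch the argument is complete.
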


The following is an immediate corollary of the above.
\begin{cor} \label{cor:bddwj}
	Let $W_j$ be as in Lemma \ref{lem:bernoullisums}. Let $\beta_j$, for $j = 1,\ldots,N$ be non-negative weights, and let 
	$$W = \sum_{j= 1}^N \beta_j W_j.$$
Then,
\begin{equation}
P\left( W - E(W) > \delta \right) \leq \exp\left\{ -\frac{1}{2\max_j \beta_j } \frac{\delta^2}{(E(W) + \delta)}        \right\} \label{eq:wexpb1}
\end{equation}

and
\begin{equation}
P\left( W - E(W) < -\delta \right) \leq \exp\left\{ -\frac{1}{2\max_j \beta_j } \frac{\delta^2}{E(W)}  \right\}\label{eq:wexpbd2}
\end{equation}
 
\end{cor}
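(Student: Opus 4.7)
The plan is to reduce to Lemma \ref{lem:bernoullisums} by rescaling the weights $\beta_j$ to sum to 1, and then to invoke standard lower bounds on the Bernoulli relative entropy to convert the KL-form tail into the stated Bernstein-form tail.

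First I would set $S = \sum_{j=1}^N \beta_j$ and define $\alpha_j = \beta_j/S$, so that the $\alpha_j$'s are non-negative weights summing to 1. With this rescaling, $\hat r = \sum_j \alpha_j W_j = W/S$ with mean $r^* = E(W)/S$, and $N_\alpha = 1/\max_j \alpha_j = S/\max_j \beta_j$. The event $\{W - E(W) > \delta\}$ is then $\{\hat r > r^* + \delta/S\}$, and analogously for the lower tail. Applying Lemma \ref{lem:bernoullisums} with $\tilde r = r^* + \delta/S$ (respectively $r = r^* - \delta/S$) gives
\begin{equation*}
P(W - E(W) > \delta) \leq \exp\bigl\{-N_\alpha D(r^* + \delta/S \,\|\, r^*)\bigr\},
\end{equation*}
and the analogous inequality with $D(r^* - \delta/S \,\|\, r^*)$ for the lower tail.

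Next I would use the classical Bernoulli relative-entropy lower bounds
\begin{equation*}
D(q+t\,\|\,q)\ \geq\ \frac{t^2}{2(q+t)},\qquad D(q-t\,\|\,q)\ \geq\ \frac{t^2}{2q},
\end{equation*}
valid for admissible $t>0$. These are standard (they follow from a second-order Taylor expansion of $D(\cdot\|q)$ with the appropriate sign in the remainder, or equivalently from the Chernoff-Hoeffding derivation of Bennett's inequality). Substituting $q = r^*$ and $t = \delta/S$ yields
\begin{equation*}
N_\alpha D(r^* + \delta/S\,\|\,r^*) \geq \frac{S}{\max_j \beta_j}\cdot\frac{\delta^2/S^2}{2(E(W)/S+\delta/S)}=\frac{1}{2\max_j\beta_j}\cdot\frac{\delta^2}{E(W)+\delta},
\end{equation*}
which after exponentiating gives \eqref{eq:wexpb1}. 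The identical computation with the lower-tail bound $t^2/(2q)$ produces \eqref{eq:wexpbd2}.

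I do not anticipate a genuine obstacle here: the only step beyond a mechanical rescaling is quoting the two elementary relative-entropy inequalities, which are standard and can be verified in a line by differentiating $f(t) = D(q+t\|q)$ twice and bounding the second derivative. The proof therefore amounts to (i) normalize weights, (ii) invoke Lemma \ref{lem:bernoullisums}, (iii) replace the KL divergence by its quadratic lower bound, (iv) collect the $S$ factors to recover exactly the constants in the statement.
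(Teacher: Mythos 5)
Your proposal is correct and follows essentially the same route as the paper: normalize the weights, apply Lemma \ref{lem:bernoullisums}, and replace the relative entropy by a quadratic lower bound (the paper uses the single inequality $D(r\|r^*)\geq (r-r^*)^2/(2r)$, which yields both tails, whereas you quote the two one-sided bounds separately, but the computation is identical). No gaps.
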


\begin{proof} Here we use the fact that 
\begin{equation}
D(r || r^*) \geq (r - r^*)^2/(2r), \label{eq:klbdd}
\end{equation}
 for any $0 < r,\, r^* < 1$.
We prove \eqref{eq:wexpb1}. The proof of \eqref{eq:wexpbd2} is similar. The event under consideration may be written as
$$\{ \hat r - r^* > \tilde{\delta}\},$$
where $\hat r = W/\sum_{j} \beta_j$, \, $r^* = E(W)/\sum_{j} \beta_j$ and $\tilde{\delta} = \delta/\sum_{j} \beta_j$. 
Correspondingly, using Lemma \ref{lem:bernoullisums} and \eqref{eq:klbdd}, one gets that 
$$P\left( W - E(W) > \delta \right)  \leq \exp\left\{ -\frac{\sum_{j} \beta_j}{\max_j \beta_j} \frac{\tilde \delta^2}{2 (r^* + \tilde \delta)}\right\}.$$
Substituting the values of $\tilde \delta$ and $r^*$ results in bound \eqref{eq:wexpb1}.
\end{proof}


The following lemma provides high probability bounds on the degree. 
Let $\taun = \max\{\mind,\, c\log n\}$ and $\delta_{i,c} = \max\{\sd_{i,0},\, c\log n\}$.

\begin{lem} \label{lem:highprobbdd} On a set $E_1$ of probability at most $1 - 2/n^{c_1 - 1}$, one has
$$|\hatdt{i}  - \sdt{i}|  \leq c_2 \sqrt{\delta_{i,c} \log n}  \quad \text{for each $i = 1,\ldots,n$}.,$$
where $c_1 = .5c_2^2/(1 + c_2/\sqrt{c})$.
\end{lem}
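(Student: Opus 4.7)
The plan is to recognize that $\hatdt{i}-\sdt{i} = \hatd_i - \sd_i = \sum_{j=1}^n (A_{ij} - P_{ij})$, since the regularization shift $\tau$ cancels in the difference. This reduces the claim to a concentration inequality for a sum of independent Bernoulli random variables, which is exactly what Corollary \ref{cor:bddwj} provides. I would apply that corollary to $W = \hatd_i$ with unit weights $\beta_j = 1$, so that $\max_j \beta_j = 1$ and $E(W) = \sd_i$, yielding
$$P(\hatd_i - \sd_i > \delta) \leq \exp\left\{-\frac{\delta^2}{2(\sd_i + \delta)}\right\},\qquad P(\hatd_i - \sd_i < -\delta) \leq \exp\left\{-\frac{\delta^2}{2\sd_i}\right\}.$$

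Next I would substitute $\delta = c_2\sqrt{\delta_{i,c}\log n}$ and show that each exponent is at least $c_1 \log n$. The crucial algebraic observation is that $\delta_{i,c} = \max\{\sd_{i,0},\, c\log n\} \geq c\log n$, which gives $\log n \leq \delta_{i,c}/c$, and hence
$$\sqrt{\delta_{i,c}\log n} \leq \frac{\delta_{i,c}}{\sqrt{c}}.$$
Combined with $\sd_i \leq \delta_{i,c}$, the denominator in the upper-tail exponent satisfies $\sd_i + \delta \leq \delta_{i,c}(1 + c_2/\sqrt{c})$, so that the upper-tail exponent is at least
$$\frac{c_2^2\,\delta_{i,c}\log n}{2\delta_{i,c}(1 + c_2/\sqrt{c})} = c_1 \log n,$$
with $c_1 = 0.5\, c_2^2/(1+c_2/\sqrt{c})$ as stated. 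The lower-tail bound is at least as large since its denominator $\sd_i$ is no bigger than $\sd_i + \delta$.

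A union bound over $i = 1,\ldots,n$ and the two tails then yields a total failure probability of at most $2n\cdot n^{-c_1} = 2/n^{c_1 - 1}$, proving the lemma. The main obstacle is essentially the bookkeeping in the algebraic step above, where one must make sure the truncation at $c\log n$ in the definition of $\delta_{i,c}$ dominates the $\delta$ term in $\sd_i + \delta$; this truncation is precisely what prevents the bound from degenerating when $\sd_{i,0}$ is very small (i.e., for low-degree nodes), and is the reason the definition of $\delta_{i,c}$ uses a maximum with $c\log n$ rather than just $\sd_{i,0}$ itself.
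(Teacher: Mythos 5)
Your proposal is correct and follows essentially the same route as the paper: reduce $\hatdt{i}-\sdt{i}$ to the unregularized degree difference, apply Corollary \ref{cor:bddwj} with unit weights, bound $E(W)+\delta$ by $(1+c_2/\sqrt{c})\,\delta_{i,c}$, and union bound over the $n$ nodes and two tails. The only cosmetic difference is that you obtain that key bound in one line from $\log n \le \delta_{i,c}/c$ and $\sd_{i,0}\le \delta_{i,c}$, whereas the paper splits into the two cases $\sd_{i,0}\ge c\log n$ and $\sd_{i,0}< c\log n$.
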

\begin{proof}
Use the fact that $\hatdt{i} - \sdt{i} = \hatd_{i,0} - \sd_{i,0}$, and  
$$P(|\hatd_{i,0} - \sd_{i,0}|  \leq c_2 \sqrt{\delta_{i,c} \log n} \quad \forall i ) \leq \sum_{i = 1}^nP(|\hatd_{i,0} - \sd_{i,0}| \leq c_2 \sqrt{\delta_{i,c} \log n})$$
Notice that $\hatd_{i,0} = \sum_{j = 1}^n A_{ij}$.  Apply Corollary \ref{cor:bddwj} with $\beta_j = 1$ and $W_j = A_{ij}$, and $\delta = c_2 \sqrt{\taun \log n}$ to bound each term in the sum of the right side of the above equation.

The error exponent can be bounded by,
\begin{equation}
2n\exp\left\{ -\frac{1}{2 } \frac{\delta^2}{(E(W) + \delta)}        \right\} . \label{eq:errorexpon}
\end{equation}
We claim that,
\begin{equation}
E(W) + \delta \leq (1 + c_2/\sqrt{c}) \delta_{i,c}. \label{eq:bddewdel}
\end{equation}
Substituting the above bound in the error exponent \eqref{eq:errorexpon} will complete the proof.

To see the claim, notice that $E(W) = \sd_{i,0}$. Now, consider the case $\sd_{i,0}\geq c\log n$. 
In this case, $\delta_{i,c} = \sd_{i,0}$ and $\log n < \sd_{i,0}/c$. Correspondingly, $E(W) + \delta$ is at most $\sd_{i,0}(1 + c_2/\sqrt{c})$.

Next, consider the case $\sd_{0,i} < c\log n$. In this case $\delta_{i,c} = \taun$,  which is $c \log n$.  Consequently, 
$$E(W) + \delta \leq c\log n + c_2 \sqrt{c} \log n.$$
The right side of the above  can be bounded by $(1 + c_2/\sqrt{c})(c\log n)$. This proves the claim.
\end{proof}


\subsection{Concentration of Laplacian} \label{subsec:conclap}
 
Below we provide the proof of Theorem \ref{thm:someassump}. Throughout this section we assume that the quantities $c,\, c_2$ appearing in Lemma \ref{lem:highprobbdd} are given by $c = 32$ and $c_2 = 2\sqrt{2}$. Notice that this makes $c_1 > 2$, where $c_1$ as in Lemma \ref{lem:highprobbdd}. 

From Lemma \ref{lem:highprobbdd}, with probability at least $1 - n^{-1}$,
$$\max_i| \hatd_i - \sd_i|/\sd_i \leq \max_i c_2 \sqrt{\delta_{i,c} \log n }/\sd_i$$
We claim that the right side of the above is at most $1/2$. To see this notice that
\begin{align*}
\sqrt{\delta_{i,c} \log n }/\sd_i &\leq \sqrt{\delta_{i,c} \log n }/\delta_{i,c}\\
&= \sqrt{\log n}/\sqrt{\delta_{i,c} }\\
&\leq 1/\sqrt{c}
\end{align*}
Here the first inequality follows from noting that $\sd_i = \sd_{i,0} + \tau$, which is at most $\max\{\sd_{i,0},c\log n\}$, using $\tau \geq c\log n$. The third inequality follows from using $\delta_{i,c} \geq c\log n$. Consequently, 
$\max_i|\hatd_i -\sd_i|/\sd_i \leq 1/2$ using $c_2 = 2\sqrt{2}$ and $c = 32$.

\begin{proof}[Proof of Theorem \ref{thm:someassump}]
Our proof has parallels with the proof in \cite{oliveira2009concentration}.  Write $\tilde L = \md^{-1/2} A \md^{-1/2}$. Then,
$$\|L - \ml\| \leq \|L - \tilde L\| + \|\tilde L - \ml\|.$$
We first bound $\|L -\tilde L\|$.  Let $F = D^{1/2}\md^{-1/2}$. Then $\tilde L = F L F$. Correspondingly,
\begin{align}
\| L - \tilde L\| &\leq \|L -FL \| +\|FL -\tilde L\|\nonumber\\
&\leq \|I - F\|\|L \| + \|F\|\|L \|\|I - F\|\nonumber\\
& \leq \|I - F\| \left(2 + \| I - F\|\right) \label{eq:implater}
\end{align}
Notice that 
$$F - I = (I +( D - \md) \md^{-1} )^{1/2} - I.$$
Further, using $\max_i| \hatd_i - \sd_i|/\sd_i   \leq 1/2$, and the fact that $\sqrt{1+x} - 1\leq x$ for $x\in[-3/4,3/4]$, as in \cite{oliveira2009concentration},  one gets that
$$\|F - I\| \leq c_2 \frac{\max_i \sqrt{\delta_{i,c} \log n }}{\sd_i}$$
with high probability. 
Consequently, using \eqref{eq:implater}, one gets that
\begin{equation}
\| L - \tilde L\| \leq c_2\max_i \frac{ \sqrt{\delta_{i,c} \log n }}{\sd_i}\left(2 + c_2 \max_i\frac{ \sqrt{\delta_{i,c} \log n }}{\sd_i}\right) \label{eq:firstconclap}
\end{equation}
with probability at least $1 - 1/n^{c_1 - 1}$. 
$$\max_i\frac{ \sqrt{\delta_{i,c} }}{\sd_i} \leq \tepst = \begin{cases} \frac{1}{\sqrt{\mind +\tau}}, & \mbox{if } \tau \leq 2\maxd \\
&\\
 \frac{\sqrt{\maxd}}{\maxd + \tau/2}, & \mbox{if } \tau > 2\maxd \end{cases}$$
 To see this notice, that $\delta_{i,c} \leq \sd_{i,0} + \tau = \sd_i$, using $\max\{\tau, \sd_{i,0}\} \geq c\log n$. Consequently , $\sqrt{\delta_{i,c} }/\sd_i \leq 1/\sqrt{\sd_{i,0} + \tau}$. which is at most $1/\sqrt{\mind + \tau}$.
 
 Further, $$\max_i\frac{ \sqrt{\delta_{i,c} }}{\sd_i}  \leq  \frac{\sqrt{\maxd}}{\maxd + \tau}$$ for $\tau > \maxd$. This is atmost $\tepst$ for $\tau > \maxd$.
 
Consequently, from \eqref{eq:firstconclap}, one gets that 
\begin{align}
\|L - \tilde L\| \leq c_2\,\tepst \sqrt{\log n}\left(2 + c_2/\sqrt{c}\right) 
\label{eq:secondfirstconclap}
\end{align}
with probability at least $1 - 1/n^{c_1 - 1}$.



Next, we bound $\|\tilde L - \ml\|$.  We get high probability bounds on this quantity using results in \cite{oliveira2009concentration}, \cite{mackey2012matrix}. In particular, as in \cite{oliveira2009concentration}, 
$$\tilde L - \ml = \sum_{i \leq j}  Y_{ij},$$
where $Y_{ij} = \md^{-1/2} X_{ij} \md^{-1/2} $, with
$$X_{ij} = \begin{cases} (A_{ij} - \mpp_{ij}) \left(e_i e_j^T + e_j e_i^T\right), & \mbox{if } i \neq j \\ (A_{ij} - \mpp_{ij}) e_i e_i^T  & \mbox{if } i = j \end{cases}.$$
Further, $\| Y_{ij}\| \leq 1/(\mind + \tau)$. Let $\sigma^2 = \|\sum_{i \leq j} E(Y_{ij}^2)\|$. We claim that $\sigma^2 \leq \tepst^2$. As in \cite{oliveira2009concentration}, page 15, notice that,
\begin{equation}
\sum_{i \leq j} E(Y_{ij}^2) = \sum_{i  = 1}^n \frac{1}{\sd_{i,0} + \tau} \left(\sum_{j = 1}^n \frac{P_{ij}(1 - P_{ij})}{\sd_{j,0} + \tau}\right)e_i e_i^T. \label{eq:oliveconcmain}
\end{equation}
 Clearly, 
\begin{align*}
\left(\sum_{j = 1}^n \frac{P_{ij}(1 - P_{ij})}{\sd_{j,0} + \tau}\right)  \leq \frac{\sd_{i,0}}{\mind + \tau}.
\end{align*}
Consequently, for each $i$ the right side of \eqref{eq:oliveconcmain} is at most $1/(\mind + \tau)$ leading to the fact that $\sigma^2 \leq 1/(\mind +\tau)$.

For $\tau > 2\maxd$ we can get improvements in the bound for $\sigma^2$. By using the fact that $\sd_{j,0} + \tau > \maxd + \tau/2$ for $\tau > 2\maxd$, one gets that 
\begin{align*}
\left(\sum_{j = 1}^n \frac{P_{ij}(1 - P_{ij})}{\sd_{j,0} + \tau}\right)  \leq \frac{\sd_{i,0}}{\maxd + \tau/2}.
\end{align*}
for $\tau > 2\maxd$. Consequently, using $\sd_{i,0}/(\sd_{i,0} + \tau) \leq \maxd/(\maxd + \tau)$, one gets that $\sigma^2 \leq \maxd/(\maxd + \tau/2)^2$ for $\tau > 2\maxd$.

Applying Corollary 4.2 in \cite{mackey2012matrix} one gets
$$P\left(\|\tilde L_0 - \ml_0\| \geq t\right) \leq n e^{-t^2/2\sigma^2}.$$ Consequently, with probability at least $1 - 1/n^{c_1 - 1}$ one has,
$$\|\tilde L - \ml\| \leq \sqrt{\frac{2c_1\log n}{\mind}}.$$
Thus, with probability at least $1 - 1/n^{c_1 - 1}$, one has
\begin{equation}
\|\tilde L - \ml\| \leq \sqrt{2c_1 \log n}\,\, \tepst. \label{eq:secondsecondconclap}
\end{equation}

As a result, combining \eqref{eq:secondfirstconclap} and \eqref{eq:secondsecondconclap}, one gets that with probability at least $1 - 2/n^{c_1 -1}$, one has
$$\|L_\tau - \ml_\tau \| \leq \sqrt{\log n}\,\, \tepst \left[\sqrt{2c_1 } + c_2  \left(2 + (c_2/\sqrt{c}) \right)\right] $$
Substituting the values of $c_2,\, c$, and noting that $c_1  > 2$ one gets the expression in the theorem.
\end{proof}

\subsection{Proof of Lemma \ref{lem:rescue}} \label{subsec:pflemmarescue}

Notice that the population regularized Laplacian $\ml_\tau$ corresponds to the population Laplacian of an ordinary stochastic block model with block probability matrix
$$\mb_\tau = \mb + v v',$$ where $v = (\sqrt{\tau/n})\mathbf{1}$. Correspondingly, we  can use the following facts of the population eigenvectors and eigenvalues given for a SBM.

Let $Z$ be the community membership matrix, that is, the  $n\times K$ matrix with entry $(i,k)$ being 1 if node $i$ belongs to cluster $C_k$.  The following is proved in \cite{rohe2011spectral} :
\begin{enumerate}
 \item Let $\mr = \md_\tau^{-1} $. Then, the non-zero eigen values of 
$\ml_\tau$ are the same as that of 
\begin{equation}
\beg = \mb_\tau (Z'\mr Z), \label{eq:tmb}
\end{equation}
or equivalently, $\tmb = (Z'\mr Z)^{1/2} \mb_\tau (Z'\mr Z)^{1/2}$.
\item Define $\mu = R^{1/2} Z(Z'\mr Z)^{-1/2}$. Let,
$$\tmb = H\Lambda H^T,$$
where the right side of the above gives the singular value decomposition of the matrix on the right.   Then the eigenvectors of $\ml_\tau$ are given by $\mu H$.
\end{enumerate} 
Further, since in the stochastic block model the expected node degrees are the same for all nodes in a particular cluster, one can write $R^{1/2} Z = Z Q$, where $Q^{-2}$ is the $K \times K$ diagonal matrix of population degrees of nodes in a particular community. Consequently, one sees that
$$\mu H = Z(Z^T Z)^{-1/2} H.$$
Lemma \ref{lem:rescue}  follows from noting that
$$\mu H (\mu H)^T = Z (Z^T Z)^{-1} Z^T$$
and the fact that $(Z^T Z)^{-1} = diag(1/\nn{1}, \ldots, 1/\nn{k})$. 	


\section{Proof of Theorem \ref{thm:kblockgenthm} } \label{sec:proofkblockgenthm}


We first prove \eqref{eq:asympfrac}. Recall that $\ddeltn$ is the limit of $\epst/\mu_{K,\tau}$, as $\tau \rightarrow \infty$. Now $\tau \epst$ converges to $20\sqrt{\maxd \log n}$. Consequently, we now show that 
\begin{equation}
\lim_{\tau \rightarrow \infty} \frac{1}{\tau \mu_{K,\tau}} \asymp \frac{\tilde{m}_{1,n} m_{1,n} - m_{2,n}}{m_{1,n}}. \label{eq:www1}
\end{equation}
Recall that $\mu_{K,\tau}$ is the $K$-th smallest eigenvalue of $\beg$ \eqref{eq:tmb}. Now,
\begin{equation*}
\mu_{K,\tau} \asymp \frac{1}{trace(\beg^{-1})}. 
\end{equation*}
The above follows from noting that $\mu_{K,\tau}$ is also equal to the inverse of the largest eigenvalue of $\beg^{-1}$, and the fact that the latter is $\asymp trace(\beg^{-1})$, as $K$ is fixed. We now proceed to show that $trace(\beg^{-1})/\tau$ converges to a quantity that is of the same order of magnitude as the right side of \eqref{eq:www1}. This will prove \eqref{eq:asympfrac}.

Recall that the block probability matrix $\mb$ is given by \eqref{eq:bdiag}.  We first consider the case that $q_n =0$, that is, there is no interaction between the clusters. Notice,
 $$\beg^{-1} = F^{-1} (\mb + v v')^{-1},$$
 where
 $$F^{-1} = diag\left( \frac{\cldegt{1} + \tau}{\nn{1}}, \ldots, \frac{\cldegt{K} + \tau}{\nn{k}}\right).$$
Here, for convenience, we remove the subscript $n$ from quantities such as $\cldeg{i}$. 
Using Sherman-Morrison formula
$$(\mb + v v')^{-1} = \mb^{-1}  - \frac{(\mb^{-1}v)(\mb^{-1}v)'}{1 + v' \mb^{-1} v}$$
One sees that, $\mb^{-1} v = \sqrt{\tau/n}(1/p_1, \ldots,1/p_K)'$.
Correspondingly, $$v'\mb^{-1} v = \frac{\tau}{n}\sum_{i} 1/p_1 = \tau \modinv,$$  
using $q_n = 0$. 
Further, the diagonal entries of the matrix $(\mb^{-1}v)(\mb^{-1}v)'$ can be written as
$$\frac{\tau}{n}diag(1/p_1^2,\ldots, 1/p_K^2).$$
We need the trace of $\beg^{-1}$. Using the above, one sees that 
\begin{equation*}
trace(\beg^{-1}) 
=\sum_k \frac{\cldegt{k} + \tau}{\cldegt{k}} - \frac{\tau\modinv + \tau^2 \vodinv}{1 + \tau \modinv}.
\end{equation*}
Since $K$ is fixed, we have,
\begin{equation}
trace(\beg^{-1}) \asymp  \tau\tmone - \frac{\tau\modinv + \tau^2 \vodinv}{1 + \tau \modinv}. \label{eq:traceasym}
\end{equation}
Thus, as $\tau \rightarrow \infty$, one gets that,
$$\frac{trace(\beg^{-1})}{\tau} \mbox{ converges to } \tmone - \mtwo/\mone. $$
The right side of the above is positive, as $\tmone\mone \geq \mtwo$, for $K > 1$. 

Now consider the $K$ block model with off-diagonal elements of $\mb$ equal to $q$. Notice that $$\mb_\tau = \mb_0 + \tilde v (\tilde v)^T,$$where $\mb_0 = diag(p_1 -q ,\ldots, p_K -q)$ 
and $\tilde v = \sqrt{\tilde \tau/n}\mathbf{1}$, where $\tilde \tau = \tau + nq$.  
Thus applying the above result for the diagonal block model one gets that if $\tau$ tends to infinity, the quantity $trace(\beg^{-1})/\tau$ converges to $\tmone - \mtwo/\mone$, where here $\cldegt{k} = \nn{k} (p_k - q)$. This proves \eqref{eq:asympfrac}.

%
%

We now prove that RSC-$\tau_n$ provides consistent cluster estimates for $\{\tau_n, \, n\geq 1\}$ satisfying \eqref{eq:tausat1}. We need to show that $\epsilon_{\tau_n,n}/\mu_{K,\tau_n}$ goes to zero. 

First, notice that $\tau_n\epsilon_{\tau_n,n}  \lesssim \sqrt{\maxd \log n}$. Consequently, from the above, we need to show that $ trace(\beg^{-1})\sqrt{\maxd \log n}/\tau_n$ is $o(1)$ if $\ddelta_n = o(1)$. From \eqref{eq:traceasym} one has
$$\frac{trace(\beg^{-1})\sqrt{\maxd \log n}}{\tau_n} \asymp  \sqrt{\maxd \log n}\left[\frac{\tmone - \modinv}{1 + \tau_n \modinv}  + (\tau_n\modinv)\frac{\tmone  - m_{2,n}/\modinv}{1 + \tau_n \modinv}\right] $$
The second term is bounded by $\ddelta_n$, which, by assumption, goes to zero.  The first term is bounded by $\sqrt{\maxd \log n}\,\tmone/(\modinv\tau_n)$. Noting that $\tmone/\modinv \lesssim \sum_{k}1/w_k$, one gets that the second terms also goes to 0, as $\tau_n$ satisfies \eqref{eq:tausat1}.

\subsection{Proof of Corollary \ref{cor:twoblock}}

 For the $K$-block SBM, let $r_K = \cldeg{K}/\cldeg{K-1}$. Notice that $r_K \asymp (p_{K-1} - q)/(p_K - q)$ using $w_k \asymp 1$.
 Use the fact that $m_{1,n} = (1/\cldeg{K})(w_K + O(r_K))$, $\tilde{m}_{1,n} = (1/\cldeg{K})(1 + O(r_K))$ and $m_{2,n} = (1/\cldeg{K}^2)(w_K + O(r_K))$, to get that
$$ \frac{\left(\tmone \mone - \mtwo\right)}{\mone}  =O(1/\cldeg{K-1}).$$
Consequently, $\ddeln = O(\sqrt{\maxd \log n}/\cldeg{K-1}) $. The proof of claim \eqref{eq:fracmistwoblck} is completed by noting that $\cldeg{K-1} \asymp n\,(p_{K-1} - q)$ and $\maxd \asymp n\, p_{1,n}$.

For the 2-block SBM we show that
\begin{equation}
\ddeln \asymp \frac{\sqrt{\maxd \log n}}{w_1 w_2\left[(p_{1,n} + p_{2,n})/2 - q_n\right]}. \label{eq:fntwoblck}
\end{equation}

 Expression \eqref{eq:fntwoblck} follows from using \eqref{eq:asympfrac} and noting that
$$\frac{\left(\tmone \mone - \mtwo\right)}{\mone} =  \frac{1}{w_2 \cldeg{1} + w_1 \cldeg{2}}$$
for the two-block model. It is seen that
 $$w_2 \cldeg{1} + w_1 \cldeg{2} = 2n\,w_1 w_2\left[ \left(p_{1,n} + p_{2,n}\right)/2 - q_n\right].$$
 Notice that $w_1 w_2 \asymp \min\{w_1,\, w_2\}$. Consequently,  \eqref{eq:fracmistwoblckp} follows from noting that when $p_{2,n} = q_n$ then $\maxd = n(w_1 p_{1,n} + w_2 q_{n})$.

\section{Proof of Results in Section \ref{sec:selhigh} } 
 \label{sec:thmhighdeg}

In this section we provide the proof Theorem \ref{thm:highdegcor1}, along with Lemmas \ref{lem:eganalkp1} and  \ref{lem:start} required in proving the theorem.  

\subsection{Proof of Theorem \ref{thm:highdegcor1}} \label{subsec:proofhigh}
%
%

 Denote $C^w$ as the set of nodes belonging to the weak clusters. We club all the nodes belonging to  the weak clusters into the cluster $C_K$ and call this combined cluster as $\tilde{C}_K$, that is $\tilde{C}_K = C_K \cup C^w$. For consistency of notation, let $\tilde{C}_k = C_k$, for $1\leq k \leq K-1$, and let $\tnn{k} = |\tilde{C}_k|$, for $k = 1,\,\ldots,\,K$.  

Denote 
  \begin{equation*}
\tilde f = \min_\pi \max_k \frac{|\tilde{C}_k \cap \hat T_{\pi(k)}^c| + |\tilde{C}_k^c \cap \hat T_{\pi(k)}|}{\tnn{k}}.
\end{equation*}
It is not hard to see that,
$$\hat f \leq \left(1 + \frac{n^w}{\nk}\right) \tilde f + \frac{n^w}{\nk}.$$
Consequently, a demonstration the $\tilde f$ goes to zero, along with the fact that $n^w = O(1)$, will show that $\hat f$ goes to zero.

 We now show that $\tilde f$ goes to zero with high probability. For a given assignment of nodes in one of the $K + \kw$ clusters we denote $L_\tau,\, \ml_\tau$ to be the sample, population regularized Laplacians respectively. 
  Further, let $\tml_\tau$ be the population regularized Laplacian of a $K+1$-block SBM constructed from clusters $C_1,\, \ldots, C_K$ and $C^w$, and block probability matrix
\begin{equation*}
 \tmbb =  \left( \begin{array}{cc}
B_s & b_{sw} \one  \\
b_{sw}\one' \one & 1 \\
\end{array} \right),
\end{equation*}
where the $K\times K$ matrix $B_s$, as in Section \ref{sec:selhigh}.

Since $\tmbb$ has rank $K+1$, the same holds also for $\tml_\tau$. We denote by $\tilde{\mu}_{k,\tau}$, for $k = 1,\ldots, n$, to be the magnitude of the eigenvalues of $\tml_\tau$ arranged in decreasing order.  Notice that $\tilde{\mu}_{k,\tau} = 0$ for $k > K+1$.  Further,  let $\vsym_\tau$ be the $n\times K$   eigenvector matrix of $\tml_\tau$.

Lemma \ref{lem:eganalkp1} shows that $\tilde{\mu}_{2,\tau} = \ldots = \tilde{\mu}_{K,\tau}$, as well as provides explicit expression for these eigenvalues. Further, the lemma also characterizes the norm of the difference of the rows of $\vsym_\tau$. In the lemma below we denote by $\dsn = \nk \pkn + (n - K\nk) q_n + n^w b_{sw}$ and $\dwn = n^w  + (n -  n^w) b_{sw}$. The quantities $\dsn$ and $\dwn$ provide the expected degrees of the nodes for an SBM drawn according to $\tmbb$.
 
 \begin{lem} \label{lem:eganalkp1} The following holds:
 \begin{enumerate}
\item   The eigenvalue $\tilde{\mu}_{1,\tau} = 1$. Further, let $\cldegt{n} = \nk (\pkn - q_n)$.  Then
\begin{align}
\tilde{\mu}_{k,\tau} &= \frac{\cldegt{n}}{\dsn + \tau} \quad \mbox{for $k = 2,\ldots, K $} \label{eq:repeigen}\\
\tilde{\mu}_{K+1,\tau} &= \frac{n^w (1 + \tau/n)}{\dwn + \tau}\,\, -\,\, \frac{n^w (b_{sw} + \tau/n)}{\dsn + \tau} \label{eq:nonrepeg}.
\end{align}
\item  The matrix $\vsym_\tau$ has $K+1$ distinct rows corresponding to the $K+1$ clusters $C_1,\ldots, C_K$ and  $C^w$.  Denote these as $\cent_{1,\tau},\ldots,\,\cent_{K,\tau}$ and $\cent_\tau^w$.\\ Then $1\leq k' \neq k \leq K$
            $$\|\cent_{k,\tau} - \cent_{k',\tau}\| = \sqrt{\frac{2}{\nk}} $$
            for $1\leq k \leq K$,
                        $$\|\cent_{k,\tau} - \cent_\tau^w\| = \sqrt{\frac{1}{\nk}}$$
 \end{enumerate}
 \end{lem}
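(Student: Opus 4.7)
The plan is to reduce to a small eigenvalue problem by invoking the structural result established in the proof of Lemma~\ref{lem:rescue} (from \cite{rohe2011spectral}): the nonzero eigenvalues of $\tml_\tau$ coincide with those of $B^\star = (Z'RZ)^{1/2}\,\tilde{B}_\tau\,(Z'RZ)^{1/2}$, where $Z$ is the $(K+1)$-column cluster-membership matrix (with all weak clusters merged into $C^w$), $R = \md_\tau^{-1}$, and $\tilde{B}_\tau = \tilde{B} + (\tau/n)\mathbf{1}\mathbf{1}'$. Moreover, the unit eigenvectors of $\tml_\tau$ take the form $Z(Z'Z)^{-1/2} h$ as $h$ ranges over unit eigenvectors of $B^\star$. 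Since every strong-cluster node has common population degree $d^s_n$ and every node of $C^w$ has degree $d^w_n$, the matrix $Z'RZ$ is diagonal with entries $R_s = n^s/(d^s_n + \tau)$ (repeated $K$ times) and $R_w = n^w/(d^w_n + \tau)$.

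The key simplification is the $S_K$ symmetry in $\tilde{B}$: permuting the first $K$ block indices preserves the problem. Hence $\mathbb{R}^{K+1}$ splits into invariant subspaces for $B^\star$: the $(K-1)$-dimensional subspace $S_1 = \{v : v_{K+1}=0,\ \sum_{k\leq K} v_k = 0\}$ and the $2$-dimensional subspace $S_2$ spanned by $e_1' = (\mathbf{1}_K,0)/\sqrt{K}$ and $e_2' = e_{K+1}$. For any $v \in S_1$, the rank-one piece $(\tau/n)\mathbf{1}\mathbf{1}'$ annihilates $v$ (zero total sum), the weak-strong coupling $b_{sw}$ contributes nothing (zero first-$K$ sum), and only the within-strong block $p^s_n I + q_n(\mathbf{1}\mathbf{1}'-I)$ acts nontrivially, giving $B^\star v = [\gamma_n/(d^s_n+\tau)]v$; this yields \eqref{eq:repeigen} with multiplicity $K-1$. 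On $S_2$ I would compute the $2\times 2$ restriction $M$ and use the fact that the normalized Laplacian $\tml_\tau$, which satisfies $\mpp_\tau\mathbf{1} = \md_\tau\mathbf{1}$, always has $1$ as its top eigenvalue with eigenvector $\md_\tau^{1/2}\mathbf{1}/\|\md_\tau^{1/2}\mathbf{1}\|$; under the $\mu H$ representation this eigenvector sits in $S_2$, so one of the two $S_2$ eigenvalues is $1$ and the other equals $\mathrm{trace}(M) - 1$. Using $Kn^s + n^w = n$ together with $d^s_n + \tau = n^s p^s_n + (K-1)n^s q_n + n^w b_{sw} + \tau$, this trace minus one simplifies to the expression in \eqref{eq:nonrepeg}.

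For the row-distance claims, the relation $R^{1/2}Z = ZQ$ (valid because within-cluster degrees are constant) gives $\mu = Z(Z'Z)^{-1/2}$, hence $\vsym_\tau = Z(Z'Z)^{-1/2} H_K$, where $H_K$ collects the top $K$ eigenvectors of $B^\star$. Rows of $\vsym_\tau$ in the same cluster are therefore identical, producing the $K+1$ centers $\cent_{1,\tau},\ldots,\cent_{K,\tau},\cent_\tau^w$. For two rows in distinct strong clusters, the formula $\vsym_\tau\vsym_\tau' = Z(Z'Z)^{-1}Z' - \mu h h'\mu'$ (with $h$ the $B^\star$-eigenvector associated to the dropped eigenvalue $\tilde\mu_{K+1,\tau}$, which lies in $S_2$ and therefore has identical first $K$ entries by symmetry) makes the $-\mu h h'\mu'$ contribution cancel in the squared distance between two strong-cluster rows, leaving $\|\cent_{k,\tau}-\cent_{k',\tau}\|^2 = 2/n^s$. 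The strong-weak distance is computed analogously, substituting the explicit form of $h$ obtained from the $2\times 2$ eigenproblem on $S_2$ together with its orthogonality to the eigenvector for eigenvalue $1$; the main obstacle is the algebraic bookkeeping needed to verify that the combination of $a^2/(Kn^s)$, $a^2/n^w$ and $2ab/\sqrt{Kn^s n^w}$ arising from the strong-weak cross terms of $\mu h h'\mu'$ collapses to the claimed value $1/n^s$.
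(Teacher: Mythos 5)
Your part-1 argument is, modulo packaging, the paper's own (Appendix \ref{sec:kblock}): the paper exhibits the degree vector $v_1$ (eigenvalue $1$), the $K-1$ eigenvectors $(v_{k1}',0)'$ with $v_{k1}\perp\one$ (your invariant subspace $S_1$), and extracts the remaining eigenvalue from the trace, which is exactly your ``trace of the $2\times 2$ restriction minus $1$''; your version is in fact tidier in that it carries the rank-one term $(\tau/n)\one\one'$ explicitly rather than suppressing it. Your derivation of $\|\cent_{k,\tau}-\cent_{k',\tau}\|=\sqrt{2/\nk}$ via $H_KH_K'=I-hh'$ with $h=(a,\ldots,a,b)'$ is also correct, and is already more than the paper provides, since its appendix disposes of part 2 in a single sentence.

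The step you defer is a genuine gap, and it does not close as stated. With $h=(a,\ldots,a,b)'$ the unit eigenvector dropped from $H_K$, one gets, using $Ka^2+b^2=1$,
\begin{equation*}
\|\cent_{k,\tau}-\cent_\tau^w\|^2=\frac{1}{\nk}-\frac{a^2}{\nk}+\frac{Ka^2}{\nw}+\frac{2ab}{\sqrt{\nk\,\nw}},
\end{equation*}
and the last three terms vanish identically only when $a=0$; but orthogonality of $h$ to the eigenvector for eigenvalue $1$ forces $Ka\sqrt{\nk(\dsn+\tau)}+b\sqrt{\nw(\dwn+\tau)}=0$, so $a=0$ would force $b=0$. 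Substituting $a=c\sqrt{\nw(\dwn+\tau)}$ and $b=-cK\sqrt{\nk(\dsn+\tau)}$ and letting $\tau\to\infty$, the residual tends to $-(K+\nw/\nk)/(Kn)\approx -1/n$, so the squared distance is approximately $(K-1)/n$ rather than the claimed $1/\nk\approx K/n$. In other words, the second display of part 2 is not an exact identity, and no amount of bookkeeping will verify it; what is true, and what the proof of Theorem \ref{thm:highdegcor1} actually uses, is only $\|\cent_{k,\tau}-\cent_\tau^w\|\lesssim 1/\sqrt{\nk}$, which does follow from your setup because $a^2\lesssim 1/n$ under $\nw=O(1)$ and \eqref{eq:tausat3}. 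You should therefore either prove that one-sided bound or restate the claim with $\asymp$ (valid for $K\geq 2$); the paper's own proof offers no guidance here, as it merely asserts that the eigenvector claim ``follows from seeing that this should be the case.''
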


 The above lemma is proved in Appendix \ref{sec:kblock}.
  Let 
 $\tvsym_\tau$ be an $n\times K$ matrix, with
 $$\tvsym_{i,\tau} = \cent_{k,\tau}\quad\mbox{for $i \in \tilde{C}_k$}.$$
 Now $\tvsym_\tau$ has $K$ distinct rows corresponding to the $K$ clusters $\tilde{C}_1,\ldots,\tilde{C}_K$. We denote these distinct rows as the population cluster centers. From Lemma \ref{lem:kumar}, if 
 $$\|\cent_{k,\tau} - \cent_{k',\tau}\| \gtrsim (1/\ddelta) \|V_{\tau} - \tvsym_{\tau}\|/\sqrt{\nk},$$
 then $\tilde f =O(\ddelta^2)$.
 Since $\|\cent_{k,\tau} - \cent_{k',\tau}\| \asymp 1/\sqrt{\nk}$ from Lemma \ref{lem:eganalkp1}, one gets that one needs to show that 
 $\|V_{\tau} - \tvsym_{\tau}\| \lesssim \ddelta$,
 with high probability, for some $\ddelta$ that goes to zero for large $n$. 
 
 Now,
 \begin{align}
 \|V_\tau - \tvsym_{\tau}\| &\leq \|V_\tau - \vsym_\tau\| + \|\vsym_\tau - \tvsym_\tau\|\nonumber\\
 &= \|V_\tau - \vsym_\tau\| +\sqrt{\frac{n^w}{\nk}} \nonumber
 \end{align}
  
 As $n^w = O(1)$, one needs to show that $\|V_\tau - \vsym_\tau\|$ goes to zero with high probability. From Davis-Kahan theorem we get that
\begin{align}
\|V_\tau - \vsym_\tau\| &\lesssim \frac{\|L_\tau - \tml_\tau \|}{\tilde{\mu}_{K,\tau} - \tilde{\mu}_{K+1,\tau}}\nonumber \\
&\lesssim \frac{\|L_\tau - \ml_\tau\| + \|\ml_\tau - \tml_\tau\| }{\tilde{\mu}_{K,\tau} - \tilde{\mu}_{K+1,\tau}}\label{eq:partdone}
\end{align}
The following lemma shows that for large $\tau$, the Laplacian matrix $\ml_{\tau}$ is close to the Laplacian matrix $\tml_{\tau}$ in spectral norm.
 \begin{lem}  \label{lem:start} 
  $$\|\ml_\tau - \tml_\tau\| \lesssim \frac{1}{1 + \tau/\dwn} $$
\end{lem}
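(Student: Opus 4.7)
The plan is to use the decomposition
\[
\ml_\tau - \tml_\tau \;=\; \tilde G E \tilde G \;+\; H \Phi G \;+\; \tilde G \Phi H,
\]
where $G = \md_\tau^{-1/2}$, $\tilde G = \tmd_\tau^{-1/2}$, $H = G - \tilde G$, $E = \mpp_\tau - \tp_\tau = \mpp - \tp$, and $\Phi = \mpp_\tau$. The identity follows by direct expansion using that $G$ and $\tilde G$ are diagonal. I will show that each of the three pieces is $\lesssim \dwn/(\dwn + \tau)$, which gives the lemma. The statement is trivial when $\tau \leq \dwn$, since then $1/(1+\tau/\dwn) \geq 1/2$ while $\|\ml_\tau - \tml_\tau\| \leq 2$, so I assume $\tau > \dwn$ throughout.

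For $\|\tilde G E \tilde G\|$ I exploit the sparsity of $E$: it vanishes on $C_s \times C_s$, its entries on the strong-weak blocks lie in $[-b_{sw},0]$, and those on $C^w \times C^w$ lie in $[-1,0]$. Since $\tmd_\tau$ is constant equal to $\dsn+\tau$ on $C_s$ and $\dwn+\tau$ on $C^w$, the weak-weak block contributes norm at most $n^w/(\dwn+\tau) \leq \dwn/(\dwn+\tau)$, while the bound $\|A\| \leq \sqrt{\|A\|_1 \|A\|_\infty}$ applied to the strong-weak blocks yields a contribution of at most $2 b_{sw}\sqrt{K\nk\, n^w}/\sqrt{(\dsn+\tau)(\dwn+\tau)}$. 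The combinatorial inequality $b_{sw}\sqrt{K\nk\, n^w} \leq \sqrt{\dwn\cdot n^w} \leq \dwn$ (using $K\nk\, b_{sw} \leq \dwn$ and $n^w \leq \dwn$), together with $\dsn \geq \dwn$ (which follows under Assumption~\eqref{eq:smallvsweak2} for large $n$), then gives the required $\dwn/(\dwn+\tau)$.

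The key trick for the other two pieces is to absorb one copy of the degree matrix into a normalized Laplacian. Writing $H \Phi G = (H G^{-1})\,\ml_\tau$ and $\tilde G \Phi H = (\tml_\tau + \tilde G E \tilde G)\,(\tilde G^{-1} H)$, and using $\|\ml_\tau\|, \|\tml_\tau\| \leq 1$ together with the bound just obtained on $\|\tilde G E \tilde G\|$, it suffices to bound the diagonal matrices $H G^{-1}$ and $\tilde G^{-1} H$. The entries of the former equal $1 - \sqrt{(\sd_i+\tau)/(\tilde d_i+\tau)}$; since one checks directly that $\sd_i \leq \tilde d_i$ with $\tilde d_i - \sd_i \leq \dwn$ uniformly (because $b_{sw}$ is the maximum of $B_{sw}$ and the weak-weak block of $\tilde B$ is all ones), the elementary inequality $1-\sqrt{r} \leq 1-r$ on $r \in [0,1]$ gives $(H G^{-1})_{ii} \leq (\tilde d_i - \sd_i)/(\tilde d_i + \tau) \leq \dwn/(\dwn+\tau)$. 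Similarly, $(\tilde G^{-1} H)_{ii} = \sqrt{(\tilde d_i+\tau)/(\sd_i+\tau)} - 1 \leq (\tilde d_i - \sd_i)/(2(\sd_i+\tau)) \leq \dwn/(2\tau)$ via $\sqrt{1+x}-1 \leq x/2$, and the running assumption $\tau \geq \dwn$ upgrades this to $\dwn/(\dwn+\tau)$. The main technical subtlety, beyond setting up the decomposition, is verifying the uniform domination $\sd_i \leq \tilde d_i$ and the inequality $\tilde d_i + \tau \geq \dwn + \tau$ for strong $i$ (which uses $\dsn \geq \dwn$), so that the elementary scalar inequalities furnish the desired scaling without further case analysis.
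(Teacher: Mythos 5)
Your proof is correct and follows essentially the same route as the paper's: both split $\ml_\tau - \tml_\tau$ into a piece driven by $P - \tp$ (bounded against $\|\tmd_\tau^{-1}\| \lesssim 1/(\dwn+\tau)$ using the structure of $P - \tp$) and pieces driven by the degree change $|\tilde{\sd}_i - \sd_i| \leq \dwn$ (bounded via elementary inequalities for $\sqrt{1+x}$). Your symmetric three-term decomposition, the block-wise $\sqrt{\|A\|_1\|A\|_\infty}$ bound, and the absorption of factors into normalized Laplacians of norm at most one are only cosmetic variants of the paper's intermediate Laplacian $\ml_\tau' = \tmd_\tau^{-1/2}\left(P + (\tau/n)J\right)\tmd_\tau^{-1/2}$, its row-sum bound $\|P - \tp\| \lesssim \dwn$, and its $\|I-F\|\left(2+\|I-F\|\right)$ estimate.
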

The lemma is proved in Appendix \ref{sec:startproof}.  Consequently, from Lemma \ref{lem:start} and Theorem \ref{thm:someassump}
one gets from \eqref{eq:partdone} that
\begin{align}
\|V_{\tau} - \vsym_{\tau}\| 
&\lesssim \frac{1}{(\tilde{\mu}_{K,\tau} - \tilde{\mu}_{K+1,\tau})} \left( \epst + \frac{1}{1 + \tau/\dwn} \right)\label{eq:eigenvecbdd}
\end{align}

 Further, from Lemma \ref{lem:eganalkp1} one gets that
$$\tilde{\mu}_{K,\tau} - \tilde{\mu}_{K+1,\tau} = \frac{\nk( \pkn - q_n)}{\dsn +\tau}   - \left[
 \frac{n^w (b_s + \tau/n)}{\dwn + \tau} - \frac{n^w (b_{sw}+ \tau/n)}{\dsn + \tau} \right]
$$
It is seen that $(\tilde{\mu}_{K,\tau} - \tilde{\mu}_{K+1,\tau}) \tau$ converges to 
$$\nk(\pkn - q_n)  - \left[n^w (b_s - b_{sw}) + (n^w/n) (\dsn - \dwn) \right],$$
which is $\gtrsim \nk(\pkn - q_n)$ using $n^w = O(1)$.  

Consequently,  the right side of \eqref{eq:eigenvecbdd} converges  to 
$$\frac{\sqrt{\dsn \log n} + \dwn}{\nk( \pkn - q_n)} $$
for large $\tau$.  Now, $\dsn \asymp n\, \pkn$ and $\dwn \asymp n\, b_{sw}$ (using $n^w = O(1)$). Consequently, the numerator in the above is $\lesssim \sqrt{n\pkn}$ using Assumption \eqref{eq:smallvsweak2}. 
Consequently, under Assumption \ref{eq:absence}, one gets that $\|V_\tau - \vsym_\tau\|$ goes to zero with high probability.

\subsection{Proof of Lemma \ref{lem:start}} \label{sec:startproof}
We bound the spectral norm of $\ml_{\tau} - \tml_{\tau}$. Here $\tml_{\tau}$ is as in Appendix \ref{subsec:proofhigh}. Take $\ml_\tau =  \md^{-1/2} \left(P + (\tau/n) J\right) \md^{-1/2}$ and $\tml_\tau  =  \tmd^{-1/2} \left(\tp + (\tau/n) J\right) \tmd^{-1/2}$. Notice that we ignore the subscript $\tau$ in both $\md$ and $\tmd$. Here, $\tp = Z\tmbb Z'$, with $\tmbb$ as in Subsection \ref{subsec:proofhigh}.

As in the proof of Theorem \ref{thm:someassump}, given in Appendix \ref{subsec:conclap}, write $$\ml_\tau' = \tmd^{-1/2} \left( P + (\tau/n) J\right) \tmd^{-1/2}.$$ Then,
\begin{equation}
\|\ml_\tau - \tml_\tau\| \leq \|\ml_\tau - \ml_\tau'\| + \|\ml_\tau' - \tml_\tau\|. \label{eq:pertpop}
\end{equation}
Consequently, we prove that $\ml_\tau$ is close to $\tml_\tau$ by showing that both terms in the right side of \eqref{eq:pertpop} are small. We first bound $\| \ml_\tau - \ml_\tau'\|$. As in \eqref{eq:implater}, write
$$\| \ml_\tau - \ml_\tau'\| \leq \| I - F\| \left(2 + \| I - F\|\right),$$
where as before $F - I = \left(I + (\md - \tmd) \tmd^{-1} \right)^{1/2} - I$. Here
 $\md = diag(\sd_{1,\tau},\, \ldots,\, \sd_{n,\tau})$, and $\tmd = diag(\tilde{\sd}_{1,\tau},\, \ldots,\, \tilde{\sd}_{n,\tau})$. Now,
\begin{align*}
\|(\md - \tmd ) \tmd^{-1}\| &\leq \frac{|\sd_{i,\tau} - \tilde{\sd}_{i,\tau}|}{\tsd_{i,\tau}} \\
&\lesssim\frac{\dwn}{\left(\dwn + \tau\right)}.
\end{align*}
Observe that we can assume that $\|(\md - \tmd) \tmd^{-1}\| \leq 3/4$ for large $\tau$, so that $$\left(1 +\| (\md - \tmd) \tmd^{-1} \|\right)^{1/2} - 1 \leq \| (\md - \tmd) \tmd^{-1} \|,$$
and thus $\| \ml_\tau- \ml_\tau'\| \lesssim \dwn/(\dwn + \tau)$.

Next, we bound $\|\ml_\tau' - \tml_\tau\|$. Notice that
$\|\ml_\tau' - \tml_\tau\| \leq  \left\|\tmd^{-1}\right\|\left\|\left( P - \tp \right)\right\|.$  The quantity $\|\tmd^{-1} \| \lesssim 1/(\dwn + \tau)$. Further, note that $\|P - \tp\| \lesssim \dwn$, since $P - \tp$ is a matrix with all entries negative and hence its spectral norm is at most the maximum of its row sums.

\subsection{Proof of Lemma \ref{lem:eganalkp1}}\label{sec:kblock}

We investigate the eigenvalues of the  $K+1$ community stochastic block model with block probability matrix
\[ \tmbb =  \left( \begin{array}{cc}
B_s & b_{sw} \one  \\
b_{sw}\one' \one' & b_w \\
\end{array} \right)\] 
In our case $b_w = 1$. Denote the corresponding population Laplacian by $\tml$.  
 Recall that from Subsection \ref{subsec:pflemmarescue} the non-zero eigenvalues of $\ml$ are the same as that of 
$$\tmb = (Z'\mr Z)^{1/2}\mb (Z'\mr Z)^{1/2}$$
Now,
$$ Z'\mr Z = diag\left(\frac{\nk}{\dsn}, \ldots , \frac{\nk}{\dsn}, \, \frac{n^w}{\dwn}   \right)$$
Consequently,
\renewcommand{\arraystretch}{2.5}
\[ \tmb =  \left( \begin{array}{cc}
\frac{\nk}{\dsn}B_s & \left(\frac{\nk n^w}{\dsn\dwn}\right)^{1/2} b_{sw} \one  \\
\left(\frac{\nk n^w}{\dsn\dwn}\right)^{1/2} b_{sw} \one' & \frac{n^w}{\dwn} b_w \\
\end{array} \right),\] 
One sees that 
$$v_1 = (\sqrt{\nk\dsn}, \,\ldots,\,\sqrt{\nk\dsn},\, \sqrt{n^w\dwn})'$$
is an eigenvector of $\tmb$ with eigenvalue 1.
Next, consider a vector $v_2 = (v_{21}',0)'$. Here $v_{21}$ is a $K\times 1$ dimensional vector that is orthogonal to the constant vector.  We claim that $v_2$ so defined is also an eigenvector of $\tmb$. To see this notice that 
\renewcommand{\arraystretch}{1.5}
 \[ \tmb\, v_2 =  \frac{\nk}{\dsn}\left( \begin{array}{c}
B_s v_{21}  \\
0  \\
\end{array} \right),\] 
Here we use the fact that $\one' v_{21} = 0$ as $v_{21}$ is orthogonal to $\one$.  Next, notice that 
$$B_s =  \left( (\pkn - q_{n})I + q_{n} \one \one'  \right)$$
Consequently, 
$$B_s v_{21} =   (\pkn - q_{n}) v_{21}$$
The above implies that $v_{2}$ is an eigenvector of $\tmb$ with eigenvalue  $\egs_1$ given by $n^s(p_n^s - q_n)/d_n^s$.

Notice that from the above construction one can get  $K-1$ orthogonal eigenvectors $v_k$, for $k = 2,\ldots, K$, such that the  $v_k$'s are also orthgonal to $v_1$. Essentially,  for $k \geq 2$, each $v_k = (v_{k1}',0)'$, where $v_{k1}'\one = 0$.  There are $K-1$ orthogonal choices of the $v_{k1}$'s.

Given that 1 and $\egs_1$ are eigenvalues of $\tmb$, with the latter having multiplicity $K-1$, the remaining eigenvalue is given by
\begin{align*}
\egs_2 &= trace(\tmb) - 1 - (K-1)\egs_1\\
    &=\frac{\nk \pkn}{\dsn} +  (K-1)\frac{\nk}{\dsn}  q_{n} + \frac{n^w b_w}{\dwn} - 1 \\
        &= \frac{n^w b_w}{\dwn} - \frac{n^w b_{sw}}{\dsn}.
\end{align*}
The claim regarding the eigenvector corresponding to $\egs_2$ follows from seeing that this should be the case since it is orthogonal to eigenvectors $v_1,\,\ldots, v_K$ defined above.

\section{Extending $DKest$ to allow for degree heterogeneity} \label{sec:extdkest}

Here, we describe  how we extend the $DKest$ by substituting the estimate $\hat \ml_\tau$ in \eqref{eq:dkeststatest} with one assuming that the data is drawn from a degree corrected stochastic block model (D-SBM). 
As mentioned before, the D-SBM is a more appropriate model for modeling network datasets with extremely heterogeneous node degrees. The edge probability matrix takes the form
$$P = \Theta Z \mb Z' \Theta,$$
 where $\Theta = diag(\theta_1,\, \ldots,\, \theta_n)$ models the heterogeneity in the degrees. 

As before, assume that $\hat C_{1,\tau},\ldots, \hat C_{K,\tau}$ be the cluster estimates obtained from running RSC-$\tau$ Algorithm. Let $\hat Z$ be the corresponding $n \times K$ cluster membership matrix. 
Denote $$\hat b_{k_1,k_2} = \sum_{i \in \hat C_{k_1, \tau}, \,\,\,  j \in \hat C_{k_2, \tau}} A_{ij}$$
and let $\hat B = (( \hat b_{k_1,k_2} ))$ be the $K\times K$ with entries $\hat b_{k_1,k_2}$.  

 As in \citet{karrer2011stochastic}, we  produce an estimate of the edge probability matrix $P$ given by
 $$\hat P = \hat \Theta \hat Z \hat B \hat Z' \hat \Theta,$$
  where $\hat \Theta = diag(\hat \theta_1,\, \ldots,\, \hat \theta_n)$, 
  with $$\hat \theta_i	 = \frac{\hat d_i}{\sum_{k'=1}^K \hat b_{k,k'}}$$
for $i \in \hat C_{k,\tau}$. Recall that $\hat d_i$ is the degree of node $i$.  It is seen that with the above definition of $\Theta$ the sum of the $i$-th row $\hat P$ is simply $\hat d_i$.  

The estimate $\hat \ml_\tau$ is taken as the population regularized Laplacian corresponding to the estimated edge probability matrix $\hat P$. In other words,
$$\hat \ml_\tau = \left(D + \tau I\right)^{-1/2} \left(\hat P + \frac{\tau}{n} \mathbf{1}\mathbf{1}'\right)\left(D + \tau I\right)^{-1/2},$$
where recall that $D$ is the diagonal matrix of degrees.

\bibliographystyle{plainnat}
\bibliography{speclust}

\begin{thebibliography}{27}
\providecommand{\natexlab}[1]{#1}
\providecommand{\url}[1]{\texttt{#1}}
\expandafter\ifx\csname urlstyle\endcsname\relax
  \providecommand{\doi}[1]{doi: #1}\else
  \providecommand{\doi}{doi: \begingroup \urlstyle{rm}\Url}\fi

\bibitem[Adamic and Glance(2005)]{adamic2005political}
Lada~A Adamic and Natalie Glance.
\newblock The political blogosphere and the 2004 us election: divided they
  blog.
\newblock In \emph{Proceedings of the 3rd international workshop on Link
  discovery}, pages 36--43. ACM, 2005.

\bibitem[Amini et~al.(2013)Amini, Chen, Bickel, and Levina]{chen2012fitting}
A.A. Amini, A.~Chen, P.J. Bickel, and E.~Levina.
\newblock Pseudo-likelihood methods for community detection in large sparse
  networks.
\newblock \emph{Ann. Statist}, 41\penalty0 (4):\penalty0 2097--2122, 2013.

\bibitem[Awasthi and Sheffet(2012)]{awasthi2012improved}
Pranjal Awasthi and Or~Sheffet.
\newblock Improved spectral-norm bounds for clustering.
\newblock In \emph{Approximation, Randomization, and Combinatorial
  Optimization. Algorithms and Techniques}, pages 37--49. Springer, 2012.

\bibitem[Belkin and Niyogi(2003)]{belkin2003laplacian}
Mikhail Belkin and Partha Niyogi.
\newblock Laplacian eigenmaps for dimensionality reduction and data
  representation.
\newblock \emph{Neural computation}, 15\penalty0 (6):\penalty0 1373--1396,
  2003.

\bibitem[Bhatia(1997)]{bhatia1997matrix}
Rajendra Bhatia.
\newblock \emph{Matrix analysis}, volume 169.
\newblock Springer, 1997.

\bibitem[Bickel and Chen(2009)]{bickel2009nonparametric}
Peter~J Bickel and Aiyou Chen.
\newblock A nonparametric view of network models and newman--girvan and other
  modularities.
\newblock \emph{Proceedings of the National Academy of Sciences}, 106\penalty0
  (50):\penalty0 21068--21073, 2009.

\bibitem[Chaudhuri et~al.()Chaudhuri, Chung, and
  Tsiatas]{chaudhuri2012spectral}
K.~Chaudhuri, F.~Chung, and A.~Tsiatas.
\newblock Spectral clustering of graphs with general degrees in the extended
  planted partition model.
\newblock \emph{Journal of Machine Learning Research}, 2012:\penalty0 1--23.

\bibitem[Chen et~al.(2012)Chen, Amini, Bickel, and Levina]{chenJSM}
A.~Chen, A.~Amini, P.~Bickel, and L.~Levina.
\newblock Fitting community models to large sparse networks.
\newblock In \emph{Joint Statistical Meetings, San Diego}, 2012.

\bibitem[Dhillon(2001)]{dhillon2001co}
Inderjit~S Dhillon.
\newblock Co-clustering documents and words using bipartite spectral graph
  partitioning.
\newblock In \emph{Proc. seventh ACM SIGKDD inter. conf. on Know. disc. and
  data mining}, pages 269--274. ACM, 2001.

\bibitem[Fishkind et~al.(2013)Fishkind, Sussman, Tang, Vogelstein, and
  Priebe]{fishkind2013consistent}
Donniell~E Fishkind, Daniel~L Sussman, Minh Tang, Joshua~T Vogelstein, and
  Carey~E Priebe.
\newblock Consistent adjacency-spectral partitioning for the stochastic block
  model when the model parameters are unknown.
\newblock \emph{SIAM Journal on Matrix Analysis and Applications}, 34\penalty0
  (1):\penalty0 23--39, 2013.

\bibitem[Hagen and Kahng(1992)]{hagen1992new}
Lars Hagen and Andrew~B Kahng.
\newblock New spectral methods for ratio cut partitioning and clustering.
\newblock \emph{IEEE Trans. Computer-Aided Design}, 11\penalty0 (9):\penalty0
  1074--1085, 1992.

\bibitem[Holland et~al.(1983)Holland, Laskey, and
  Leinhardt]{holland1983stochastic}
Paul~W Holland, Kathryn~Blackmond Laskey, and Samuel Leinhardt.
\newblock Stochastic blockmodels: First steps.
\newblock \emph{Social networks}, 5\penalty0 (2):\penalty0 109--137, 1983.

\bibitem[Joseph and Barron(2013)]{barron2010ajoseph}
A.~Joseph and A.R. Barron.
\newblock Fast sparse superposition codes have near exponential error
  probability for {$R < C$}.
\newblock \emph{IEEE. Trans. Inform. Theory, to appear}, 2013.

\bibitem[Karrer and Newman(2011)]{karrer2011stochastic}
Brian Karrer and Mark~EJ Newman.
\newblock Stochastic blockmodels and community structure in networks.
\newblock \emph{Physical Review E}, 83\penalty0 (1):\penalty0 016107, 2011.

\bibitem[Kumar and Kannan(2010)]{kumar2010clustering}
Amit Kumar and Ravindran Kannan.
\newblock Clustering with spectral norm and the k-means algorithm.
\newblock In \emph{Foundations of Computer Science (FOCS), 2010 51st Annual
  IEEE Symposium on}, pages 299--308. IEEE, 2010.

\bibitem[Kwok et~al.(2013)Kwok, Lau, Lee, Gharan, and
  Trevisan]{kwok2013improved}
Tsz~Chiu Kwok, Lap~Chi Lau, Yin~Tat Lee, Shayan~Oveis Gharan, and Luca
  Trevisan.
\newblock Improved cheeger's inequality: Analysis of spectral partitioning
  algorithms through higher order spectral gap.
\newblock \emph{arXiv preprint arXiv:1301.5584}, 2013.

\bibitem[Mackey et~al.(2012)Mackey, Jordan, Chen, Farrell, and
  Tropp]{mackey2012matrix}
L.~Mackey, M.I. Jordan, R.Y. Chen, B.~Farrell, and J.A. Tropp.
\newblock Matrix concentration inequalities via the method of exchangeable
  pairs.
\newblock \emph{arXiv preprint arXiv:1201.6002}, 2012.

\bibitem[McSherry(2001)]{mcsherry2001spectral}
Frank McSherry.
\newblock Spectral partitioning of random graphs.
\newblock In \emph{Foundations of Computer Science, 2001. Proceedings. 42nd
  IEEE Symposium on}, pages 529--537. IEEE, 2001.

\bibitem[Newman and Girvan(2004)]{newman2004finding}
Mark~EJ Newman and Michelle Girvan.
\newblock Finding and evaluating community structure in networks.
\newblock \emph{Physical review E}, 69\penalty0 (2):\penalty0 026113, 2004.

\bibitem[Ng et~al.(2002)Ng, Jordan, Weiss, et~al.]{ng2002spectral}
Andrew~Y Ng, Michael~I Jordan, Yair Weiss, et~al.
\newblock On spectral clustering: Analysis and an algorithm.
\newblock \emph{Advances in neural information processing systems}, 2:\penalty0
  849--856, 2002.

\bibitem[Oliveira(2009)]{oliveira2009concentration}
R.I. Oliveira.
\newblock Concentration of the adjacency matrix and of the laplacian in random
  graphs with independent edges.
\newblock \emph{arXiv preprint arXiv:0911.0600}, 2009.

\bibitem[Qin and Rohe(2013)]{qin2013regularized}
Tai Qin and Karl Rohe.
\newblock Regularized spectral clustering under the degree-corrected stochastic
  blockmodel.
\newblock \emph{arXiv preprint arXiv:1309.4111}, 2013.

\bibitem[Rohe et~al.(2011)Rohe, Chatterjee, and Yu]{rohe2011spectral}
K.~Rohe, S.~Chatterjee, and B.~Yu.
\newblock Spectral clustering and the high-dimensional stochastic blockmodel.
\newblock \emph{The Annals of Statistics}, 39\penalty0 (4):\penalty0
  1878--1915, 2011.

\bibitem[Shi and Malik(2000)]{shi2000normalized}
Jianbo Shi and Jitendra Malik.
\newblock Normalized cuts and image segmentation.
\newblock \emph{IEEE Trans. Pat. Analysis and Mach. Intel.}, 22\penalty0
  (8):\penalty0 888--905, 2000.

\bibitem[Sussman et~al.(2012)Sussman, Tang, Fishkind, and
  Priebe]{sussman2012consistent}
Daniel~L Sussman, Minh Tang, Donniell~E Fishkind, and Carey~E Priebe.
\newblock A consistent adjacency spectral embedding for stochastic blockmodel
  graphs.
\newblock \emph{Journal of the American Statistical Association}, 107\penalty0
  (499):\penalty0 1119--1128, 2012.

\bibitem[Von~Luxburg(2007)]{von2007tutorial}
Ulrike Von~Luxburg.
\newblock A tutorial on spectral clustering.
\newblock \emph{Statistics and computing}, 17\penalty0 (4):\penalty0 395--416,
  2007.

\bibitem[Yao(2003)]{yao2003information}
YY~Yao.
\newblock Information-theoretic measures for knowledge discovery and data
  mining.
\newblock In \emph{Entropy Measures, Maximum Entropy Principle and Emerging
  Applications}, pages 115--136. Springer, 2003.

\end{thebibliography}

\end{document}